%%%%%%%% ICML 2021 EXAMPLE LATEX SUBMISSION FILE %%%%%%%%%%%%%%%%%

\documentclass{article}

% Recommended, but optional, packages for figures and better typesetting:
\usepackage{microtype}
\usepackage{graphicx}
\usepackage{subfigure}
\usepackage{booktabs} % for professional tables

\usepackage{amsmath}
\usepackage{amssymb}
\usepackage{amsthm}
\usepackage{bm}
\usepackage{mathtools}
\usepackage{pifont}
\usepackage{subfigure}
\usepackage{todonotes}
\usepackage{stfloats}
\usepackage{adjustbox}
% hyperref makes hyperlinks in the resulting PDF.
% If your build breaks (sometimes temporarily if a hyperlink spans a page)
% please comment out the following usepackage line and replace
% \usepackage{icml2021} with \usepackage[nohyperref]{icml2021} above.
\usepackage{hyperref}

% Attempt to make hyperref and algorithmic work together better:

\newcommand*{\V}[1]{\mathbf{#1}}
\DeclarePairedDelimiter\norm{\lVert}{\rVert}
\newtheorem{theorem}{Theorem}[section]
\newtheorem{proposition}{Proposition}[section]
\DeclareMathOperator*{\argmin}{arg\,min}

% Use the following line for the initial blind version submitted for review:
% \usepackage{icml2021}

% If accepted, instead use the following line for the camera-ready submission:
\usepackage[accepted]{icml2021}

% The \icmltitle you define below is probably too long as a header.
% Therefore, a short form for the running title is supplied here:
\icmltitlerunning{Heavy-tailed denoising score matching}

\begin{document}

\twocolumn[
\icmltitle{Heavy-tailed denoising score matching}

% It is OKAY to include author information, even for blind
% submissions: the style file will automatically remove it for you
% unless you've provided the [accepted] option to the icml2021
% package.

% List of affiliations: The first argument should be a (short)
% identifier you will use later to specify author affiliations
% Academic affiliations should list Department, University, City, Region, Country
% Industry affiliations should list Company, City, Region, Country

% You can specify symbols, otherwise they are numbered in order.
% Ideally, you should not use this facility. Affiliations will be numbered
% in order of appearance and this is the preferred way.
\icmlsetsymbol{equal}{*}

\begin{icmlauthorlist}
\icmlauthor{Jacob Deasy}{cam}
\icmlauthor{Nikola Simidjievski}{cam}
\icmlauthor{Pietro Li\`{o}}{cam}
\end{icmlauthorlist}
\icmlaffiliation{cam}{Department of Computer Science and Technology, University of Cambridge, Cambridge, United Kingdom}
\icmlcorrespondingauthor{Jacob Deasy}{jd645@cam.ac.uk}

% You may provide any keywords that you
% find helpful for describing your paper; these are used to populate
% the "keywords" metadata in the PDF but will not be shown in the document
\icmlkeywords{Machine Learning, Deep Learning, Generative Model, Score Based Model, ICML}

\vskip 0.3in
]

% this must go after the closing bracket ] following \twocolumn[ ...

% This command actually creates the footnote in the first column
% listing the affiliations and the copyright notice.
% The command takes one argument, which is text to display at the start of the footnote.
% The \icmlEqualContribution command is standard text for equal contribution.
% Remove it (just {}) if you do not need this facility.

\printAffiliationsAndNotice{}  % leave blank if no need to mention equal contribution
% \printAffiliationsAndNotice{\icmlEqualContribution} % otherwise use the standard text.

\begin{abstract}
Score-based model research in the last few years has produced state of the art generative models by employing Gaussian denoising score-matching (DSM). However, the Gaussian noise assumption has several high-dimensional limitations, motivating a more concrete route toward even higher dimension PDF estimation in future. We outline this limitation, before extending the theory to a broader family of noising distributions---namely, the generalised normal distribution. To theoretically ground this, we relax a key assumption in (denoising) score matching theory, demonstrating that distributions which are differentiable \textit{almost everywhere} permit the same objective simplification as Gaussians. For noise vector norm distributions, we demonstrate favourable concentration of measure in the high-dimensional spaces prevalent in deep learning. In the process, we uncover a skewed noise vector norm distribution and develop an iterative noise scaling algorithm to consistently initialise the multiple levels of noise in annealed Langevin dynamics (LD). On the practical side, our use of heavy-tailed DSM leads to improved score estimation, controllable sampling convergence, and more balanced unconditional generative performance for imbalanced datasets.
\end{abstract}

% In the unusual situation where you want a paper to appear in the
% references without citing it in the main text, use \nocite
% \nocite{langley00}

\section{Introduction}\label{sec:Introduction}

Given a probability distribution $p(\V{x})$, $\V{x}\in\mathbb{R}^{n}$, the \textit{score function} is defined as
\begin{align}\label{eq:score}
    s(\V{x}) = \nabla_{\V{x}}\log p(\V{x}),
\end{align}
the gradient of the log-density with respect to the input $\V{x}$. The score is a vector field of the gradient at $\V{x}$, and gives the direction of the maximum increase in log-density.
% \footnote{Although prior convention dictated that the score function referred to the gradient of the log-density with respect to a location parameter \cite{schervish2012theory}, this thesis proceeds in-line with \cite{hyvarinen2005estimation}.}

\textit{Score based models} (SBMs) are parameterised and trained to estimate $\nabla_{\V{x}}\log p(\V{x})$. Unlike likelihood-based models, such as normalising flows \cite{rezende2015variational,kobyzev2020normalizing} or autoregressive models \cite{papamakarios2017masked}, this approach has the advantage of modelling an unconstrained function that does not need to be normalised.

By starting with the energy based model formulation
\begin{align}
    p_{\theta}(\V{x}) = e^{-f_{\theta}(\V{x})}/Z_{\theta},
\end{align}
for parameters $\theta\in\mathbb{R}^{m}$, with $m\gg1$ for deep learning models, it is clear that
\begin{equation}
  s_\theta (\V{x}) = \nabla_{\V{x}} \log p_\theta (\V{x} ) = -\nabla_{\V{x}}  f_\theta (\V{x}) - \underbrace{\nabla_\V{x} \log Z_\theta}_{=0} = -\nabla_\V{x} f_\theta(\V{x}),
\end{equation}
naturally removes the oft intractable \textit{partition function} $Z_{\theta}$.

%\subsection{Score matching}\label{sec:Score matching}
The goal of SBMs is to fit $s_{\theta}(\V{x}):=\nabla_{\V{x}}\log p_{\theta}(\V{x})$ to $\nabla_{\V{x}}\log p_{\V{x}}(x)$, but of course $\nabla_{\V{x}}\log p_{\V{x}}(\V{x})$ is not available in the first place. As such, it is necessary to assess whether any given minimisation can avoid the tautologous use of $p(\V{x})$.

A simple first attempt to minimise the Euclidean distance, known as the Fisher divergence, across the space gives the explicit score matching (ESM) objective
\begin{align}
     \mathcal{J}_{ESMp}(\theta)= \frac{1}{2}\mathbb{E}_{p(\V{x})}\left[||\nabla_{\V{x}}\log p(\V{x}) - s_{\theta}(\V{x})||_{2}^{2}\right].\label{eq:esm}
\end{align}
Despite the continuing presence of $p(\V{x})$, a useful result is that, following an integration by parts, $\mathcal{J}_{ESMp}$ (ignoring a constant shift) simplifies to implicit score matching (ISM)
\begin{align}
    \mathcal{J}_{ISMp}(\theta) = \mathbb{E}_{p(\V{x})}\left[\frac{1}{2}\lVert s_{\theta}(\V{x})\rVert_{2}^{2} + \textrm{tr}\left(\nabla_{\V{x}}s_{\theta}(\V{x})\right)\right],\label{eq:ism}
\end{align}
where \textit{the density function of the observed data does not appear} \cite{hyvarinen2005estimation}. This integration is subject to a few weak constraints which are detailed in Section~\ref{sec:Proofs} as they motivate a theorem in Section~\ref{sec:High dimensional noising}.

In practice, discretising the expectation, $\mathcal{J}_{ISMp}$ is then approximated by
\begin{align}
    \mathcal{J}_{ISMp_{0}}(\theta) = \frac{1}{N}\sum\limits_{i=1}^{N}\left[\frac{1}{2}\norm[\bigg]{ s_{\theta}\left(\V{x}^{(i)}\right)}_{2}^{2} + \textrm{tr}\left(\nabla_{\V{x}}s_{\theta}\left(\V{x}^{(i)}\right)\right)\right],\label{eq:ism0}
\end{align}
for $N$ data samples, an intuitive objective where:
\begin{itemize}
    \item Term one minimises the scale of the score to zero, inducing the presence of a local minimum or maximum.
    \item Term two, the trace of the Jacobian of the score, being minimised then clearly indicates an objective forcing local maxima at each data point.
\end{itemize}
% Overall, $\mathcal{J}_{ISMp_{0}}$ is equivalent to the statement ``Choose $\theta$ such that every data point in the training set becomes a local maximum of the estimated density''.

%\subsection{Denoising score matching}\label{sec:Denoising score matching}

The trace of the Jacobian in \eqref{eq:ism} and \eqref{eq:ism0} requires $\mathcal{O}(n)$ backpropagations to calculate and is therefore computationally expensive enough to render this objective impractical. As an example of suggested optimisations, \citet{song2020sliced} proposed \textit{sliced score matching} (SSM) which projects the vectors onto random directions (far fewer than $n$ times) and takes the expectation of the objective over these directions. However, SSM has been superseded by a new form of \textit{denoising score matching} (DSM), originally from \cite{vincent2011connection}, which avoids the Jacobian altogether.
% The sliced Fisher divergence is then approximated by
% \begin{align}
%     \frac{1}{2}\mathbb{E}_{p_{\nu}}\mathbb{E}_{p_{\textrm{data}}}\left[\left(\nu^{T}\nabla_{\V{x}}\log p_{\theta}(\V{x}) + \nu^{T}\nabla_{\V{x}}\log p_{\textrm{data}}(\V{x})\right)^{2}\right],\label{eq:ssm}
% \end{align}
% where $\nu$ is a random direction based on a given distribution $p_{\nu}$.

The first step of DSM is to perturb the data $\V{x}$ with a known noise distribution $q_{\sigma}(\Tilde{\V{x}}|\V{x})$ (normally convolution with a diagonal multivariate Gaussian kernel)
\begin{align}
    q_{\sigma}(\V{x})=\int_{\V{X}}q_{\sigma}(\Tilde{\V{x}}|\V{x})p_{\textrm{data}}(\V{x})\textrm{d}\V{x}.
\end{align}
The key step in \cite{vincent2011connection}, relying on the same assumptions in a similar integration by parts to that of \cite{hyvarinen2005estimation}, was to prove that \eqref{eq:ism} is equivalent to DSM
\begin{align}
    \mathcal{J}_{DSMq_{\sigma}}(\theta) = \frac{1}{2}\mathbb{E}_{q_{\sigma}(\Tilde{\V{x}}|\V{x})p(\V{x})}\left[\lVert s_{\theta}(\Tilde{\V{x}}) - \nabla_{\Tilde{\V{x}}}\log q_{\sigma}(\Tilde{\V{x}}|\V{x})\rVert_{2}^{2}\right],\label{eq:dsm}
\end{align}
with $s_{\theta^{*}}(\V{x})=\nabla_{\V{x}}\log q_{\sigma}(\V{x})$ almost surely, and $\nabla_{\V{x}}\log q_{\sigma}(\V{x})\approx\nabla_{\V{x}}\log p_{\textrm{data}}(\V{x})$ when the noise is low enough for $q_{\sigma}(\V{x})\approx p_{\textrm{data}}(\V{x})$. Crucially, perturbation of the distribution in \eqref{eq:dsm} is computationally trivial and only a single backpropagation is required.

% Taking stock, a lot has been achieved toward making likelihood estimation more tractable. For the entirely generalisable formulation of the likelihood as an EBM: the partition function has been discarded, the model likelihood has been integrated out, and a tractable equivalent of an intuitive objective was obtained. Nevertheless, important questions remain about how
A well-approximated score can then be used for sampling via Langevin Monte Carlo (LMC, \cite{besag1994comments}, summarised in Algorithm~\ref{alg:Langevin dynamics}). The major assumptions of note are the standard choice of Gaussian noise in LD, which can be replaced with heavier tailed noise sources \cite{csimcsekli2017fractional}, as well as the first-order nature of the iteration. In the last few years, the convergence rates of higher-order schemes of LD have been formalised \cite{cheng2018sharp,mou2019high} and their integration with score matching looks to be an interesting avenue of research beyond the scope of this work.

To further improve SBMs, \citet{song2019generative} suggested \textit{annealed Langevin dynamics} (ALD, see Algorithm~\ref{alg:Anneal langevin dynamics}). Follow-up work then made five training technique suggestions which allow improved scale and generation quality \cite{song2020improved}. The success of ALD has been such that recent SBMs are now on par, if not better (with heavy compute), than best-in-class GANs and autoregressive models \cite{song2020score,vahdat2021score}.

Due to the success of these improvements, research in this area has proliferated over the past two years. As a full review is beyond the scope of this work and yet to appear in the literature, a brief summary is included here. Critiques and expansions of both discrete and continuous (see Section~\ref{sec:Continuous extension to stochastic differential equations}) DSM have been presented in \cite{huang2021variational,kim2021score,song2021maximum}. DSM for discrete data was formally defined in \cite{hoogeboom2021argmax}, and techniques for sampling with score (and higher order \cite{meng2021estimating}) estimates have experienced a renaissance \cite{jolicoeur2021gotta}. Additionally, the connection between SBMs and denoising diffusion probabilistic models (DDPMs) has been clarified \cite{ho2020denoising,song2020score}. Finally, closely related to our work, is the first use of non-Gaussian noise in DDPMs in \citet{nachmani2021non}, to assess the effects of noise with more degrees of freedom.

This paper\footnote{Code is available at \href{https://github.com/jacobdeasy/heavy-tail-dsm}{\color{blue}github.com/jacobdeasy/heavy-tail-dsm}.} builds on this with the following \emph{contributions}:
\begin{itemize}
    \item Insight into the undesirable $n$-dimensional annuli in Gaussian DSM and novel theoretical expansion of DSM to heavy-tailed DSM.
    \item Introduction of the generalised normal noise family to DSM for controllable diffusion strength.
    \item Image generation results across a range of datasets and metrics which are both competitive with standard DSM and reduce class imbalance.
    \item An initial description of how L\'{e}vy-flight-like sampling paths can be used by continuous SBMs.
\end{itemize}

\begin{figure}[h]
    {\centering
    \subfigure[$p(\V{x})$]{\label{fig:dsm_1}\includegraphics[width=0.5\linewidth]{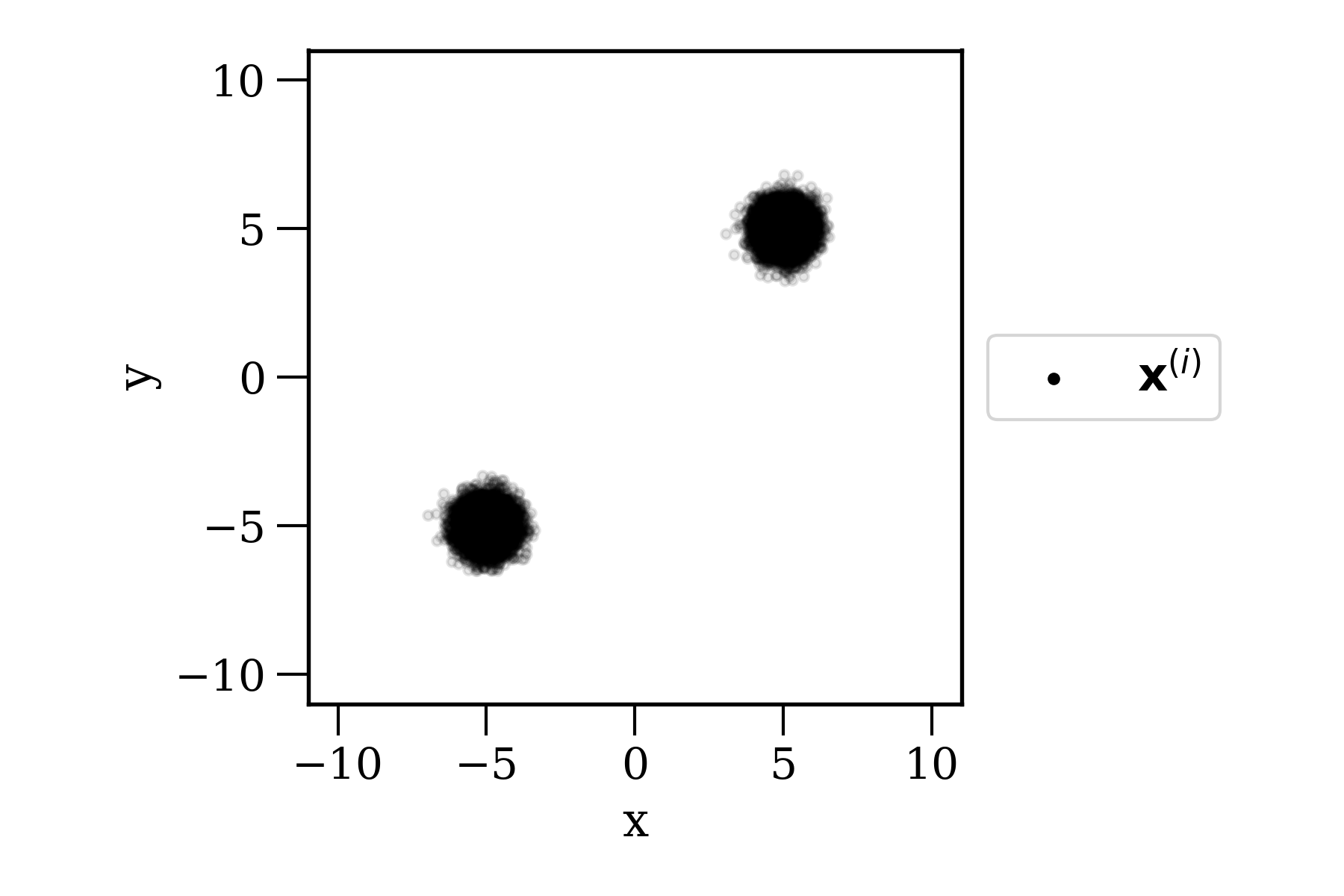}}%
    % \subfigure[]{\label{fig:dsm_2}\includegraphics[width=0.45\linewidth]{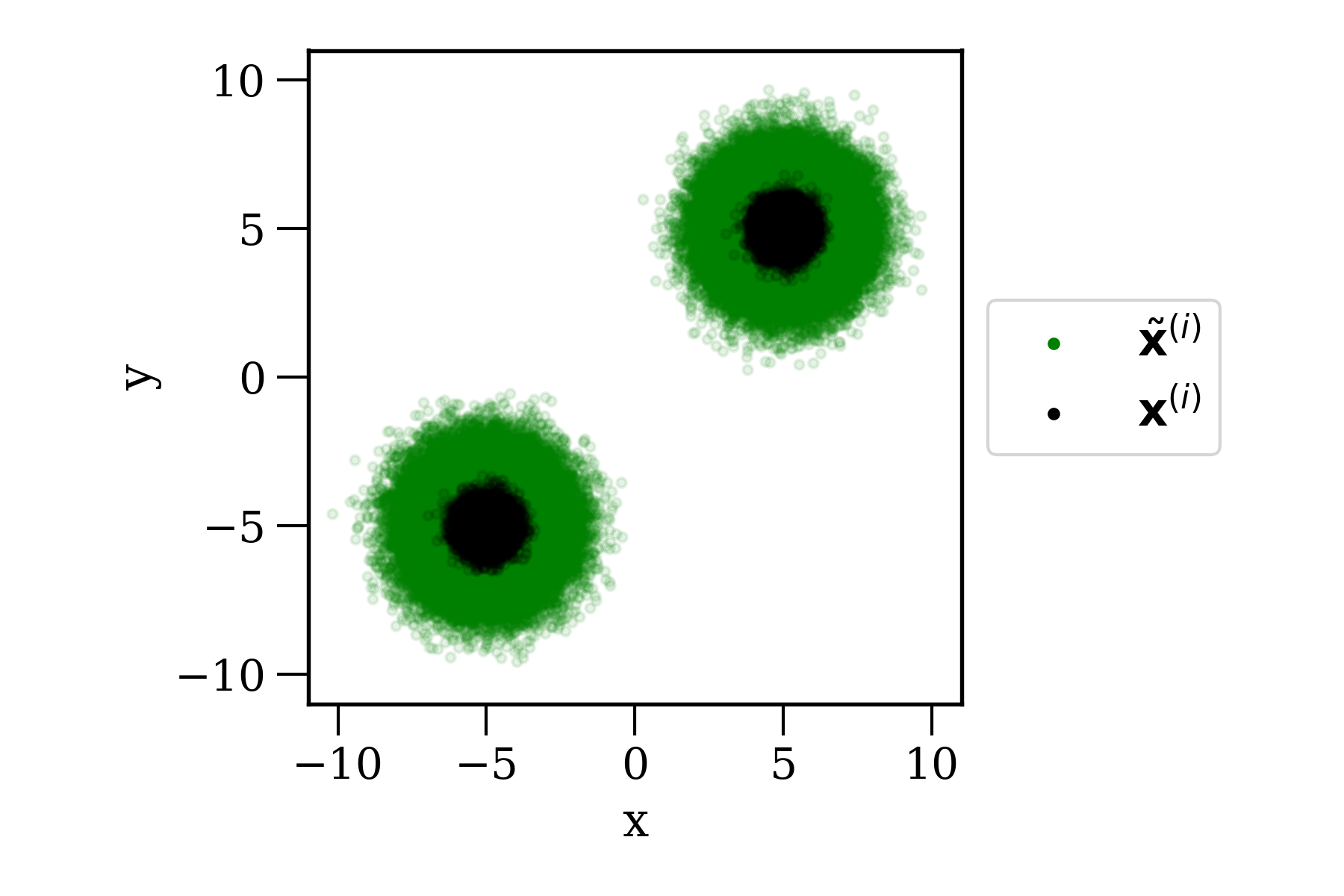}}
    % \subfigure[]{\label{fig:dsm_3}\includegraphics[width=0.45\linewidth]{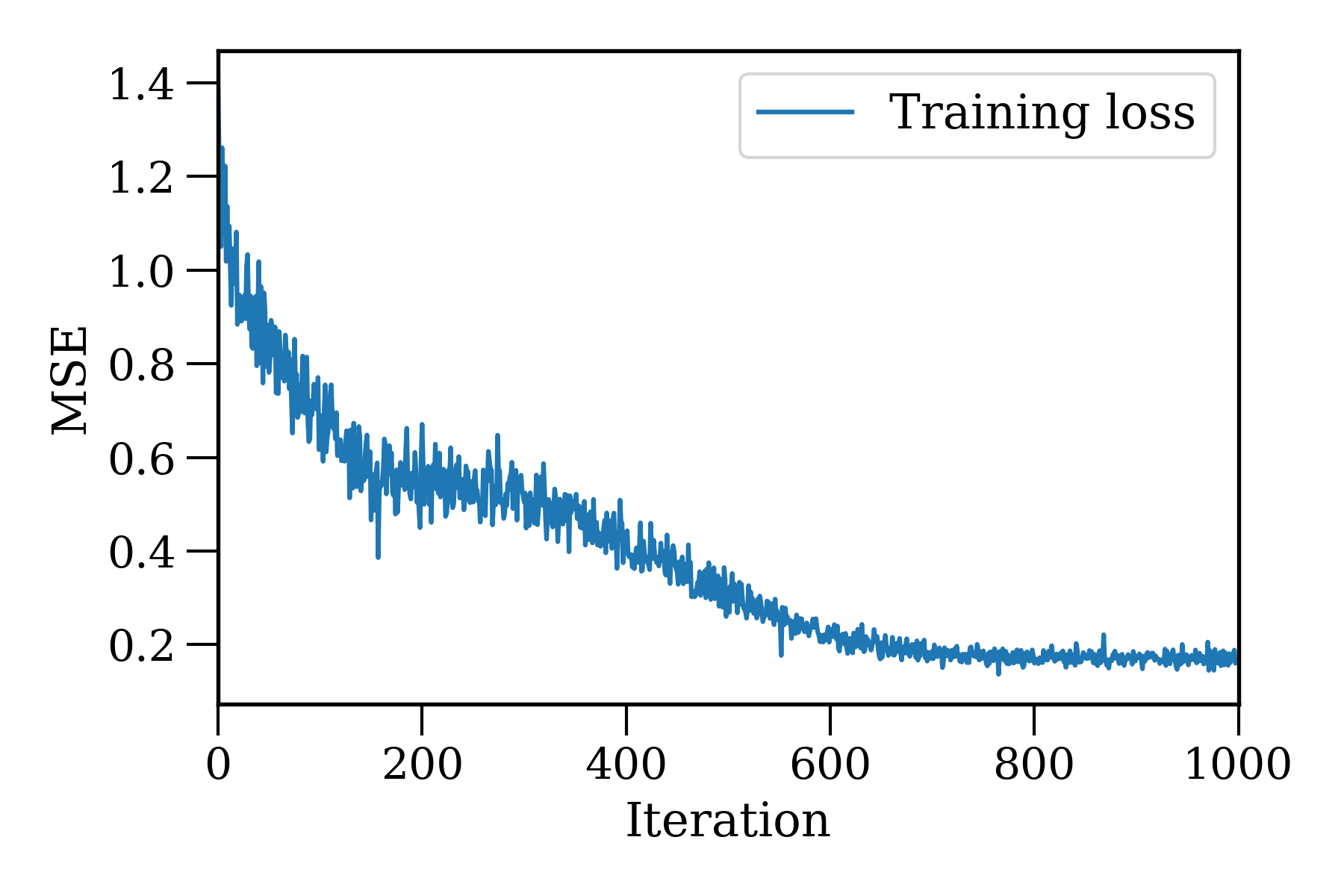}} \hfill%
    % \subfigure[]{\label{fig:dsm_4}\includegraphics[width=0.45\linewidth]{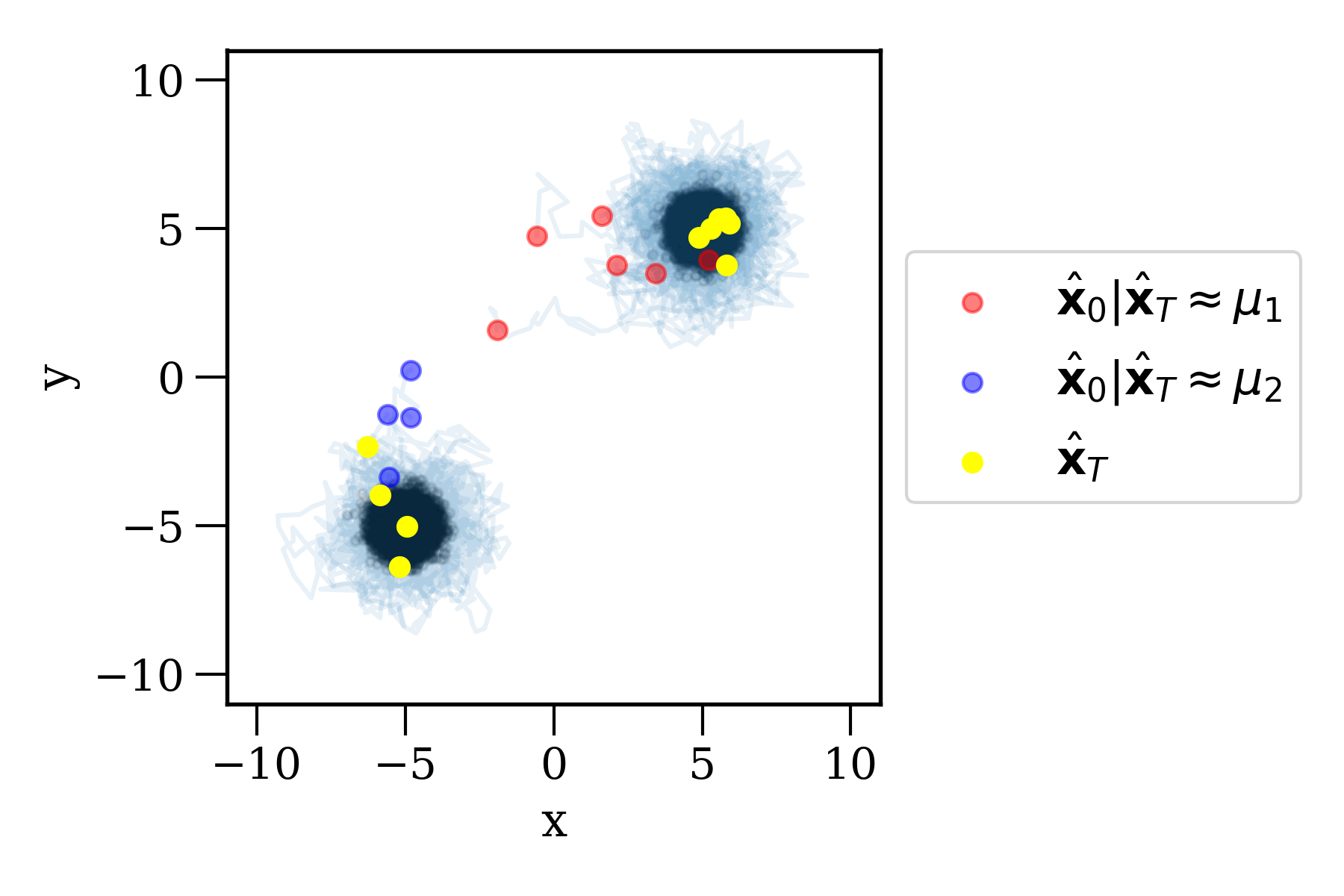}}
    % \subfigure[]{\label{fig:dsm_5}\includegraphics[width=0.45\linewidth]{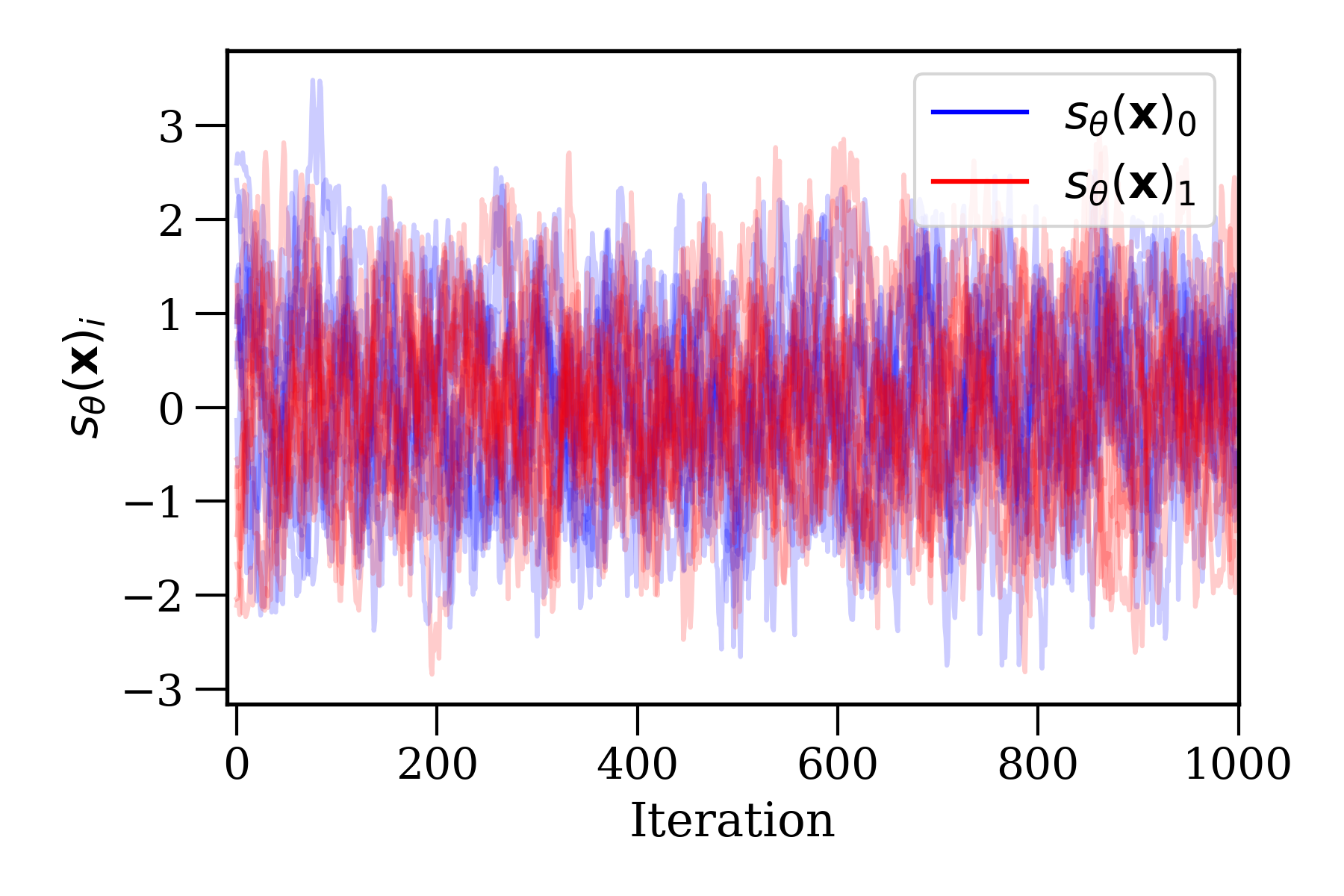}} \hfill%
    \subfigure[Sample paths.]{\label{fig:dsm_6}\includegraphics[width=0.5\linewidth]{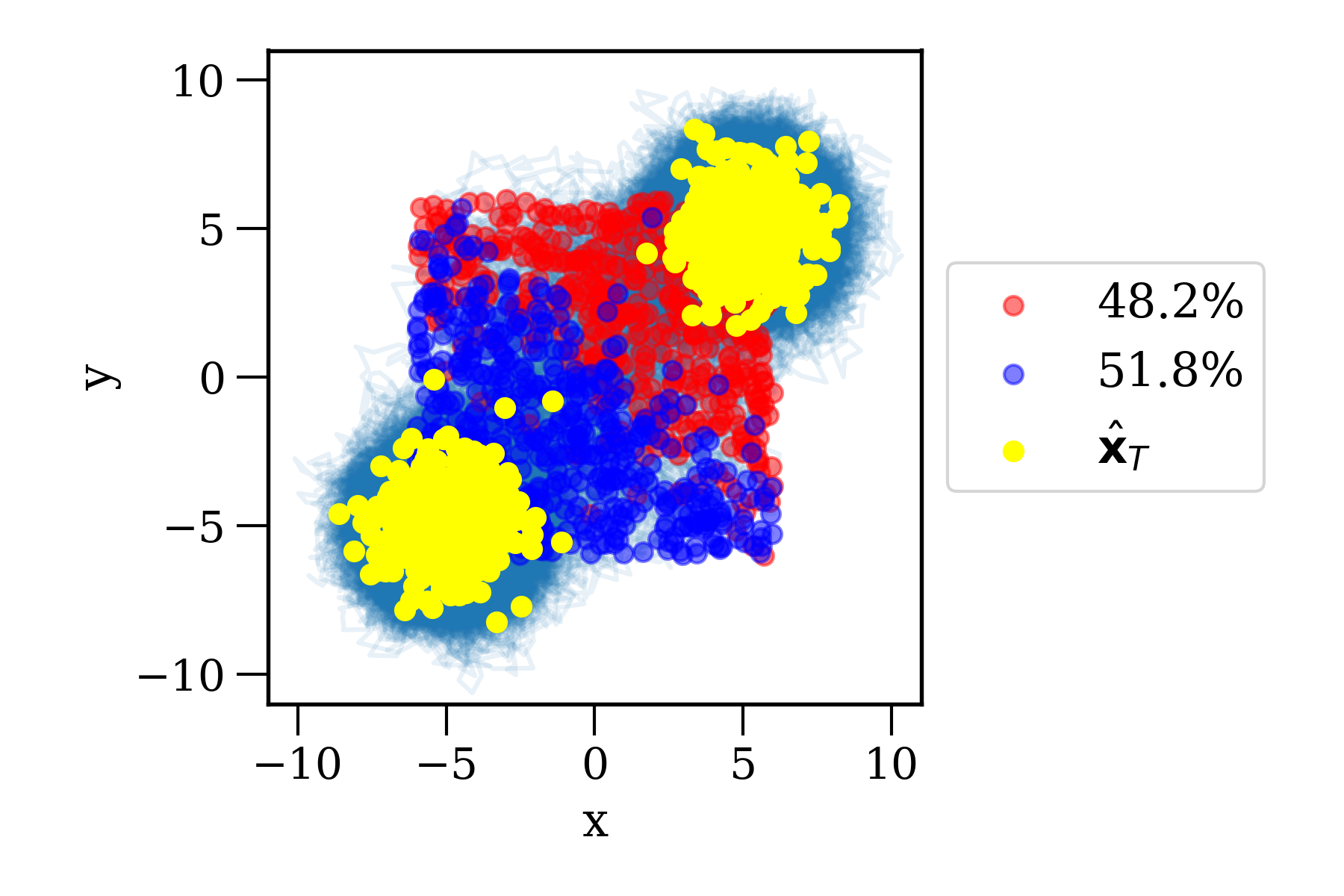}}}
    \caption{DSM training and LD sampling. In \textbf{a}, $p(\V{x})$ is modelled as an additive mixture of ($k=2$) bivariate Gaussians with 20,000 samples. An MLP is trained to estimate the score from samples noised by $q_{\sigma}(\Tilde{\V{x}}|\V{x})\sim\mathcal{N}(\V{x},\V{I})$. 1,000 sampled paths are evolved in \textbf{b} to demonstrate the decision boundary, its asymmetry (relevant for class imbalance), and the upper bound on approximation accuracy due to the underlying unit noise. Full details in Figure~\ref{fig:dsm example 2}.}
    \label{fig:dsm example 1}
\end{figure}

\begin{figure}[h]
    {\centering
    \subfigure[]{\label{fig:dsm_ald_1}\includegraphics[width=0.5\linewidth]{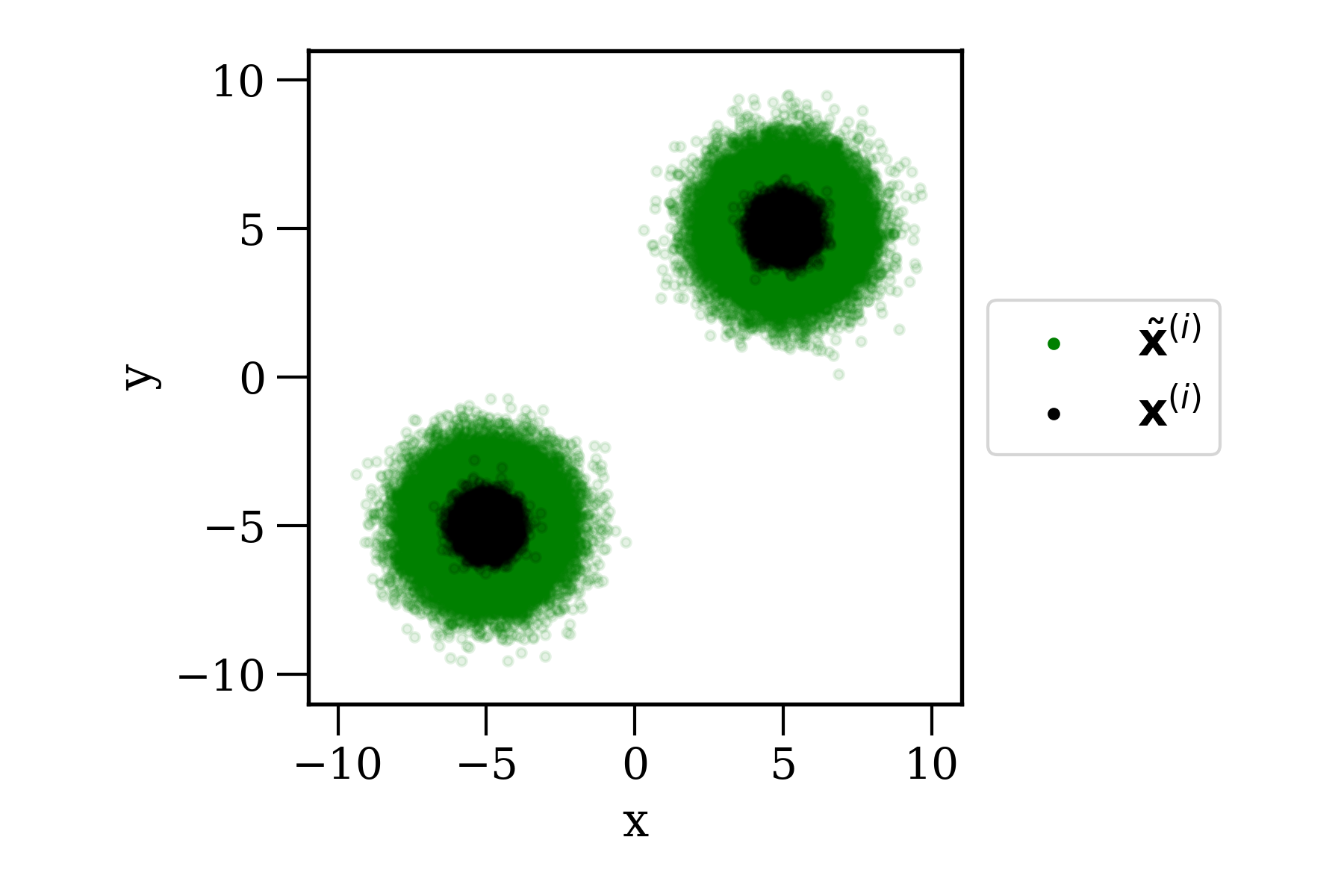}}%
    % \subfigure[]{\label{fig:dsm_ald_2}\includegraphics[width=0.45\linewidth]{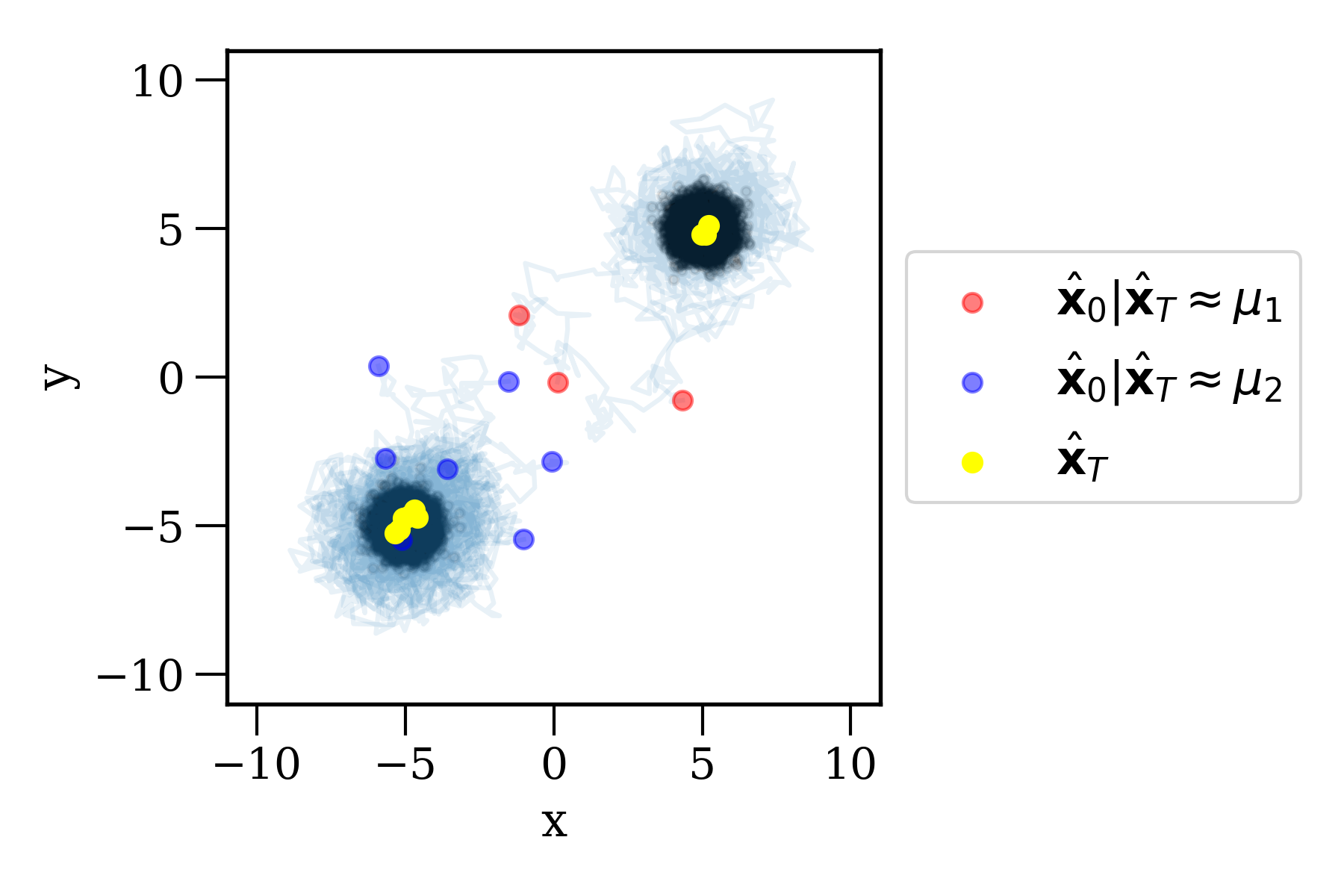}}
    % \subfigure[]{\label{fig:dsm_ald_3}\includegraphics[width=0.45\linewidth]{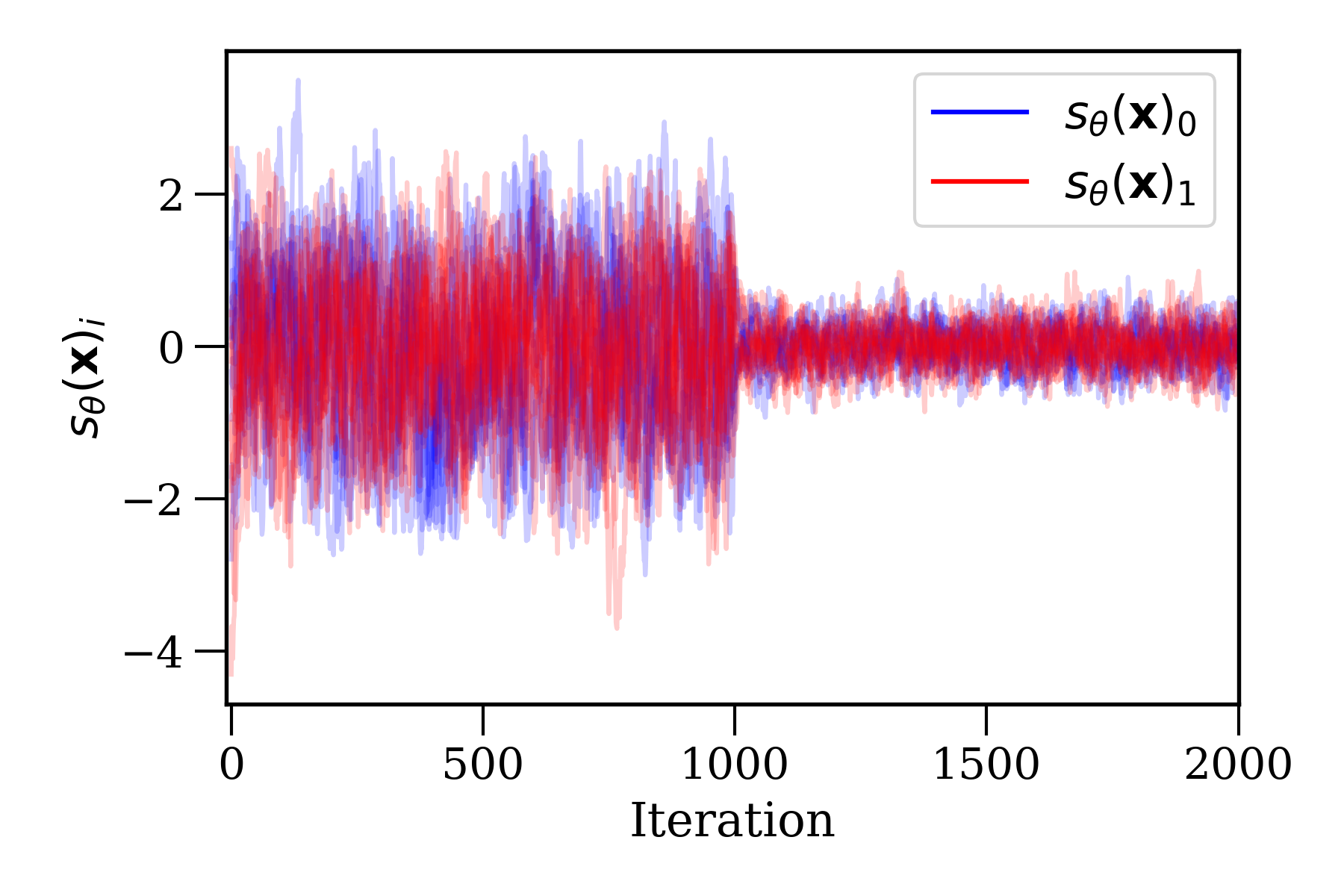}} \hfill%
    \subfigure[]{\label{fig:dsm_ald_4}\includegraphics[width=0.5\linewidth]{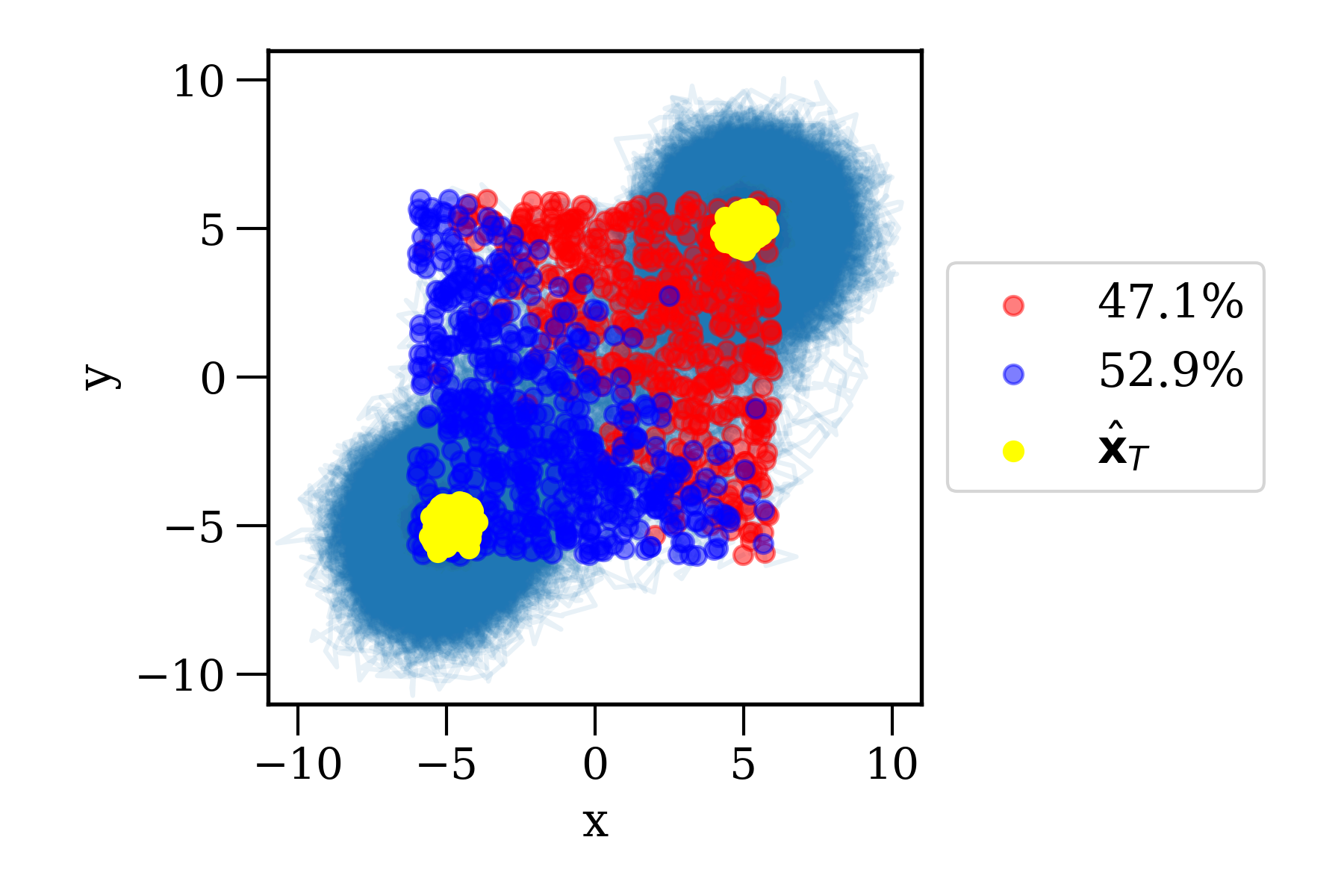}}}
    \caption{Multiple noise level DSM training and ALD sampling. The setup and figures are identical to Figure~\ref{fig:dsm example 1} except that two noise scales, $\sigma_{1}=1.0$ and $\sigma_{2}=0.25$, are used. Full details in Figure~\ref{fig:dsm ald example 2}.}
    \label{fig:dsm ald example 1}
\end{figure}

\section{High dimensional noising}\label{sec:High dimensional noising}

The previous section discussed the background of denoising score matching and finished with various strategies to scale this process to higher dimensions and better sample quality. This section will consider the weaknesses of those scaling strategies, with a particular focus on generalising high dimensional noise perturbations.

% \subsection{The initial noise scale is ambiguous}
% \todo[inline]{The justification of the initial noise scale lacks sufficient intuition, while highly influencing the number intermediate levels of denoising necessary.}

\subsection{Beyond Gaussian noise}

To elucidate how Gaussian DSM works in practice, in Figure~\ref{fig:dsm example 1}, a $2D$ example is provided, demonstrating the method converging and generating samples from a mixture of Gaussians. The example considers all steps of the procedure: noising, training, and sampling. Then, in Figure~\ref{fig:dsm ald example 1}, the example is extended to the multiple noise levels in DSM with ALD. Although the improvement between the two is evident, this synthetic example will be used to highlight the weaknesses of \textit{Gaussian} DSM with ALD in Section~\ref{sec:low dimension dsm results}.

In \citet{vincent2011connection}, the choice of Gaussian noise is for convenience and has the bonus of an intuitive score
\begin{align}
    \nabla_{\Tilde{\V{x}}}\log q_{\sigma}(\Tilde{\V{x}}|\V{x})=\Sigma^{-1}(\V{x}-\Tilde{\V{x}})\label{eq:gaussian_score},
\end{align}
which corresponds to moving from noisy $\Tilde{\V{x}}$ to clean $\V{x}$. By considering the actual constraint on the noise distribution, that $\log q_{\sigma}(\Tilde{\V{x}}|\V{x})$ is differentiable \cite{vincent2011connection}, this subsection will explore the consequences of the pivotal Gaussian noise assumption in $\mathcal{J}_{DSMq_{\sigma}}$. The differentiability condition encompasses a broad range of potential distributions and gives rise to the questions: \textit{What form could and should $q_{\sigma}$ take? How does the choice of $q_{\sigma}$ influence model learning?}

\paragraph{Gaussian noise in high dimensions.}\label{sec:Gaussian noise in high dimensions}

Now that intuition about the role of the noising process in DSM has been established, it is necessary to consider the effects of noising in higher dimensions.

For instance, consider how the \textit{squared} $L^{2}$ \textit{norm distribution} of the isotropic Gaussian vector
\begin{align}
    Y = \lVert\mathbf{X}\rVert_{2}^{2},
\end{align}
follows either of the chi-squared distributions
\begin{align}
    Y\sim \chi^{2}(n) = n\chi^{2}(1),
\end{align}
which approaches a Gaussian distribution centred at $n$ in the limit $n\to\infty$. See Section~\ref{sec:Unintuitive high-dimensional statistics} for the standard derivation, Section~\ref{sec:Comparison of concentration moments} for a full description of the moments, and Figure~\ref{fig:chi-squared} for a visualisation of the chi-squared distribution for increasing degrees of freedom.

\paragraph{The problem in high-dimensional SBMs.}
The conditional DSM Gaussian noise distribution $q_{\sigma}$ smooths around each data point $\V{x}$. In the ideal scenario, the surrounding Gaussian $n$-spheres would overlap slightly, filling the high-dimensional convex hull defined by the dataset. As such, throughout the hull, the SBM would learn to faithfully interpolate the space, estimating gradients accurately so that they can be used in an iterative generation procedure. However, using standard results, it is clear that in the high-dimensional setting of deep learning, these $n$-spheres in fact approach $n$-annuli---very thin shells.

This consideration immediately offers a new interpretation of why multiple levels of noise in ALD were a major improvement over prior methods. Although the original motivation in \cite{song2019generative} was to enable LD noise annealing, similar to annealed importance sampling \cite{neal2001annealed}, this step also stacked concentric noise annuli. Therefore, SBMs with ALD learn gradients that apply to a larger volume of the dataset interior. Moreover, for the original ALD paper, this perspective confounds the performance improvement due to annealing the Langevin dynamics with `filling the convex hull'. Such an insight motivates decoupling of, and clarification around, the effect of both approaches.

Despite the follow-up improvements to ALD \cite{song2020improved}, recognising this concentration of noise and increasing the number of noise levels, the authors' motivation was to correctly balance coverage across regions of different weight. This interpretation can be taken further and permits several opportunities:
\begin{itemize}
    \item Even with multiple levels of noise, how do these models fair when generating sparse distributions---what is the performance-sparsity trade-off? As highlighted in Figure~\ref{fig:heavy_tailed_distributions}, the Gaussian distribution has relatively light tails compared to several reasonably well-behaved distributions that have been studied in-depth. Heavier tails should facilitate sampling further across sparse domains and aid score interpolation.
    \item At the time of writing, noise level selection for the best discrete ALD model is sampled linearly in log space between two hyperparameters for the minimum and maximum noise level. In the continuous case, recent models have tried to learn this distribution \cite{kingma2021variational}, but the resulting approximation has not been theoretically explained. Clearly, refinement of the discrete case, potentially leading to an explanation in the continuous case, is a motivating theoretical goal. Moreover, the relationships between data dimension, DSM noise distribution, and DSM noise-vector norm distribution have not been explored. In particular, Section~\ref{sec:Choosing a scale parameter sequence for arbitrary noise} addresses skewed norm distributions and general noise with a quantile matching algorithm. 
    % \item Finally, the variance of the squared length distribution in \cite{song2020improved} is incorrect. Unfortunately, this means that the subsequent derivations (Proposition 2 and 3, corresponding to technique 3 and 4, in \cite{song2020improved}) are also invalid. The correct variance is detailed in \eqref{eq:correct chi2 var} alongside the consequent differences for SBMs due to its new form.
\end{itemize}

As a result, it is tempting to turn toward common heavy tailed distributions, such as those in Figure~\ref{fig:heavy_tailed_distributions}. Unfortunately, as a first port of call, the Cauchy distribution is notoriously difficult to manipulate, evidenced by its undefined moments, and does not permit the same concentration analysis as in Section~\ref{sec:Gaussian noise in high dimensions} \cite{eicker1985sums}. The same issue, the \textit{summation} of the squared RV rather than the squaring itself, arises for the Student-$t$ distribution because the square of a $t$-distribution is an $F$-distribution \cite{box2011bayesian} which has an undefined MGF.

Nevertheless, the square of a Laplace random variable follows a Weibull distribution and, as the Weibull distribution is linear in its first parameter, the squared norm distribution is a Weibull distribution also. Notwithstanding the potential of this result, it is possible to go further by considering a much broader family of distributions that both subsumes the Gaussian and Laplace distributions and is similarly equipped with a tractable PDF.

\paragraph{Generalising to the generalised normal (exponential power) distribution.}
The generalised normal (GN) distribution \cite{nadarajah2005generalized}, $X_{i}\sim\mathcal{GN}(\mu,\alpha,\beta)$ with $\mu\in\mathbb{R}$ and $\alpha,\beta\in\mathbb{R}_{+}$, has PDF
\begin{align}
    f_{X}(x;\mu,\alpha,\beta) = \frac{\beta}{2\alpha\Gamma(1/\beta)}\exp\left\{-\left(\frac{|x-\mu|}{\alpha}\right)^{\beta}\right\},
\end{align}
which recovers the standard Gaussian distribution for $(\alpha,\beta)=\left(\sqrt{2},2\right)$, the standard Laplace distribution for $(\alpha,\beta)=(1,1)$, and the uniform density for $\beta=0$. The GN is used when the concentration of values around the mean and the tail behaviour are of particular interest \cite{box2011bayesian}, apt for this case.
\begin{figure}[t]
    \centering
    % \subfigure[PDF.]{\includegraphics[width=0.5\linewidth]{}}%
    \subfigure[Log PDF.]{\includegraphics[width=0.5\linewidth]{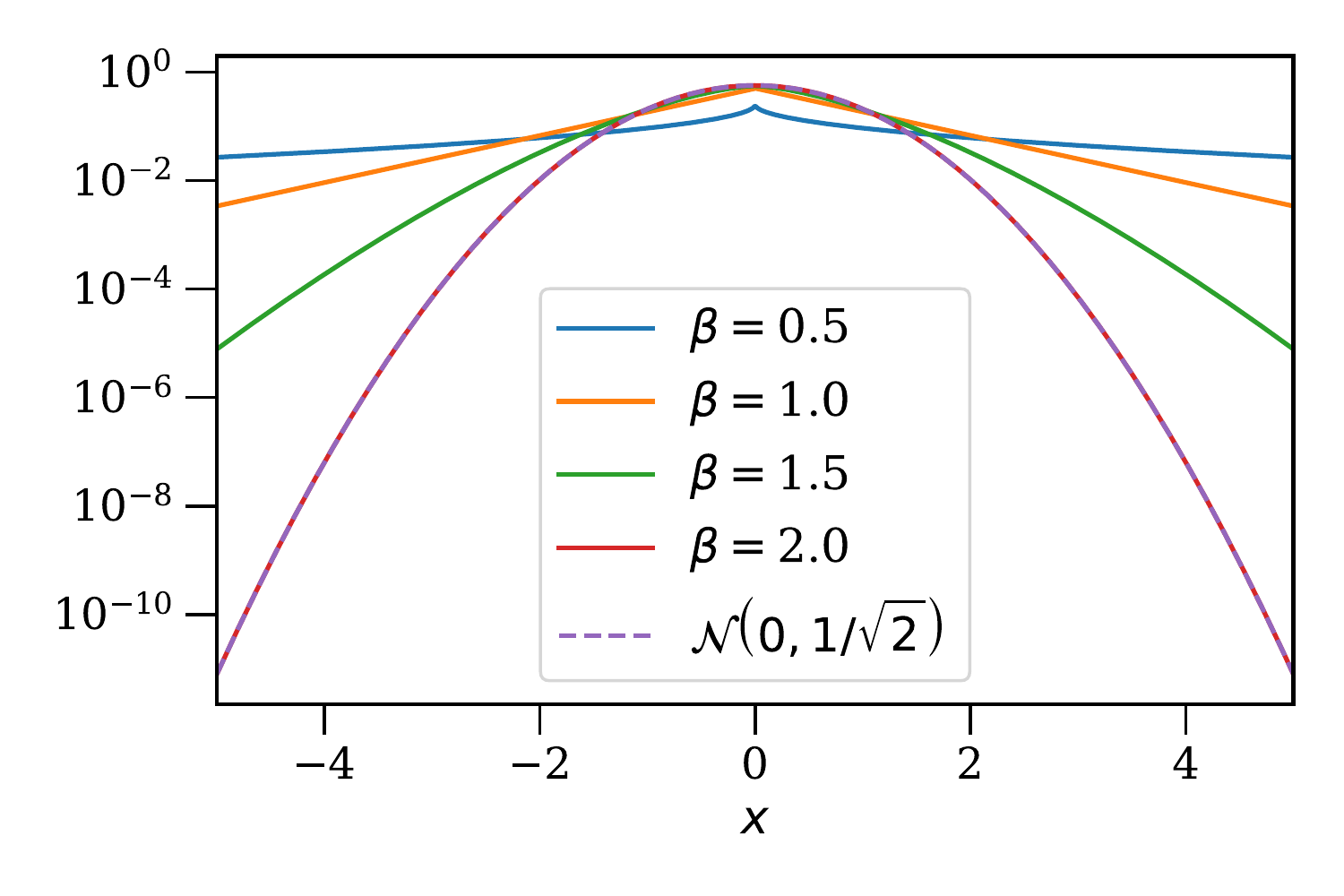}}%
    \subfigure[Score function.]{\label{fig:gn score}\includegraphics[width=0.5\linewidth]{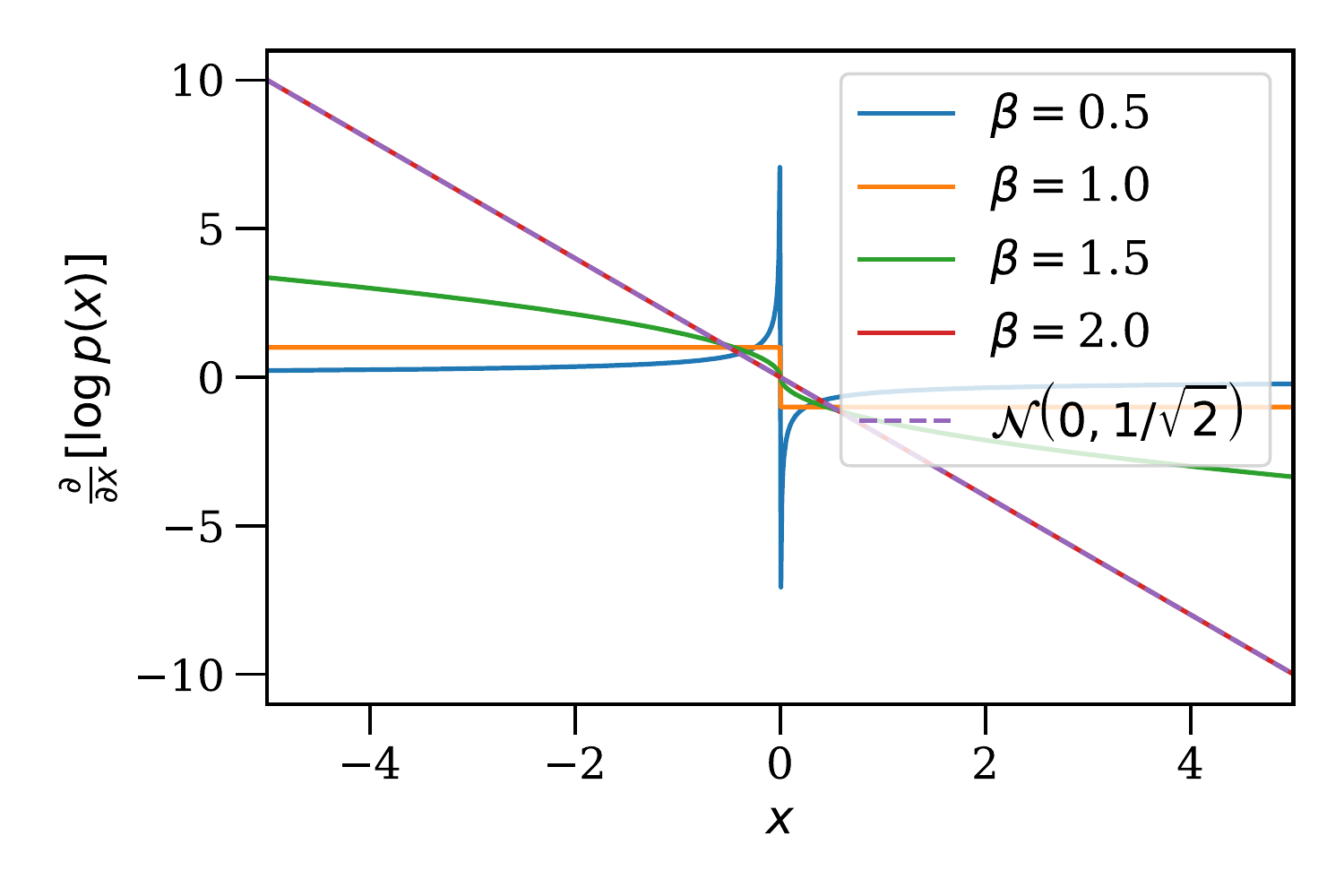}}
    \caption{The generalised normal distribution for varied $\beta$.}
    \label{fig:generalised normal}
\end{figure}

The corresponding score of $1D$ GN noise is
\begin{align}
    \frac{d}{d\Tilde{x}}\left[\log q_{\mathcal{GN}}(\Tilde{x}|x)\right] &= \frac{d}{d\Tilde{x}}\left[- \left(\frac{|\Tilde{x}-x|}{\alpha}\right)^{\beta}\right]\\
    &= -\frac{\beta}{\alpha^{\beta}}\textrm{sign}(\Tilde{x}-x)|\Tilde{x}-x|^{\beta-1}\label{eq:gn_score},
\end{align}
clearly indicating that the score of the generalised normal distribution is continuous but not differentiable at zero. This contravenes the necessary assumptions for DSM from \cite{vincent2011connection}. Therefore, to use this more general family of generalised normal distribution noise, it is necessary to weaken the theoretical constraints to piecewise-differentiable distributions.

\begin{theorem}\label{thm:piecewise differentiable}
Assume that the estimated score function $s_{\theta}(\V{x})$ obeys the assumptions outlined in \cite{vincent2011connection}, except that $s_{\theta}(\V{x})$ is instead differentiable \underline{almost everywhere}. Then, the objective function for $\mathcal{J}_{ESMp}$ in \eqref{eq:esm} is still equivalent to $\mathcal{J}_{ISMp}$ in \eqref{eq:ism}. Proof in Appendix~\ref{sec:Proofs}.
\end{theorem}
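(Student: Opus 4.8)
The plan is to revisit Hyv\"arinen's original derivation of the ESM--ISM equivalence (Theorem~1 of \cite{hyvarinen2005estimation}), isolate the one step where differentiability of $s_\theta$ is actually used --- an integration by parts --- and show that this step survives the weakening to almost-everywhere differentiability, provided $s_\theta$ is still continuous (which is retained among the assumptions carried over from \cite{vincent2011connection}). First I would expand the square in \eqref{eq:esm}:
\begin{align*}
\mathcal{J}_{ESMp}(\theta) &= \tfrac12\mathbb{E}_{p(\V{x})}\!\left[\norm{\nabla_{\V{x}}\log p(\V{x})}_{2}^{2}\right] - \mathbb{E}_{p(\V{x})}\!\left[\langle \nabla_{\V{x}}\log p(\V{x}),\, s_\theta(\V{x})\rangle\right] \\
&\quad + \tfrac12\mathbb{E}_{p(\V{x})}\!\left[\norm{s_\theta(\V{x})}_{2}^{2}\right].
\end{align*}
The first term is independent of $\theta$ and accounts for the constant shift; the third term is already present in \eqref{eq:ism}. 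So everything reduces to showing $\mathbb{E}_{p(\V{x})}[\langle\nabla_{\V{x}}\log p(\V{x}),s_\theta(\V{x})\rangle] = -\mathbb{E}_{p(\V{x})}[\textrm{tr}(\nabla_{\V{x}}s_\theta(\V{x}))]$. Using $p(\V{x})\nabla_{\V{x}}\log p(\V{x}) = \nabla_{\V{x}}p(\V{x})$ rewrites the left-hand side as $\sum_{i}\int (\partial_{i}p)(\V{x})\, s_{\theta,i}(\V{x})\,\mathrm{d}\V{x}$, and it is exactly here --- and only here --- that I must integrate by parts in the $i$-th coordinate.

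The key step is then to justify that integration by parts under the relaxed hypothesis. By Fubini it suffices to work along almost every line parallel to the $i$-th axis. Since $s_\theta$ is continuous and differentiable off a Lebesgue-null set $B$, for a.e. such line the restriction of $s_{\theta,i}$ is continuous and differentiable outside a $1$-dimensional null set; writing the line as a countable union of intervals on which $s_{\theta,i}$ is $C^1$ (equivalently, invoking absolute continuity on lines, which holds once $s_\theta$ is locally Lipschitz, as it is for ReLU networks and piecewise-smooth scores), one integrates by parts on each interval and sums. The interior boundary terms at points of $B$ telescope and cancel precisely because $s_\theta$ is \emph{continuous} there, while the term at infinity vanishes by the same decay condition $p(\V{x})s_\theta(\V{x})\to 0$ assumed in \cite{hyvarinen2005estimation,vincent2011connection}. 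This yields $\int (\partial_i p)\,s_{\theta,i}\,\mathrm{d}\V{x} = -\int p\,(\partial_i s_{\theta,i})\,\mathrm{d}\V{x}$ with $\partial_i s_{\theta,i}$ defined $p$-almost everywhere; summing over $i$ gives the desired identity, and substituting back recovers $\mathcal{J}_{ESMp}(\theta) = \mathcal{J}_{ISMp}(\theta) + \text{const}$.

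I expect the main obstacle to be making this integration-by-parts step airtight without silently discarding a contribution supported on the measure-zero set $B$: if $s_\theta$ were merely differentiable a.e.\ \emph{but allowed to jump} across $B$, its distributional derivative would carry a singular part and the equivalence would genuinely fail. The resolution --- and the reason continuity of $s_\theta$ must stay among the assumptions --- is that a continuous function that is differentiable a.e.\ with locally integrable a.e.-derivative is absolutely continuous on almost every line, so its weak derivative agrees with the pointwise one and no singular term arises. The cleanest sufficient hypothesis covering the cases of interest (piecewise-$C^1$ scores, networks with piecewise-linear activations, scores matching generalised normal noise) is local Lipschitzness of $s_\theta$, under which Rademacher's theorem supplies the a.e.\ differentiability and the standard Sobolev product rule supplies the integration by parts directly; I would present the piecewise argument as the main line and note this Lipschitz version as the general umbrella.
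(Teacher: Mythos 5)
Your proposal follows essentially the same route as the paper's proof: expand the square in $\mathcal{J}_{ESMp}$, discard the $\theta$-independent term, keep $\tfrac12\mathbb{E}[\lVert s_{\theta}\rVert_{2}^{2}]$ as is, and reduce everything to a coordinate-wise integration by parts on the cross term, performed interval-by-interval over a countable partition on which $s_{\theta}$ is differentiable, with the interior boundary terms telescoping and the term at infinity killed by the decay assumption. If anything, you are more explicit than the paper about the one point that makes the argument work --- the telescoping of interior boundary terms requires continuity of $s_{\theta}$ across the null set, and would fail if jumps (hence a singular part of the distributional derivative) were allowed --- whereas the paper's Proposition asserts the telescoping without flagging this, so your added remarks on absolute continuity on lines and the local-Lipschitz umbrella are a strengthening rather than a deviation.
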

This theorem establishes that the generalised normal distribution, or similar distributions such as the Laplace distribution, can be used for noising in DSM. Therefore, to motivate its usage, further theoretical results are now derived surrounding GN concentration of measure.

\paragraph{Comparison of concentration moments.}\label{sec:Comparison of concentration moments}

Similar to the Gaussian case, considering the squared $L^{2}$ norm distribution $Y$, but deriving the distribution of $Z_{i}=X_{i}^{2}$ first for simplicity
\begin{align}
    F_{Z_{i}}(z) = P(Z_{i}\leq z) = P\left(X_{i}^{2}\leq z\right) = P\left(|X_{i}|\leq \sqrt{z}\right),
\end{align}
and therefore
\begin{align}
    f_{Z_{i}}(z) = F_{Z_{i}}'(z) &= \frac{1}{\sqrt{z}}\phi_{\mathcal{GN}}\left(\sqrt{z}\right)\\
    &=\frac{1}{\sqrt{z}}\frac{\beta}{2\Gamma(1/\beta)}\exp\left\{-|\sqrt{z}|^{\beta}\right\},
\end{align}
where $\phi_{\mathcal{GN}}$ is the corresponding unit distribution $(\alpha,\mu)=(1,0)$ and, as the square of the domain of $X_{i}$ is $\mathbb{R}_{+}$, the modulus can be ignored.

Recalling the \textit{generalised Gamma distribution} \cite{stacy1962generalization}, $\mathcal{GG}(a,d,p)$ with PDF
\begin{align}
    f(x;a,d,p)=\frac{p/a^{d}}{\Gamma(d/p)}x^{d-1}\exp\left\{-\left(\frac{x}{a}\right)^{p}\right\},
\end{align}
with $d\in\mathbb{R}$ and $a,p\in\mathbb{R}_{+}$, it becomes clear that $Z_{i}\sim\mathcal{GG}(a=1,d=1/2,p=\beta/2)$ and the sum is resolved by
\begin{align}
    f_{Y}(y) &= \frac{1}{n}\phi_{\mathcal{GG}}\left(\frac{y}{n}\right)\\
    &= \frac{1}{n}\frac{p}{a^{d}\Gamma(d/p)}\left(\frac{y}{n}\right)^{d-1}\exp\left\{-\left(\frac{y}{an}\right)^{p}\right\}\\
    &= \frac{p}{(an)^{d}\Gamma(d/p)}y^{d-1}\exp\left\{-\left(\frac{y}{an}\right)^{p}\right\},
\end{align}
showing $n\mathcal{GG}(a,d,p)=\mathcal{GG}(na,d,p)$, which means
\begin{align}
    Y\sim\mathcal{GG}(a=n,d=1/2,p=\beta/2). \label{eq:generalised nomal length distribution}
\end{align}
This derivation establishes the norm distribution appropriate for the generalised normal noise vector. To determine whether such a generalisation is useful, it is relevant to analyse how the moments of this distribution evolve with $n$.

For a Gaussian noise vector, the moments of $\lVert\mathbf{X}\rVert_{2}^{2}=Y\sim~\chi^{2}(n)$ are:
\begin{align}
    \mathbb{E}[Y] &= n \\
    \textrm{Var}(Y) &= 2n \label{eq:correct chi2 var}\\
    \textrm{Skew}(Y) &= \sqrt{\frac{8}{n}}\xrightarrow[]{n\to\infty} 0 \label{eq:chi2 skew}\\
    \textrm{Kurtosis}(Y) &= \frac{12}{n}\xrightarrow[]{n\to\infty} 0.
\end{align}

The main properties of interest here are:
\begin{itemize}
    \item Variance scales linearly with dimension. This moment dictates how thick the concentric annuli are for the sequence of noise levels in DSM with ALD.
    \item Skew and kurtosis tend to zero as dimensionality increases, leading to a near-Gaussian distribution in the limit. Also, when the skew value is non-zero, any scheme to overlap concentric annuli will be consistently biased toward one side of each annulus.
\end{itemize}
On the other hand, for the generalised Gaussian noise vector, the first two moments of $\lVert\mathbf{X}\rVert_{2}^{2}=Y\sim~\mathcal{GG}(a=n,d=1/2,p=\beta/2)$ are:
\begin{align}
    \mathbb{E}[Y] &= a\frac{\Gamma((d+1)/p)}{\Gamma(d/p)} = nC_{1} \\
    \textrm{Var}(Y) &= a^{2}\left(\frac{\Gamma((d+2)/p)}{\Gamma(d/p)} - \left(\frac{\Gamma((d+1)/p)}{\Gamma(d/p)}\right)^{2}\right) = n^{2}C_{2},
\end{align}
where
\begin{align}
    C_{1} &:= \frac{\Gamma(3/\beta)}{\Gamma(1/\beta)}\label{eq:mean_scaling} \\
    C_{2} &:= \frac{\Gamma(5/\beta)}{\Gamma(1/\beta)} - \left(\frac{\Gamma(3/\beta)}{\Gamma(1/\beta)}\right)^{2}. \label{eq:var_scaling}
\end{align}
Relative to the Gaussian moments, it should be noted that \eqref{eq:mean_scaling} scales the mean in a nonlinear fashion. As depicted in Figure~\ref{fig:gamma_trends2}, \eqref{eq:mean_scaling} undergoes a super-exponential decay with respect to $\beta$. Therefore, use of low $\beta$ noising strategies will push annulus samples relatively far from the base data point compared to Gaussian noise. It is apparent that a trade-off has emerged, between the desire for heavy-tails in DSM to fill high-dimensional space, and the unwieldy resulting norm distributions. This is perhaps, unsurprising, given the renowned difficulties when working with L\'{e}vy-like distributions \cite{mandelbrot1982fractal}. Secondly, the property that variance scales quadratically with dimension has surfaced. Therefore, as long as \eqref{eq:var_scaling} is greater than $1/n$---almost a guarantee given that $n\gg1$ and the even more aggressive exponential for low $\beta$ in Figure~\ref{fig:gamma_trends3}---substantially thicker shells will be present\footnote{Despite the unintuitive form of the gamma functions comprising \eqref{eq:mean_scaling} and \eqref{eq:var_scaling}, both terms simplify for $\beta\in\{2,1,\ldots,1/k\},\ k\in\mathbb{N}$, to Gaussian, Laplace, and closed-form moments respectively.}. Of course, this gain comes with the caveat that low $\beta$ noise is likely to be problematic, corresponding to score functions with a singularity at zero (evidenced in Figure~\ref{fig:generalised normal}).

\section{Results}\label{sec:Chapter 4 - Results}

After the groundwork of the previous section, this section designs empirical experiments to explore and confirm the utility of heavy-tailed denoising score matching (HTDSM). A qualitative and quantitative assessment of the insights of Section~\ref{sec:High dimensional noising} is provided at multiple scales, each lending support to the use of HTDSM in practice.

\subsection{Low dimensional space}\label{sec:low dimension dsm results}

Before progressing to high-dimension DL image datasets, it is apt to begin with an easily controlled and visualised continuation of the $2D$ example given in Figures~\ref{fig:dsm example 1} and \ref{fig:dsm ald example 1}.\footnote{Details of the implementation are provided in Section~\ref{sec:Training}.}

As a first implementation of the HTDSM scheme described in Section~\ref{sec:High dimensional noising}, Figure~\ref{fig:dsm ald laplace 1} (expanded in Figure~\ref{fig:dsm ald laplace 2} in Appendix~\ref{sec:2d_rez}) combats the density approximation task of Figure~\ref{fig:dsm ald example 1} using Laplace ($\beta=1$) noise. Figure~\ref{fig:dsm_ald_laplace_1} illustrates the diamond, rather than circular, noise structure of a diagonal bivariate Laplace distribution. Figure~\ref{fig:dsm_ald_laplace_2} and \ref{fig:dsm_ald_laplace_3} respectively demonstrate that ALD training \textit{and sampling} converge with Laplace (sub-Gaussian, piece-wise differentiable) noise, confirming Theorem~\ref{thm:piecewise differentiable}. The effect of the heavier-tailed noise when sampling is evidently present for the first half of the noise levels in Figure~\ref{fig:dsm_ald_laplace_4}, but this effect is outweighed by the down-scaling of ALD in the second half of sampling. Paths in Figure~\ref{fig:dsm_ald_laplace_5} begin from any point in the initialisation, extend across a far broader space, and all converge. The use of sub-Gaussian sampling diffusion is a novel step beyond standard ALD using SBMs and is closely aligned with fractional Langevin Monte Carlo methods \cite{csimcsekli2017fractional}. A positive is the removal of any kind of decision boundary, but a negative is the slightly inaccurate final solution. One way to solve this inaccuracy would be to simply add another, lower noise, level of ALD.

Figure~\ref{fig:dsm ald laplace 1} also clarifies that the higher variance in shell radii, derived in Section~\ref{sec:Comparison of concentration moments}, is in fact the variance arising due to the non-spherical nature of the high-dimensional generalised normal distribution. For instance, in the case $\beta=1$, Laplace noise provides samples in an approximate hypercube around its centre. The corners of the hypercube extend further along the axes than the Gaussian case, sacrificing probability mass not aligned with the coordinate system. It is noteworthy that this hypercube is approximate and the infinite domain of the Laplace distribution is therefore still more useful than the fixed hypercube of the uniform distribution. Immediate extensions are available, such as using a radial basis for the noise distribution, similar to that used in \citet{farquhar2020radial} for Bayesian neural network parameterisation. However, this direction is beyond the scope of this work and the Cartesian basis will continue to be used throughout.

\begin{figure}
    {\centering
    \subfigure[]{\label{fig:dsm_ald_laplace_1_1}\includegraphics[width=0.5\linewidth]{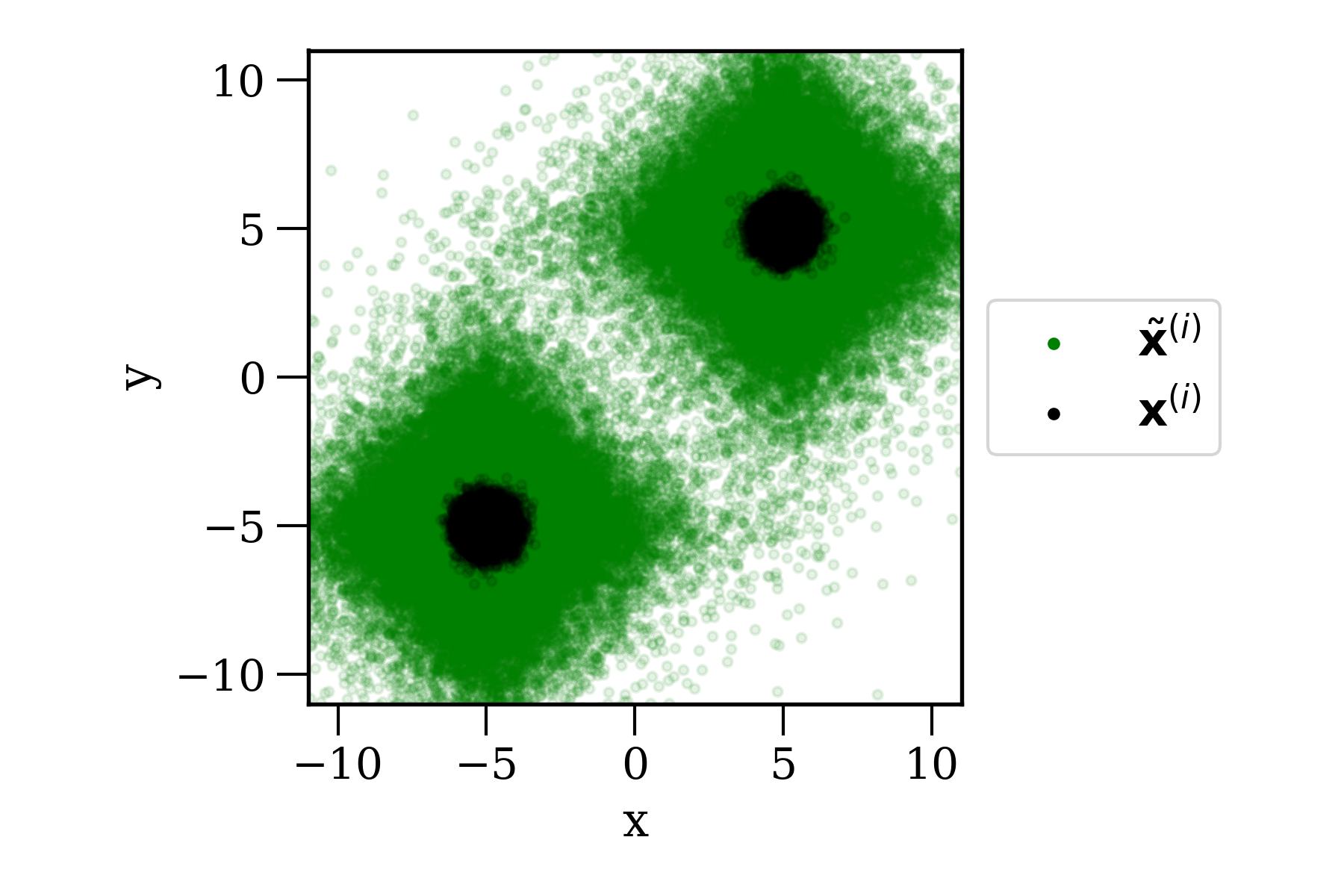}}%
    % \subfigure[]{\label{fig:dsm_ald_laplace_2}\includegraphics[width=0.5\linewidth]{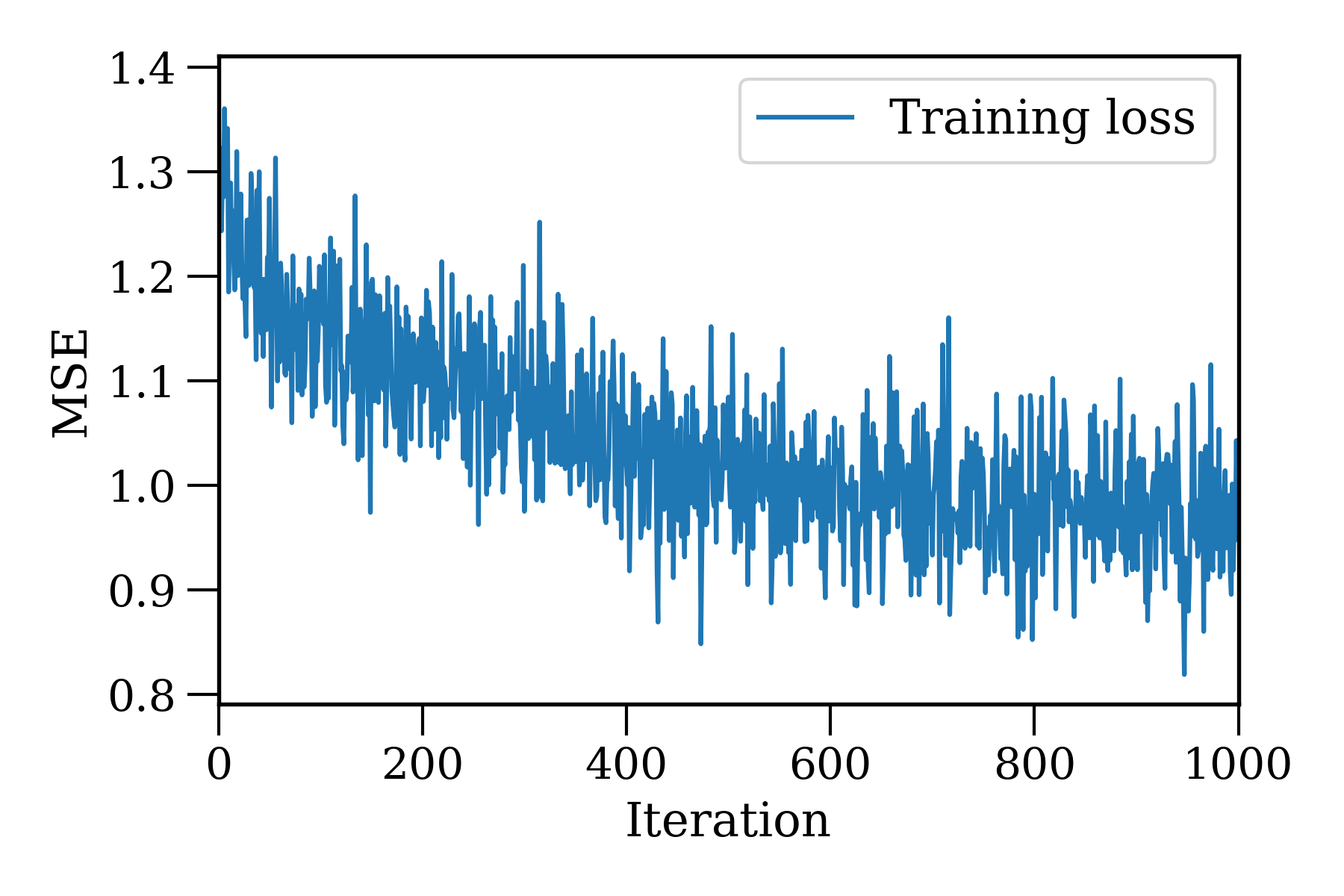}}
    \subfigure[]{\label{fig:dsm_ald_laplace_1_3}\includegraphics[width=0.5\linewidth]{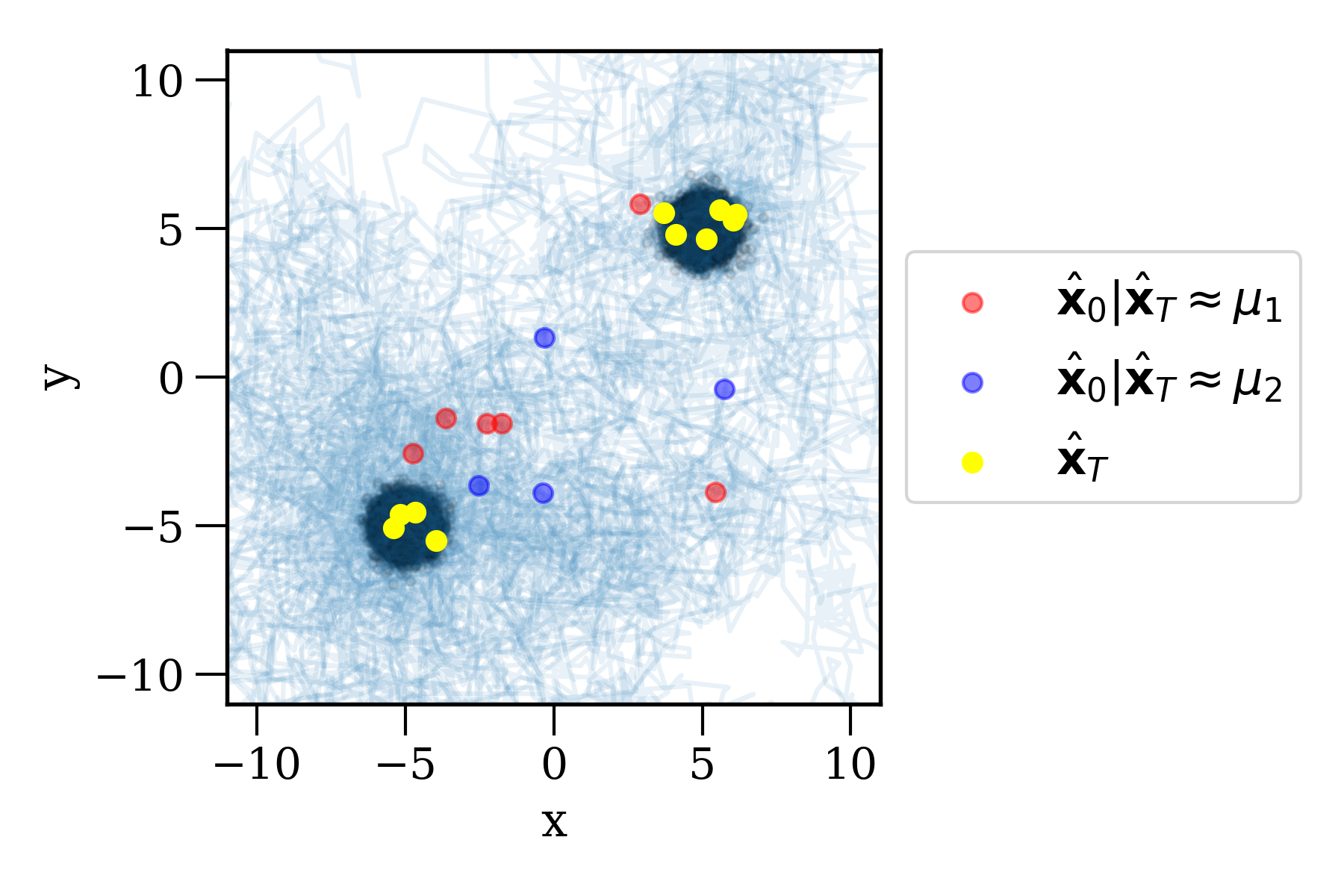}}}%
    % \subfigure[]{\label{fig:dsm_ald_laplace_4}\includegraphics[width=0.5\linewidth]{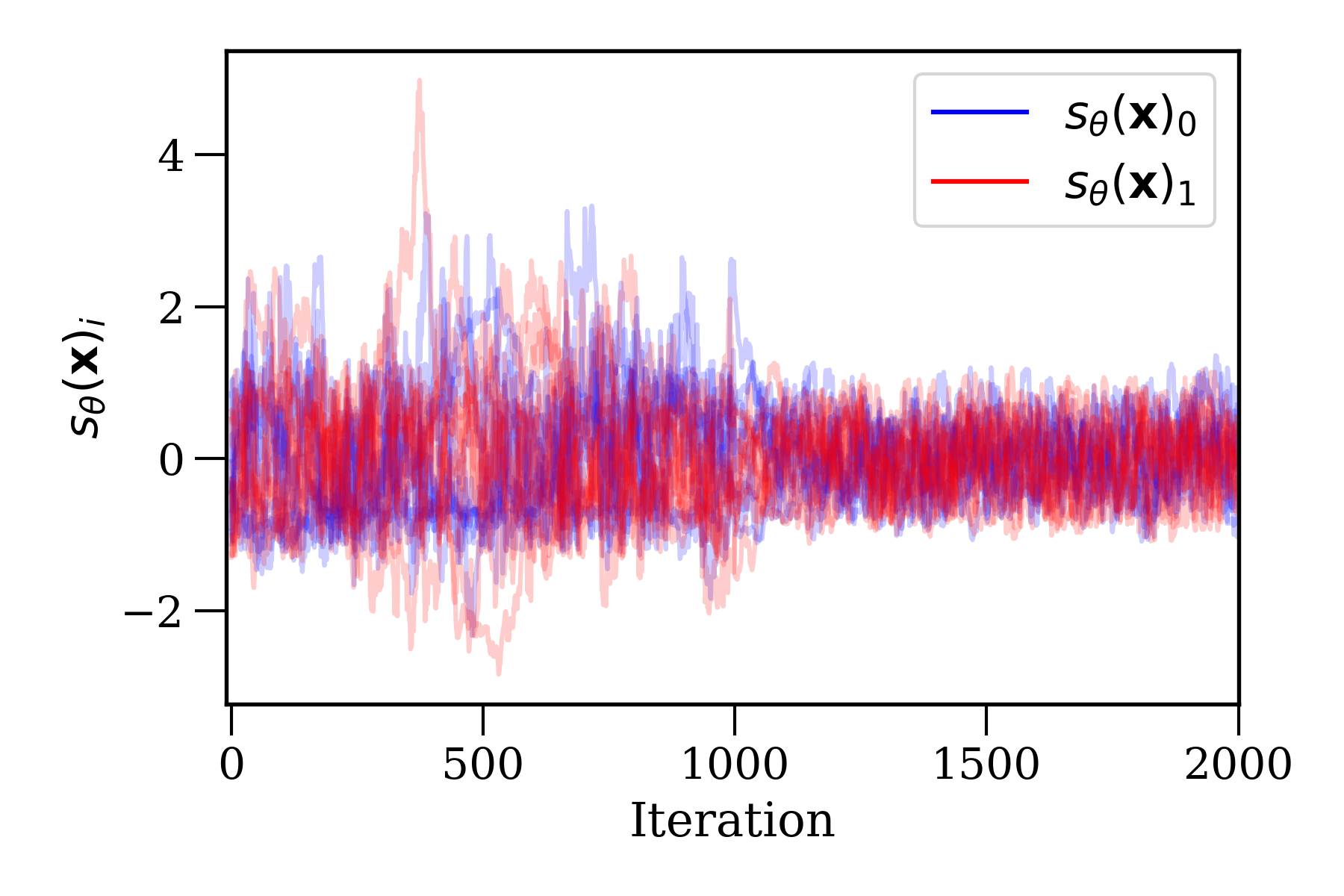}}
    % \subfigure[]{\label{fig:dsm_ald_laplace_5}\includegraphics[width=0.5\linewidth]{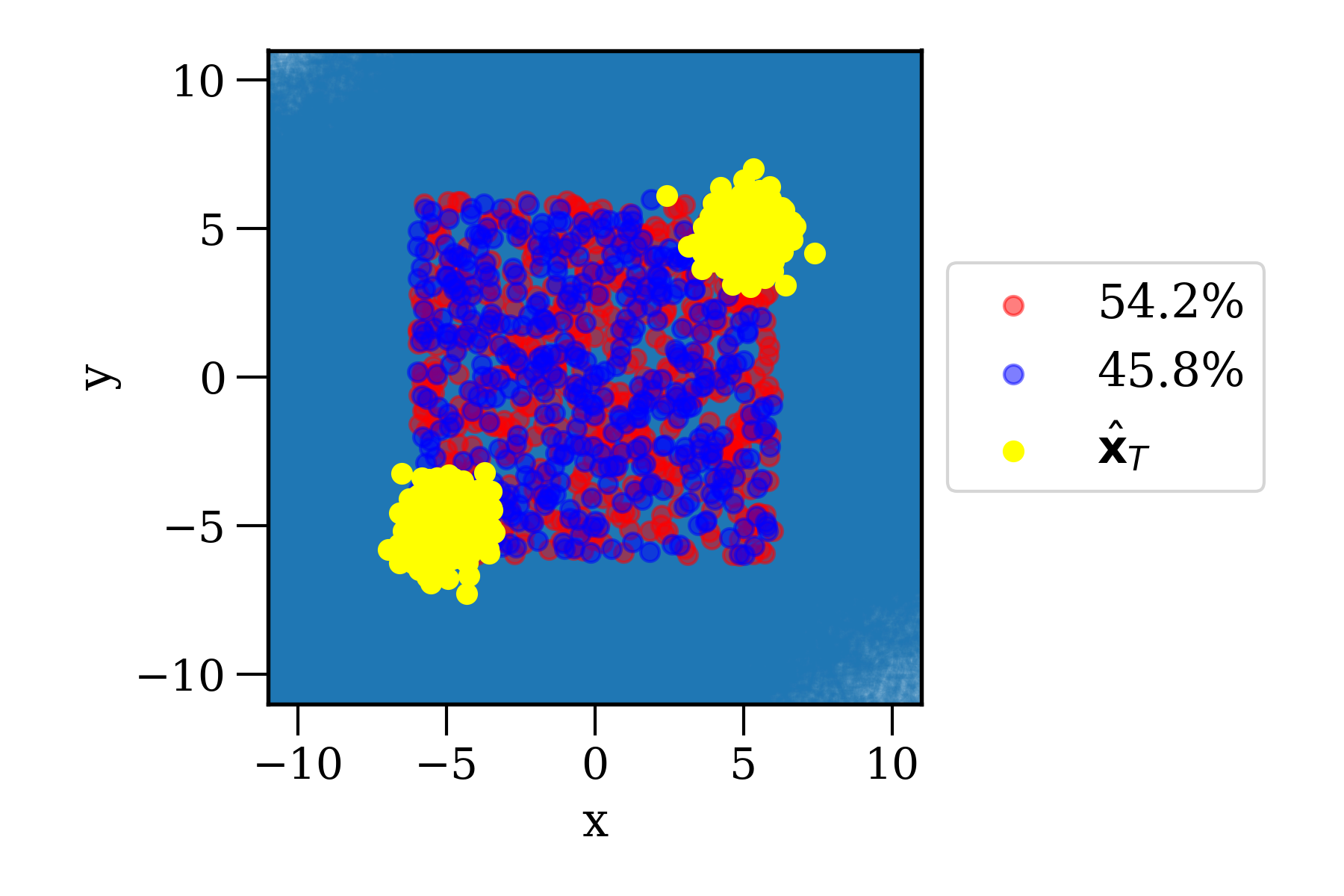}}}
    \caption{Laplace DSM with ALD. The setup and figures are identical to Figure~\ref{fig:dsm ald example 1}, except that Laplace noise is used ($\beta=1$ in the general formulation). \textbf{a} depicts the diamond, rather than circular, noise structure of a diagonal bivariate Laplace distribution. \textbf{b} demonstrates that ALD sampling converges even with Laplace (sub-Gaussian, piece-wise differentiable) diffusion, confirming Theorem~\ref{thm:piecewise differentiable}. Full details in Figure~\ref{fig:dsm ald laplace 2}.}
    \label{fig:dsm ald laplace 1}
\end{figure}

% \begin{figure}[ht]
%     {\centering
%     \subfigure[]{\label{fig:dsm_ald_10x_1}\includegraphics[width=0.5\linewidth]{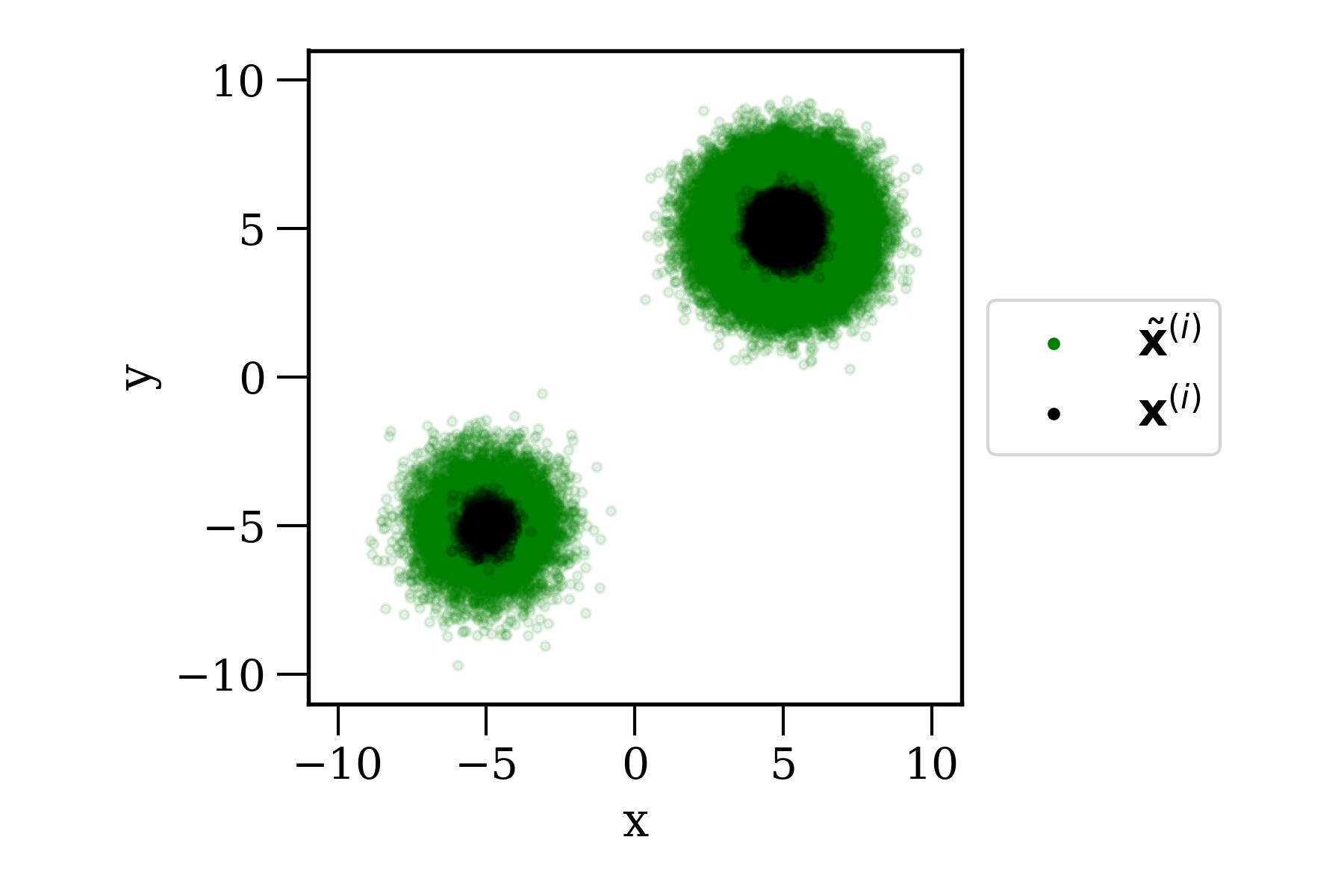}}%
%     \subfigure[]{\label{fig:dsm_ald_10x_2}\includegraphics[width=0.5\linewidth]{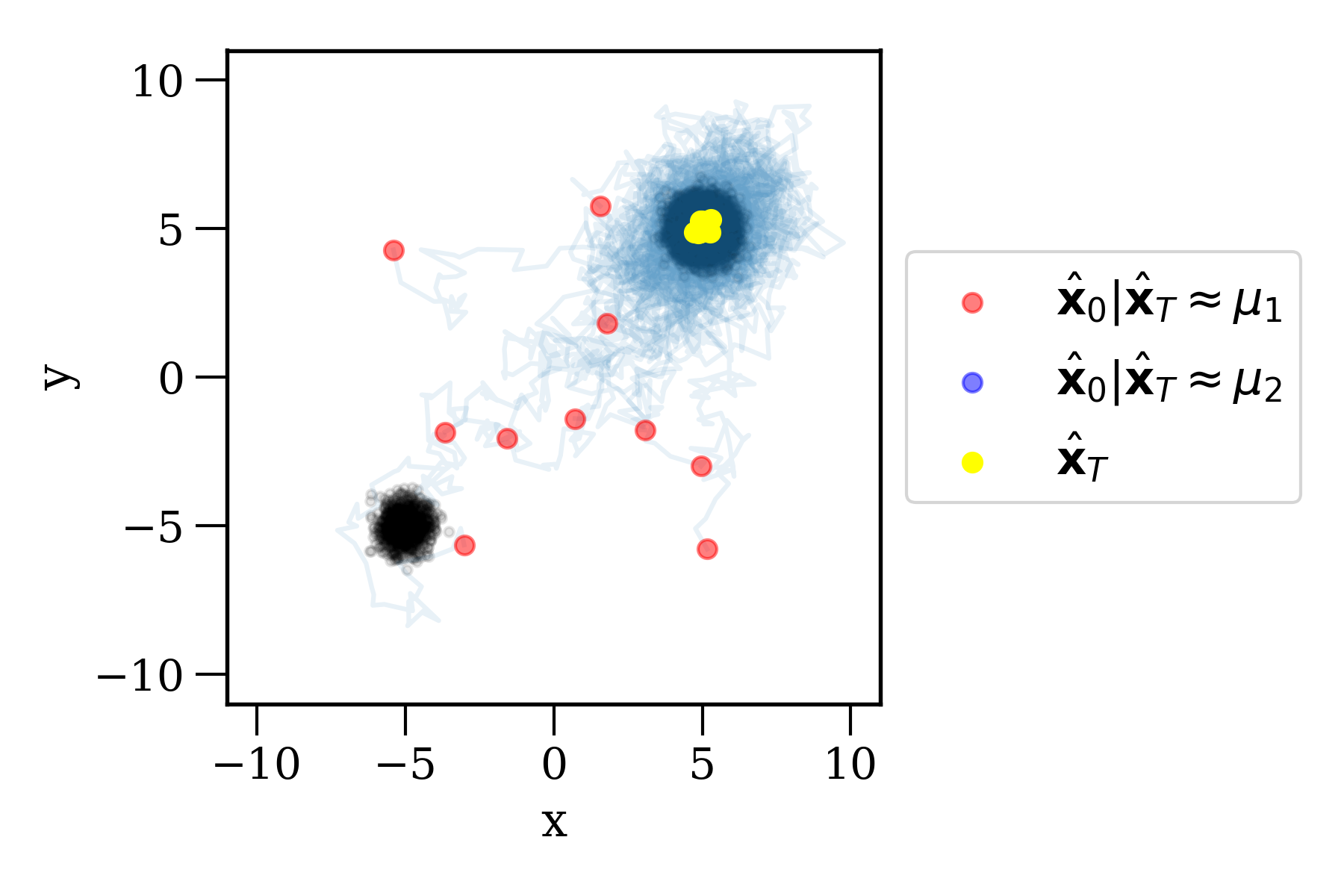}}
%     \subfigure[]{\label{fig:dsm_ald_10x_3}\includegraphics[width=0.5\linewidth]{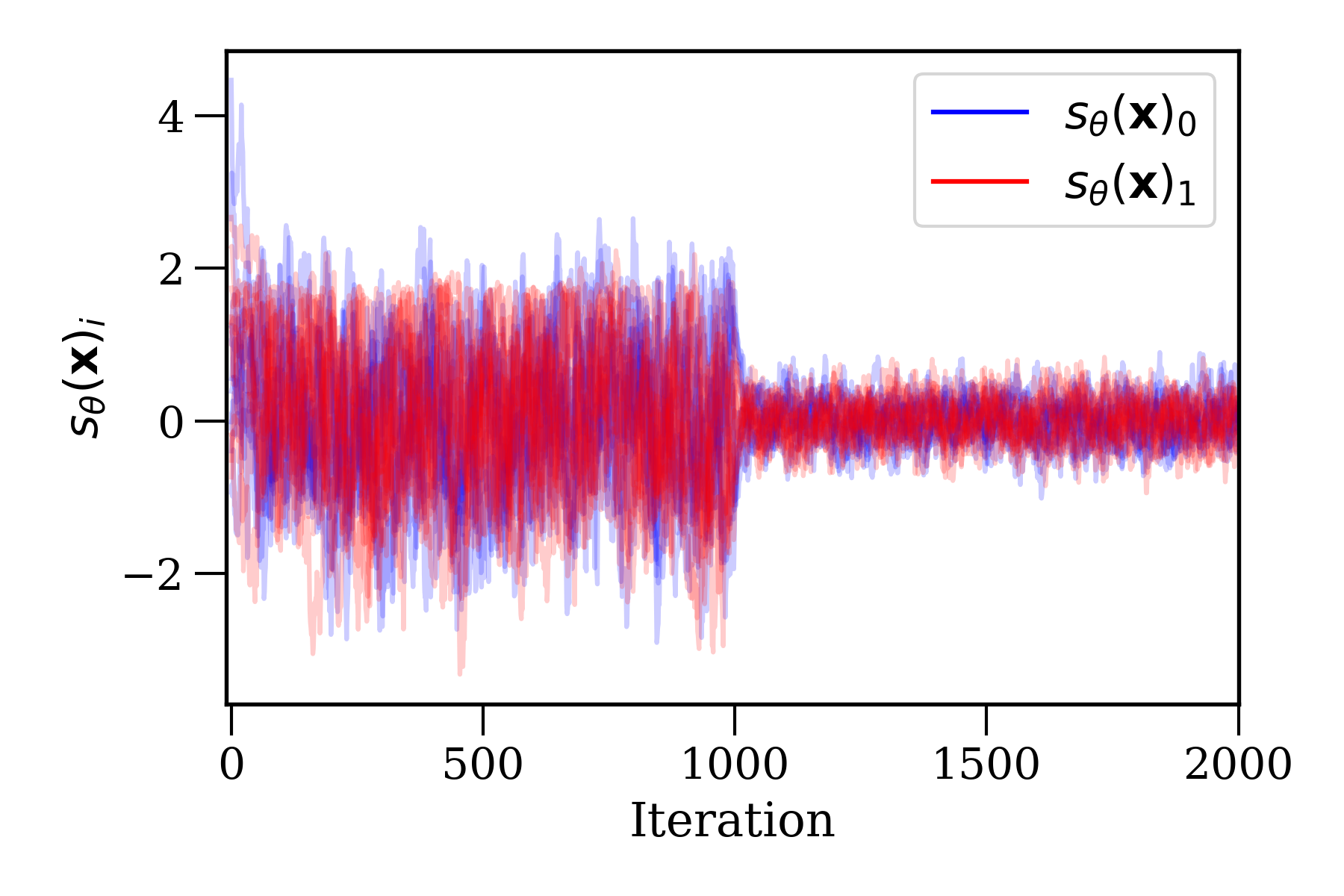}}%
%     \subfigure[]{\label{fig:dsm_ald_10x_4}\includegraphics[width=0.5\linewidth]{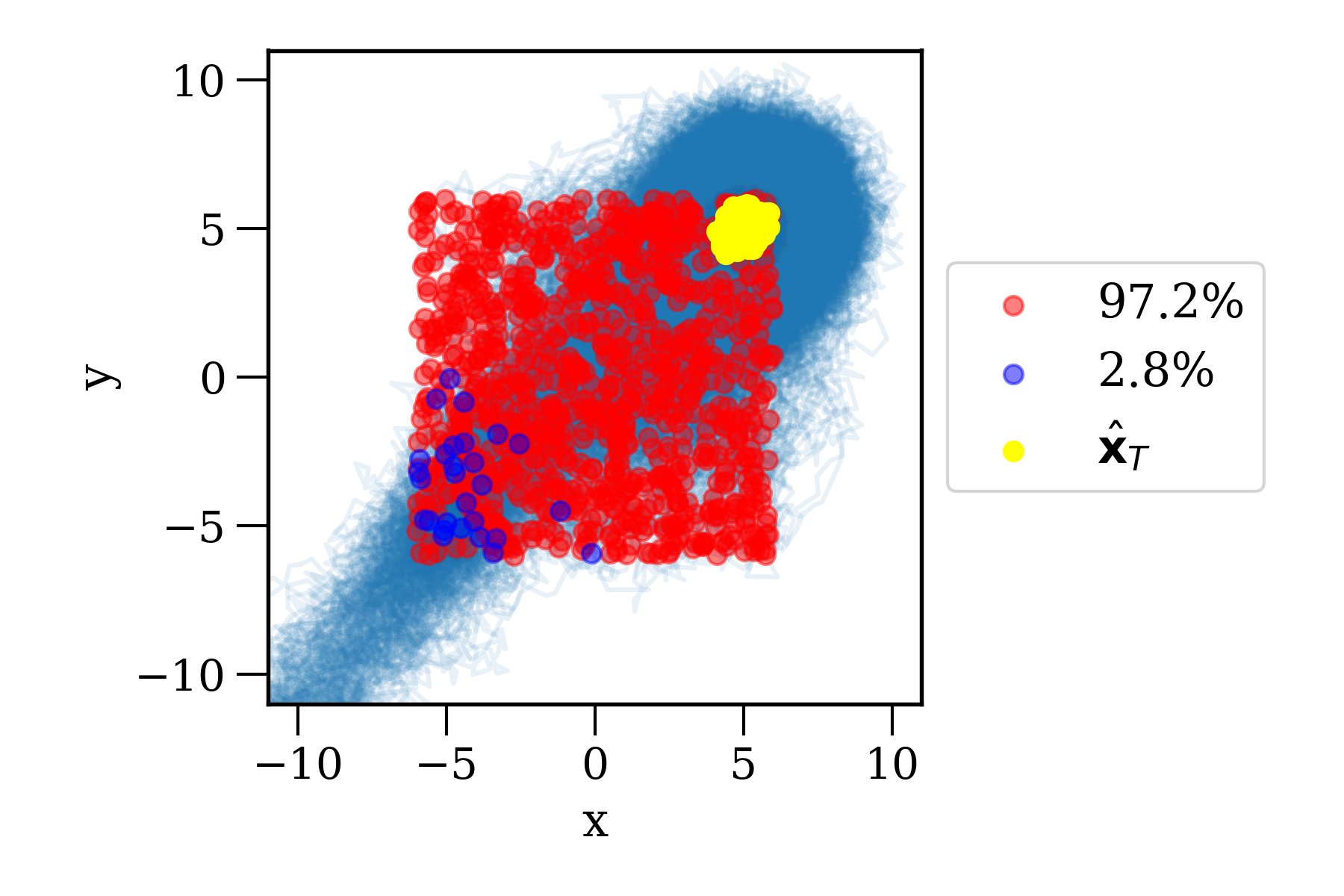}}}
%     \caption{DSM with ALD mode collapse. The setup and figures are identical to Figure~\ref{fig:dsm ald example}, except that $p(\V{x})$ samples now have an imbalance of 10:1 between modes 1 (upper right) and 2 (lower left) respectively.}
%     \label{fig:dsm ald 10x example}
% \end{figure}

Figure~\ref{fig:dsm ald 10x example 2}, in Appendix~\ref{sec:2d_rez}, also depicts how standard DSM with ALD can suffer from mode collapse. The setup and subfigures are identical to Figure~\ref{fig:dsm ald example 1}, except that $p(\V{x})$ samples now have an imbalance of 10:1 between modes 1 (upper right) and 2 (lower left) respectively. Particle paths in Figure~\ref{fig:dsm_ald_10x_2} and Figure~\ref{fig:dsm_ald_10x_4} clearly show a preference for mode 1, even crossing mode 2 entirely. This is arguably not a problem, but 97.2\% of particles approaching mode 1 in Figure~\ref{fig:dsm_ald_10x_4} does not reflect the true imbalance ($>99\%$ is also not an uncommon steady-state for this setup). The large scores associated with distant particle migration, across mode 2 to mode 1, can also be seen by the scale of the initial scores in Figure~\ref{fig:dsm_ald_10x_3}.

\begin{figure}[t]
    {\centering
    \subfigure[]{\label{fig:dsm_ald_laplace_10x_1}\includegraphics[width=0.5\linewidth]{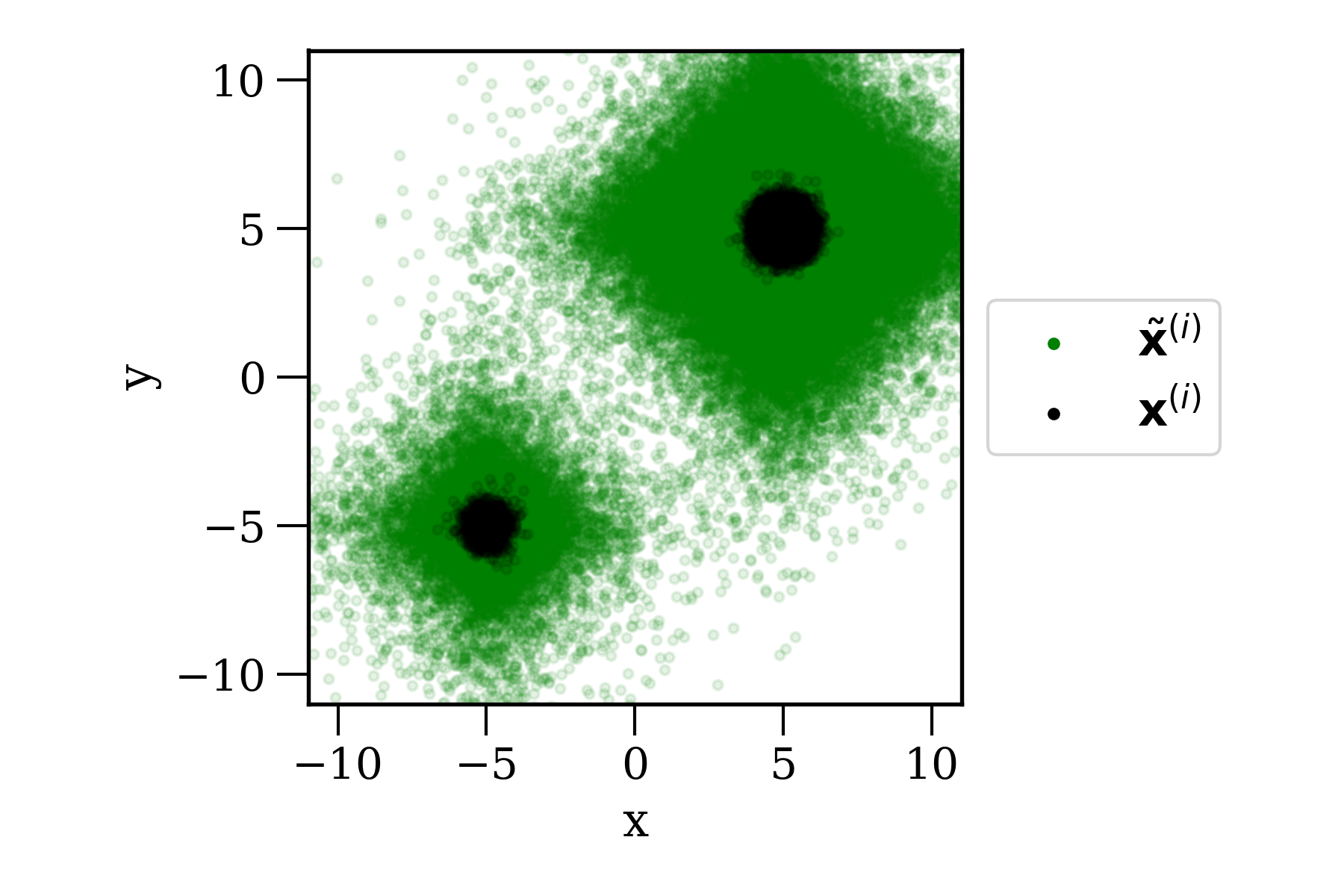}}%
    % \subfigure[]{\label{fig:dsm_ald_laplace_10x_2}\includegraphics[width=0.5\linewidth]{}} 
    % \subfigure[]{\label{fig:dsm_ald_laplace_10x_3}\includegraphics[width=0.5\linewidth]{}}%
    \subfigure[]{\label{fig:dsm_ald_laplace_10x_4}\includegraphics[width=0.5\linewidth]{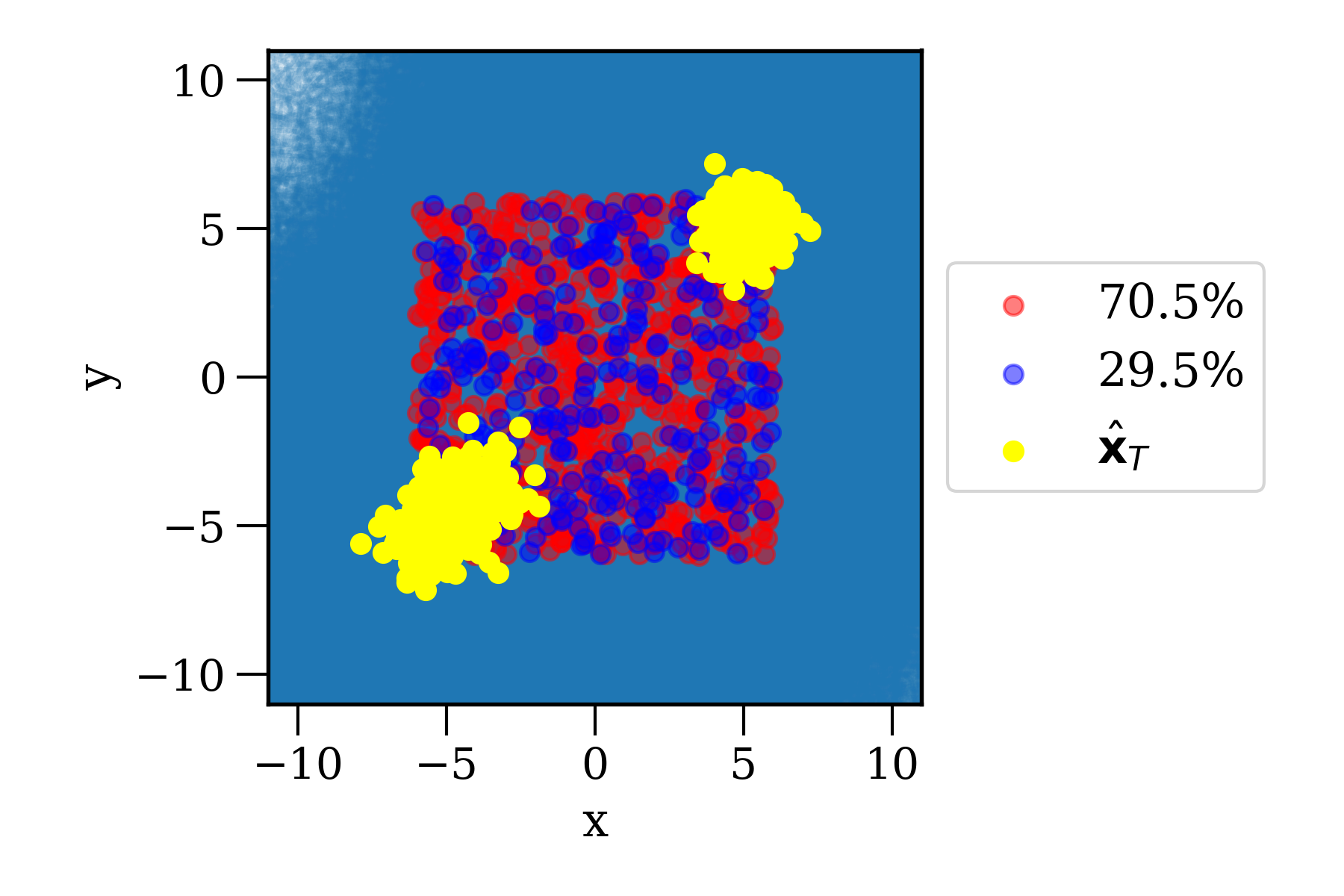}}}
    \caption{Laplace DSM with ALD. The setup and figures are identical to Figure~\ref{fig:dsm ald 10x example 2}, except that Laplace noise is used. \textbf{b} demonstrates that Laplace noise compensates for the class imbalance. $29.5\%$ of particles finishing in mode 2 even manages to overcompensate.}
    \label{fig:dsm ald laplace 10x 1}
\end{figure}

In addition, Figure~\ref{fig:dsm ald laplace 10x 1} illustrates Laplace DSM with ALD for the class imbalance problem of Figure~\ref{fig:dsm ald 10x example 2}, and \ref{fig:dsm_ald_laplace_10x_4} establishes that Laplace noise can compensate for class imbalance. In particular, $29.5\%$ of particles finishing in mode 2 means that HTDSM even manages to overcompensate.

To confirm that this trend is present for all $\beta<2$ noise types, Figure~\ref{fig:beta trend synthetic} extends this, repeating the experiment to estimate a confidence interval. Overall, the large jumps in sampling (similar to L\'{e}vy flights) free the sampling paths from being dominated by the more populous mode while preserving useful score estimates which point toward the underrepresented local maximum of the PDF.

Another insight is offered in Table~\ref{tab:sm vs diffusion}, where DSM and HTDSM are used with Gaussian or Laplace diffusion. As expected, models trained with standard DSM diverge when Laplace diffusion is used for sampling\footnote{Diverging here refers to approaching very large values which completely ignore the distribution modes (even if they are technically closer to one of the two).}. Also consistent with Figures~\ref{fig:beta trend synthetic} and \ref{fig:dsm ald 10x example 2}, DSM with Gaussian ALD suffers from mode collapse. Nevertheless, HTDSM can use sub-Gaussian diffusion to overcompensate for the asymmetric data, a trait that is valuable for realistic scenarios which often contain class imbalances. Finally, HTDSM can be used with Gaussian ALD solely as a method for providing better score estimates. This final process leads to the most accurate estimate of the imbalance in Table~\ref{tab:sm vs diffusion} and suggests that the way forward is to leverage the stability of Gaussian ALD alongside HTDSM gradients which are likely to be more accurate in low probability regions.
\begin{figure}[t]
    \centering
    \includegraphics[width=0.8\linewidth]{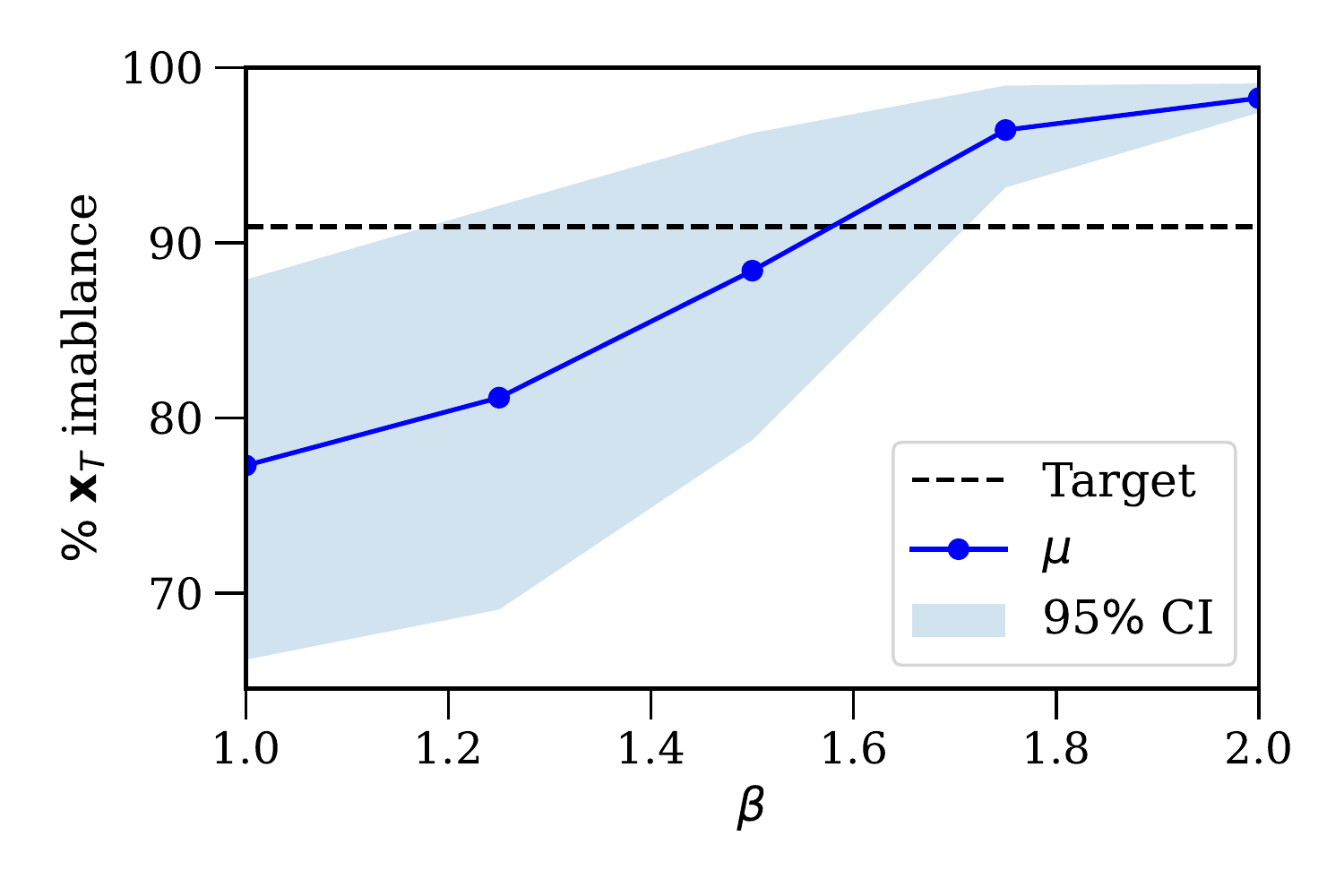}
    %\vspace{-30pt}
    \caption{Mean percentage imbalance of generated data for $\beta\in[1,2]$ across 10 runs of the 10:1 experiment in Figure~\ref{fig:dsm ald 10x example 2}. Noise in training and sampling is sub-Gaussian. The 95\% confidence interval is bootstrapped from 10,000 resamples.}
    \label{fig:beta trend synthetic}
\end{figure}

\begin{table}[t]
    \centering
    \caption{Mean (plus lower/upper bounds for a 95\% confidence interval bootstrapped from 10,000 resamples) percentage imbalance of generated data for the 10:1 experiment in Figure~\ref{fig:dsm ald 10x example 2} with Gaussian or Laplace DSM or diffusion. DSM with Laplace noise at sampling time diverges regularly due to inaccurate gradients.}
    %\vspace{5pt}
    \label{tab:sm vs diffusion}
    \begin{tabular}{lcc}
        \toprule
         & Gaussian diffusion & Laplace diffusion\\
        \midrule
        DSM & 98.25 (97.40, 99.09) & Divergent \\
        HTDSM & \textbf{87.44 (82.85, 91.50)} & 77.28 (66.38, 87.79) \\
        \bottomrule
    \end{tabular}
    % \vspace{10pt}
\end{table}

The central take-aways of these synthetic experiments are:
\begin{itemize}
    \item HTDSM is an efficacious method of estimating a distribution's score function.
    \item Sub-Gaussian diffusion, causing L\'{e}vy-flight-like sampling paths, can overcome class imbalances, motivating extension to the continuous case (see Section~\ref{sec:Continuous extension to stochastic differential equations}). Sufficiently accurate and compensatory score estimates for these paths can also only be achieved with HTDSM.
    % \item Heavy-tailed noise can always be scaled down to dampen sampling with jumps, whereas limitations such as class imbalances are inherent to the data.
    \item HTDSM with Gaussian diffusion offers a potentially even more general solution.
\end{itemize}

\subsection{High-dimensional class imbalances}\label{sec:High-dimensional class imbalances}

%\subsubsection{MNIST 1 vs. 8.}
To extend analysis of how HTDSM mitigates class imbalances in higher dimensions, Figure~\ref{fig:imbalanced} present model generation results for a simplified version of the MNIST dataset. The data is limited to contain only the classes 1 and 8, which were chosen for their contrast in pixel space. The goal was to demonstrate how Gaussian DSM SBMs perform poorly with ALD in the presence of asymmetric class representation, by inducing an imbalance between classes 1 and 8. However, mode collapse occurred before the class ratio was even manipulated (further supported by the more expected imbalance in Figure~\ref{fig:mnist18_b-2.0_f-0.5_image_grid_1000} in Appendix \ref{sec:imbalanced_experiments}). Gaussian DSM suffering such issues in this minimal setting appears to contradict \citet{song2019generative}, where the motivation for combining DL and ALD was to overcome uneven mode weights. Moreover, it brings into question the cause of recent impressive generative results with SBMs, which may require the regularisation of many classes in the data to produce more general score estimates.

To reinforce this result, the same even-class model was re-ran to produce 100 samples\footnote{Models were also retrained to verify that this issue was reproducible.}. DSM with 100 steps per level (s/l) produced six ones with $P(6)<10^{-21}$ under a binomial model, whereas HTDSM produced eighteen ones with $P(18)<10^{-10}$, a massive relative improvement. Since the generated HTDSM images were speckled (see Figure\ref{fig:sbm mnist 1/8 b=1} in Appendix \ref{sec:imbalanced_experiments}, the same experiment was repeated with 1,000s/l. This revealed that more sampling steps alleviates Gaussian DSM imbalance almost completely, producing 48 ones, and for HTDSM 32 ones were generated (well within two standard deviations of the normal approximation to the underlying binomial distribution here). Therefore, one conclusion is that HTDSM is beneficial for varied sampling in compute-constrained scenarios and that avoidance of mode collapse in the literature may be, in part, due to intensive sampling procedures at high noise levels.

\begin{figure}[t]
    \centering
     \subfigure[]{\label{fig:mnist18_b-2.0_image_grid_1000}\includegraphics[width=0.35\linewidth]{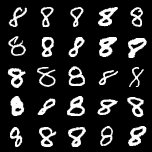}} \hfill%
    \subfigure[]{\label{fig:mnist18_b-1.0-10x}\includegraphics[width=0.35\linewidth]{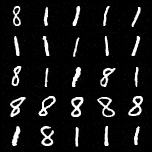}}
    \caption{(\textbf{a}) Generated samples from a Gaussian DSM model trained for 20,000 steps on digits 1 and 8 from the MNIST dataset and sampled with 100 steps per level (s/l) of Gaussian ALD. No digits resembling a 1 are present for 25 samples, indicating a sampling process which induces the class imbalance. (\textbf{b}) Generated samples from a HTDSM model trained in-line with (a), and sampled with 1000 s/l, indicate more even class balance.}
    \label{fig:imbalanced}
\end{figure}

% \begin{figure}[h]
%     \centering
%     \subfigure[ALD, 100 s/l.]{\label{fig:mnist18_b-1.0}\includegraphics[width=0.45\linewidth]{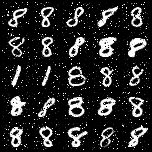}}\hfill%
%     \subfigure[ALD, 1,000 s/l.]{\label{fig:mnist18_b-1.0-10x}\includegraphics[width=0.45\linewidth]{Chapter4/Figs/mnist18_b-1.0-10x.png}}

%     \subfigure[Laplace ALD, 100 s/l.]{\label{fig:mnist18_b-1.0_sub-diff}\includegraphics[width=0.45\linewidth]{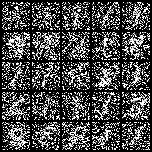}}\hfill%
%     \subfigure[Laplace ALD, 1,000 s/l.]{\label{fig:mnist18_b-1.0_sub-diff-10x}\includegraphics[width=0.45\linewidth]{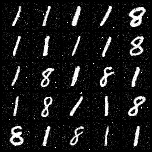}}
%     \caption{Generated digits for a HTDSM model trained in-line with Figure~\ref{fig:sbm mnist 1/8 b=2} and sampled with varied diffusion type and steps per level (s/l). In \textbf{a}, speckle is observed, whereas more sampling steps in \textbf{b} leads to a more even class balance. The difference between \textbf{c} and \textbf{d} confirms that sub-Gaussian diffusion can be used in high dimensions successfully, as long as the number of sampling steps is increased.}
%     \label{fig:sbm mnist 1/8 b=1}
% \end{figure}

\subsection{High-dimensional unconditional generation}\label{sec:High-dimensional unconditional generation}

Tables~\ref{tab:sbm mnists_main} and \ref{tab:sbm cifar_celeba_main} summarise the DGM metrics attained using HTDSM on the MNIST, Fashion-MNIST, CIFAR-10, and CelebA datasets. For MNIST and Fasion-MNIST we report precision, recall, density, and coverage estimated on 5000 generated samples using the Python \texttt{prdc}\footnote{\href{https://github.com/clovaai/generative-evaluation-prdc}{https://github.com/clovaai/generative-evaluation-prdc}} package with $k=5$ \cite{naeem2020reliable}. For CIFAR-10 and CelebA we report metrics, reliant upon Inception v3 (Inception Score and Kernel Inception Distance), computed on 5000 (for CIFAR-10) and 1000 (for CelebA) generated samples and compared to the respective training dataset using the Python \texttt{torch-fidelity}\footnote{\href{https://github.com/toshas/torch-fidelity}{https://github.com/toshas/torch-fidelity}} package. IS and KID for MNIST and Fashion-MNIST are given in Tables \ref{tab:sbm mnist all} and \ref{tab:sbm fashion all} in Appendix~\ref{sec:tab_results}.

%For metrics reliant upon Inception v3, 5,000 samples were generated for all datasets except CelebA which   . Also, 

For HTDSM on MNIST with $\beta=1.5$, precision, recall, and coverage were found to improve over standard DSM. However, all other metrics did not improve and Inception-based metrics are markedly worse. Although this suggests lower perceptual quality, Figure~\ref{fig:sbm mnist unconditional} in Appendix~\ref{sec:samples} seems to refute this. In particular, Figure~\ref{fig:mnist_b-2.0_image_grid_200000} depicts Gaussian ALD failing to generate a single digit similar to a one, and the probability of the seventeen zeros occurring in the real dataset is less than $10^{-7}$, so problems persist. In Tables~\ref{tab:sbm mnists_main} and \ref{tab:sbm cifar_celeba_main}, $\beta\neq2.0$ dominates the majority of the metrics, demonstrating that non-Gaussian DSM is advantageous for certain datasets. Overall, $\beta<2$ is a promising direction for future research and larger-scale experiments, but $\beta=1.0$ suffered from convergence issues at the scale of CelebA. It is also possible to explore the effects of light-tailed DSM by setting $\beta=2.5$, as this corresponds to estimating a score function which is very large for high noise (see Figure~\ref{fig:gn score} for intuition). However, diffusion convergence was often found to be too quick, resulting in cartoon-like final images with strong features and no subtleties.

\begin{table*}[t]
    \centering
    \caption{DGM metrics for unconditional samples from a model trained with HTDSM, and sampled from using Gaussian ALD, for different values of $\beta$ on the MNIST and Fashion-MNIST datasets. Arrows indicate that higher ($\uparrow$) metric values are better.}
    \begin{adjustbox}{width=.8\textwidth}
    \begin{tabular}{lrrrr|rrrr}
        \toprule
        & \multicolumn{4}{c}{MNIST} & \multicolumn{4}{c}{Fashion-MNIST} \\
            \cmidrule(lr){2-5}
            \cmidrule(lr){6-9}
         & $\beta=1.0$ & $\beta=1.5$ & $\beta=2.0$ & $\beta=2.5$ & $\beta=1.0$ & $\beta=1.5$ & $\beta=2.0$ & $\beta=2.5$ \\
        \midrule
        Precision $\uparrow$ & 0.9417 & \textbf{0.9244} & 0.912 & 0.894 & 0.1244 & 0.922 & 0.884 & \textbf{0.9230} \\
        Recall $\uparrow$ & 0.8634 & 0.9023 & \textbf{0.936} & 0.905 &\textbf{ 0.9592} & 0.765 & 0.787 & 0.7754 \\
        Density $\uparrow$ & \textbf{0.9869} & 0.9210 & 0.867 & 0.849 & 0.0355 & 1.541 & 1.406 & \textbf{1.600} \\
        Coverage $\uparrow$ & \textbf{0.9112} & 0.7816 & 0.780 & 0.733 & 0.0351 & \textbf{0.651} & 0.587 & 0.5962 \\
        \bottomrule
    \end{tabular}
    \end{adjustbox}
    \label{tab:sbm mnists_main}
\end{table*}

%\footnote{N.B. The high-dimensional experiments in this section were severely hampered by a lack of stability in the original implementation of DSM, which will hopefully be overcome in future. Results are presented for the best models achieved before undefined values occurred in training.}.\\

% \begin{table*}
%     \centering
%      \caption{{\color{orange} UPDATES} DGM metrics for unconditional samples from a model trained with HTDSM, and sampled from using Gaussian ALD, for different values of $\beta$ on the Fashion-MNIST dataset.}
%     \label{tab:sbm fashion}
%     \begin{tabular}{lrrrr}
%         \toprule
%          & $\beta=1.0$ & $\beta=1.5$ & $\beta=2.0$ & $\beta=2.5$ \\
%         \midrule
%         Precision $\uparrow$ & 0.1244 & 0.922 & 0.884 & 0.9230 \\
%         Recall $\uparrow$ & 0.9592 & 0.765 & 0.787 & 0.7754 \\
%         Density $\uparrow$ & 0.0355 & 1.541 & 1.406 & 1.600 \\
%         Coverage $\uparrow$ & 0.0351 & 0.651 & 0.587 & 0.5962 \\
%         \midrule
%         IS $\uparrow$ & $3.120\pm0.084$ & $\V{3.790\pm0.090}$ & $3.646\pm0.109$ & $3.591\pm0.120$ \\
%         KID $\downarrow$ & $0.146\pm0.002$ & $\V{0.020\pm0.001}$ & $0.023\pm0.002$ & $0.027\pm0.002$ \\
%         FID $\downarrow$ & 146.803 & \textbf{33.776} & 37.919 & 43.866 \\
%         \bottomrule
%     \end{tabular}
% \end{table*}

\begin{table*}[t]
    \centering
     \caption{DGM metrics for unconditional samples from a model trained with HTDSM, and sampled from using Gaussian ALD, for different values of $\beta$ on the Cifar-10 and CelebA datasets.}
    \begin{adjustbox}{width=.8\textwidth}
    \begin{tabular}{lrrr|rrr}
        \toprule
         & \multicolumn{3}{c}{Cifar-10} & \multicolumn{3}{c}{CelebA} \\
            \cmidrule(lr){2-4}
            \cmidrule(lr){5-7}
         & $\beta=1.5$ & $\beta=2.0$ & $\beta=2.5$ &  $\beta=1.5$ & $\beta=2.0$ & $\beta=2.5$ \\
        \midrule
        IS $\uparrow$ &  $\V{8.209\pm0.102}$ & $8.09\pm0.029$ & $7.026\pm0.074$ 
                      & $1.977\pm0.130$ & $2.070\pm0.125$ & $\V{2.111\pm0.09}$ \\
        KID $\downarrow$ & $0.009\pm0.001$ & $\V{0.007\pm0.001}$ & $0.016\pm0.001$
                         & $\V{0.095\pm0.001}$ & $0.117\pm0.001$ & $0.129\pm0.001$ \\
        %FID $\downarrow$ &  29.944 & \textbf{15.20} & 53.961
                        %& \textbf{96.545} & 110.72 & 121.30 \\
        \bottomrule
    \end{tabular}
    \end{adjustbox}
    \label{tab:sbm cifar_celeba_main}
\end{table*}

\begin{figure}[h]
    \centering
     \subfigure{\includegraphics[width=0.32\linewidth]{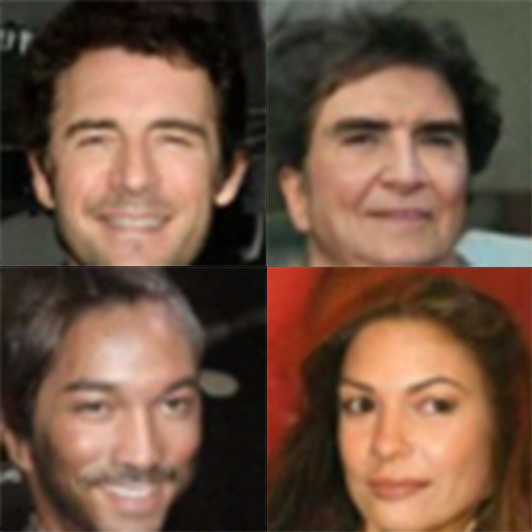}} \hfill%
    \subfigure{\includegraphics[width=0.32\linewidth]{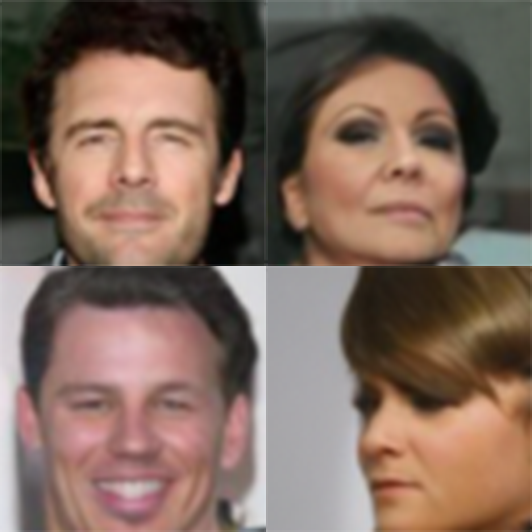}} \hfill%
    \subfigure{\includegraphics[width=0.32\linewidth]{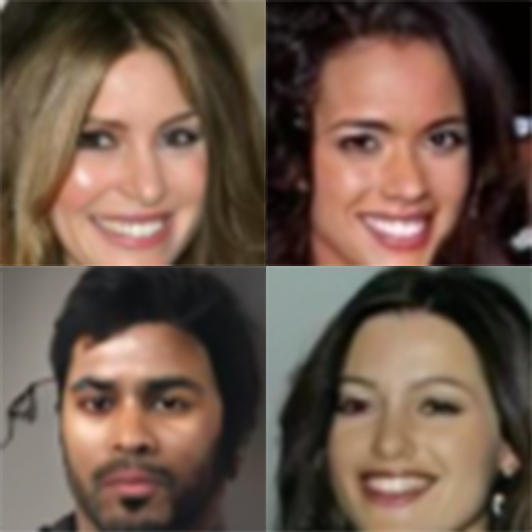}} \hfill%
    \caption{CelebA samples from HTDSM model trained with\\ $\beta=1.5,2,2.5$, respectively.}
    \label{fig:celeba_samples}

\end{figure}

\section{Conclusion}\label{sec:Chapter 4 - Summary}

We provided a thorough expansion of the theory behind DSM for SBMs. This foothold was used as the basis for novel theoretical expansion of DSM to heavy-tailed DSM, noising and denoising with the family of generalised normal distributions. Insight into the undesirable $n$-dimensional annuli in Gaussian DSM, as well as an understanding of the generalised normal score function for $\beta<2$, motivated the use of heavier tails. GN noise was then found to concentrate in a skewed distribution which prompted a general algorithm to choose a noise scaling sequence in Section~\ref{sec:Choosing a scale parameter sequence for arbitrary noise}.

Our examples demonstrated the propensity of ALD with Gaussian DSM to suffer from mode collapse. The $2D$ example outlined differences at both training and sampling time for DSM and HTDSM. The latter was shown to be a tenable alternative method of estimating a distribution's score function. Experiments suggested that heavy-tailed noise can always be scaled down to dampen sampling with jumps, whereas limitations such as class imbalances are inherent to the data. HTDSM with Gaussian diffusion offered the most general method of learning and sampling, balancing better gradients in low probability regions with well-behaved diffusion. HTDSM also continued to provide promising results when scaling to higher dimension datasets. $1<\beta<2$ appears to be a relatively stable type of GN noise which offers improved image generation across a range of metrics in Section~\ref{sec:High-dimensional unconditional generation}. Moreover, sub-Gaussian diffusion, causing L\'{e}vy-flight-like sampling paths, can overcome class imbalances, motivating extension to the continuous case (see Section~\ref{sec:Continuous extension to stochastic differential equations}), while sufficiently accurate and compensatory scores for these paths can only be estimated with HTDSM.

\clearpage
\bibliography{main_arxiv}
\bibliographystyle{icml2021}

%%%%%%%%%%%%%%%%%%%%%%%%%%%%%%%%%%%%%%%%%%%%%%%%%%%%%%%%%%%%%%%%%%%%%%%%%%%%%%%
%%%%%%%%%%%%%%%%%%%%%%%%%%%%%%%%%%%%%%%%%%%%%%%%%%%%%%%%%%%%%%%%%%%%%%%%%%%%%%%
% DELETE THIS PART. DO NOT PLACE CONTENT AFTER THE REFERENCES!
%%%%%%%%%%%%%%%%%%%%%%%%%%%%%%%%%%%%%%%%%%%%%%%%%%%%%%%%%%%%%%%%%%%%%%%%%%%%%%%
%%%%%%%%%%%%%%%%%%%%%%%%%%%%%%%%%%%%%%%%%%%%%%%%%%%%%%%%%%%%%%%%%%%%%%%%%%%%%%%
\clearpage

\appendix
\onecolumn

\section{Experimental setup}\label{sec:Experimental setup}

\subsection{Training}\label{sec:Training}

For benchmark image datasets, the experimental setup is described for reproducibility. Different neural network architectures were used for different datasets, with respective sizes positively correlated. For MNIST, Fashion-MNIST, and CIFAR-10 ($28\times28\times1$ or $32\times32\times3$), the ResNet architecture \cite{he2016deep} from \citet{song2020improved} is used in-line with the literature. In the case of CIFAR-10, one convolutional layer is added during spatial down and up-sampling to reflect the more complex data. Then, for any larger datasets, experiments also follow in-line with the literature, but are limited by lower levels of compute. NN architecture specifics and further training information are summarised in Appendix B of \citet{song2020improved}. We note that we were relatively compute constrained throughout experimentation and therefore assess relative, rather than absolute, performance.

When training each SBM, multiple noise levels were used to allow for ALD at sampling time. Each noise level adds generalised normal distribution noise to the image, scaled by the constant factored into the derivations in Section~\ref{sec:High dimensional noising}. For an input image and its sampled noise, the generalised normal score is calculated according to \eqref{eq:gn_score}, and set as the model target. The time complexity impact of sampling $\mathcal{GN}(\mu=x,\alpha=1,\beta)$ noise at scale is minimal, as the sampling procedure for each dimension is simply
\begin{align}
    \gamma&\sim\textrm{Gamma}\left(\textrm{shape}=1+1/\beta, \textrm{rate}=2^{-\beta/2}\right) \\
    \delta &= \alpha\gamma^{1/\beta}/\sqrt{2}\\
    \hat{x}&\sim\mathcal{U}(\mu-\delta,\mu+\delta),
\end{align}
following \citet{choy2003extended}, where $\mathcal{U}$ denotes the uniform distribution.

At sampling time, in-line with \citet{song2019generative}, images are initialised by a uniform distribution over the pixels, before ALD iteration begins. ALD (see Algorithm~\ref{alg:Anneal langevin dynamics}), uses the multiple noise levels ${\sigma_{1},\ldots,\sigma_{k}}$ from training time, with a learning rate proportional to the ratio of the squared current noised level to the squared maximum noise level. Each noise level iterates for step limits ranging from 10 to 500, depending on both the dataset and the value of $\beta$---the latter due to the low absolute score values for distant noise when $\beta<1.5$ (see Figure~\ref{fig:gn score}).

\subsection{Langevin dynamics}\label{sec:Langevin dynamics}
Samples are generated using score estimates to ascend the gradient for a given input. Due to the monotonic nature of the logarithm function, iteratively following the direction of the largest score estimate is equivalent to performing gradient ascent on the data distribution. As such, multiple iterative optimisation algorithms are available for use at this stage.

The procedure of choice, Langevin Monte Carlo (LMC, \cite{besag1994comments}), is a Markov Chain Monte Carlo (MCMC) method for obtaining random samples from probability distributions for which direct sampling is difficult. The goal is to follow the gradient but add a bit of noise so as to not get stuck at local optima, explore the entire distribution, and sample from it.

% The LMC sampling procedure for $p(\V{x})$, using only $\nabla_{\V{x}}\log p(\V{x})$ and remaining uncorrected as Langevin Dynamics (LD), is summarised in Algorithm~\ref{alg:Langevin dynamics}. The major assumptions of note here are the standard choice of Gaussian noise in LD, which can be replaced with heavier tailed noise sources \cite{csimcsekli2017fractional}, as well as the first-order nature of the iteration. In the last few years, the convergence rates of higher-order schemes of LD have been formalised \cite{cheng2018sharp,mou2019high} and their integration with score matching looks to be an interesting avenue of research beyond the scope of this paper.

\begin{algorithm}
\caption{Langevin dynamics.}\label{alg:Langevin dynamics}
\begin{algorithmic}
\STATE{\textbf{Input}} Hyperparameters of prior $\pi(\V{x})$, step size $\varepsilon\ll1$, step limit $T$, and initialised $\Tilde{\V{x}}_{0}\sim\pi(\V{x})$.
    \FOR{$t=1\ldots T$}
        \STATE {\small Sample:\\ $\V{z}_{t}\sim\mathcal{N}(0,I)$
        $\Tilde{\V{x}}_{t} = \Tilde{\V{x}}_{t-1} + \varepsilon\nabla_{\V{x}}\log p\left(\Tilde{\V{x}}_{t-1}\right) + \sqrt{2\varepsilon}\V{z}_{t}$}
    \ENDFOR
\end{algorithmic}
\end{algorithm}

\begin{algorithm}
\caption{Annealed Langevin dynamics.}\label{alg:Anneal langevin dynamics}
\begin{algorithmic}
\STATE{\textbf{Input}} Gaussian noise scaling factors $\{\sigma_{1},\ldots,\sigma_{k}\in\mathbb{R}_{+} \ s.t. \ \sigma_{1}>\ldots>\sigma_{k}\}$, and parameters to run LD.
    \FOR{$i=2\ldots k$}
        \STATE {\small Run LD$_{i}$ with noise level $\sigma_{i}$ starting from the result of LD$_{i-1}$}
    \ENDFOR
\end{algorithmic}
\end{algorithm}

\subsection{Deep generative model metrics.}\label{sec:DGM metrics}
Justification of metrics providing a reasonable and consistent evaluation of images synthesised by generative models is far from a solved problem. Current popular metrics often make use of the final or penultimate layer activations of a heavily-trained convolutional network, such as the Inception v3 model of \citet{szegedy2015going}. Despite obvious bias toward generative models trained on similar datasets and in similar manners, as well as the plethora of more performant models since Inception v3 was trained in 2015, these metrics persist, should be used for comparison with the literature, and are now described. Throughout this paper, 5,000 samples are used to calculate each DGM metric.

When comparing generated samples to one another, the \textit{Inception score} \cite{salimans2016improved}
\begin{align}
    \textrm{IS} = \exp(\mathbb{E}_{\V{x}\sim\mathcal{G}}[\textrm{KL}(p(\V{y}|\V{x})\parallel p(\V{y}))]),
\end{align}
intuitively rewards low entropy classification of generated samples ($\V{x}\sim\mathcal{G}$), as well as variation. Alternatively, the popular \textit{Fr\'{e}chet inception distance} (FID, \cite{heusel2017gans}) compares Inception v3 activation statistics between generated samples and the samples used to train the generative model, requiring thousands of new samples at evaluation time.
\begin{align}
    \textrm{FID} = \lVert\mu_{\mathcal{R}}-\mu_{\mathcal{G}}\rVert_{2}^{2} + \textrm{tr}\left(\Sigma_{\mathcal{R}} + \Sigma_{\mathcal{G}} - 2(\Sigma_{\mathcal{R}}\Sigma_{\mathcal{G}})^{1/2}\right),
\end{align}
where these values are activation statistics and $\mathcal{R}$ refers to the real dataset. The FID approach was taken further by \textit{kernel inception distance} (KID, \cite{binkowski2018demystifying})
\begin{align}
    k(\V{x}_{1},\V{x}_{2}) &= \left(\V{x}_{1}^{T}\V{x}_{2}/n+1\right)^{3}\\
    K(\V{x}_{1},\V{x}_{2}) &= k(\phi_{\textrm{\ I-v3}}(\V{x}_{1}),\phi_{\textrm{\ I-v3}}(\V{x}_{2})),
\end{align}
where $\phi_{\textrm{\ I-v3}}(\cdot)$ maps to an $nD$ Inception v3 layer, the cubic exponent accounts for skew, crucially no parametric form for the distribution is assumed, and an average over all real-fake pairs is taken.

An alternative approach to sample quality assessment is to directly calculate distribution overlap. In \citet{sajjadi2018assessing}, the authors used local $n$-balls to form high-dimensional equivalents of \textit{precision} and \textit{recall}, avoiding pathological examples of models with equal FID but visually juxtaposed sample quality. This idea was later extended to the more localised and precise \textit{density} and \textit{coverage} metrics of \citet{naeem2020reliable}, where neighbourhoods are instead built from the $k$ nearest neighbours. The mathematical definitions of these concepts can be found in \citet{naeem2020reliable}.

\section{Derivations}

\subsection{Unintuitive high-dimensional statistics}\label{sec:Unintuitive high-dimensional statistics}

To demonstrate unintuitive statistical behaviour in high-dimensional space, consider two classic examples:
\begin{itemize}
    \item For an $iid$ random vector $\mathbf{X}=[X_{1},\ldots,X_{n}]^{\textrm{T}}$ with $X_{i}\sim U(a,b)$, the majority of probability mass resides in the corners of the hypercube for high $n$.
    \item For an $iid$ random vector $\mathbf{X}=[X_{1},\ldots,X_{n}]^{\textrm{T}}$ with $X_{i}\sim \mathcal{N}(\mu,\sigma)$, the probability mass concentrates in a thin annulus (shell).
\end{itemize}
Both of these phenomena are instances of the \textit{concentration of measure}---the principle that a random variable that depends in a Lipschitz way on many independent variables is essentially constant \cite{talagrand1996new}. As the latter spherical case motivates this work, it is now explored in full.

Consider the \textit{squared} $L^{2}$ \textit{norm distribution} of the isotropic Gaussian vector
\begin{align}
    Y = \lVert\mathbf{X}\rVert_{2}^{2},
\end{align}
which, by independence, gives
\begin{align}
    Y = \sum\limits_{i=1}^{n}X_{i}^{2} = nX_{i}^{2}.
\end{align}
Changing random variables
\begin{align}
    F_{Y}(y) &= P(Y\leq y)\\
    &= P(nX_{i}^{2}\leq y)\\
    &= P\left(|X_{i}|\leq\sqrt{\frac{y}{n}}\right)\\
    &= \Phi\left(\sqrt{\frac{y}{n}}\right) - \Phi\left(-\sqrt{\frac{y}{n}}\right),
\end{align}
where $\Phi$ is the Gaussian CDF, and differentiating gives the PDF of $Y$
\begin{align}
    f_{Y}(y) = F_{Y}'(y) &= \frac{1}{2\sqrt{ny}}\phi\left(\sqrt{\frac{y}{n}}\right) - \frac{1}{2\sqrt{ny}}\phi\left(-\sqrt{\frac{y}{n}}\right)\\
    &= \frac{1}{\sqrt{ny}}\phi\left(\sqrt{\frac{y}{n}}\right)\\
    &= \frac{1}{\sqrt{ny}}\frac{1}{\sqrt{2\pi}}\exp\left\{-\frac{y}{2n}\right\},
\end{align}
where $\phi$ is the Gaussian PDF. Rewriting this expression and using $\sqrt{\pi}=\Gamma(1/2)$
\begin{align}
    f_{Y}(y) &= \frac{1}{\sqrt{\pi}\sqrt{2n}}y^{-\frac{1}{2}}\exp\left\{-\frac{y}{2n}\right\}\\
    &= \frac{1}{\Gamma(1/2)(2n)^{\frac{1}{2}}}y^{\frac{1}{2}-1}\exp\left\{-\frac{y}{2n}\right\},
\end{align}
recovers the gamma distribution PDF
\begin{align}
    W\sim\textrm{Gamma}(k,\theta)\implies f(w;k,\theta)=\frac{1}{\Gamma(k)\theta^{k}}w^{k-1}e^{-\frac{k}{\theta}},
\end{align}
and demonstrates that $Y$ is distributed as either of
\begin{align}
    Y\sim\textrm{Gamma}\left(k=\frac{1}{2},\theta=2n\right) = n\textrm{Gamma}\left(\frac{1}{2},2\right),
\end{align}
and therefore follows either of the chi-squared distributions
\begin{align}
    Y\sim \chi^{2}(n) = n\chi^{2}(1),
\end{align}
which approaches a Gaussian distribution centred at $n$ in the limit $n\to\infty$. See Section~\ref{sec:Comparison of concentration moments} for a full description of the moments and Figure~\ref{fig:chi-squared} for a visualisation of the chi-squared distribution for increasing degrees of freedom.

\subsection{Scale parameter sequences for arbitrary noise distributions}\label{sec:Choosing a scale parameter sequence for arbitrary noise}

It is now apparent that the generalised normal distribution can be used for denoising score matching, leading to thicker concentric annuli. However, the non-zero skew in \eqref{eq:chi2 skew}, representing the asymmetric norm distribution for low $n$, cannot necessarily be ignored. This asymmetry implies that the spacing of noise levels using variance in \cite{song2020improved}, is inaccurate.

In particular, the probability mass in the left/right tail of one noise level annulus will be larger than the probability mass in the right/left tail, respectively, of the adjacent annulus. In practical terms, this means overly-dense concentric noise levels in ALD. Although the time needed to sample each noise-level per training iteration will not be affected, this will increase overall training time, sampling (generation) time, and render some sampling steps redundant. As the same problem extends to the generalised noise characterised in this paper, it is vital to examine the skew of the norm distribution from \eqref{eq:generalised nomal length distribution}.

To begin, note the $r$th raw moment of $Y\sim\mathcal{GG}(a,d,p)$ is
\begin{align}
    \mathbb{E}\left[Y^{r}\right]=a^{r}\frac{\Gamma((d+r)/p)}{\Gamma(d/p)},
\end{align}
implying that, for $\lVert\mathbf{X}\rVert_{2}^{2}=Y\sim~\mathcal{GG}(a=n,d=1/2,p=\beta/2)$
\begin{align}
    \mathbb{E}\left[Y^{3}\right] = n^{3}\frac{\Gamma(7/\beta)}{\Gamma(1/\beta)}.
\end{align}
Then, using the $3^{rd}$ central moment expansion for skew
\begin{align}
    \textrm{Skew}(Y) &= \mathbb{E}\left[\left(\frac{Y-\mu}{\sigma}\right)^{3}\right]\\
    &= \frac{1}{\sigma^{3}}\left(\mathbb{E}\left[Y^{3}\right] - 3\mu\mathbb{E}\left[Y^{2}\right] + 3\mu^{2}\mathbb{E}[Y] - \mu^{3}\right)\\
    &= \frac{1}{\sigma^{3}}\left(\mathbb{E}\left[Y^{3}\right] - 3\mu\sigma^{2} - \mu^{3}\right)\\
    &= \frac{1}{n^{3}C_{2}^{\frac{3}{2}}}\left\{n^{3}\frac{\Gamma(7/\beta)}{\Gamma(1/\beta)} -3nC_{1}n^{2}C_{2} - n^{3}C_{1}^{3}\right\}\\
    &= \frac{1}{C_{2}^{\frac{3}{2}}}\left\{\frac{\Gamma(7/\beta)}{\Gamma(1/\beta)} -3C_{1}C_{2} - C_{1}^{3}\right\},
\end{align}
where $C_{1}$ and $C_{2}$ are defined as in \eqref{eq:mean_scaling} and \eqref{eq:var_scaling} respectively. It is interesting to note that this skew expression is constant with respect to dimension.

The tangible Laplace noising, $\beta=1$, case can now be exemplified, as
\begin{align}
    C_{1} &= \frac{\Gamma(3)}{\Gamma(1)} = 2 \\
    C_{2} &= \frac{\Gamma(5)}{\Gamma(1)} - \left(\frac{\Gamma(3)}{\Gamma(1)}\right)^{2} = 20,
\end{align}
and it is clear that
\begin{align}
    \textrm{Skew}(Y;\beta=1) &= 20^{-3/2}\left(720 - 3\times2\times20 - 2^{3}\right)\\
    &= 74/\sqrt{5}^{3}\\
    &\approx 6.19,
\end{align}
so the resulting distribution is very positively skewed.

Despite the disappointing prospects of this result, the potential for large asymmetric annulus overlap, it also motivates a better understanding of the general case. How can concentric annuli be constructed with equal overlapping probability mass?

To motivate the general algorithm, assess the case where the generalised normal noise is scaled by an arbitrary noise level $\sigma_{i}$.
\begin{align}
    X_{i}/\sigma_{i}\sim\mathcal{GN}(\mu=0,\alpha=1,\beta)
    \implies X_{i}\sim\mathcal{GN}(0, \sigma_{i},\beta),
\end{align}
it is, therefore, true that for GN noise vector $\V{X}$
\begin{align}
    \lVert \V{X}\rVert_{2}^{2}=Y\sim\mathcal{GG}\left(n\sigma_{i}^{2},1/2,\beta/2\right). \label{eq:length dist for noise level}
\end{align}
In an ascending sequence of noise where the goal is to calculate $\sigma_{i+1}$ from $\sigma_{i}$ with a given probability mass overlap, the quantile function of the norm distribution must then be used by inverting the corresponding CDF. Here, the CDF is
\begin{align}
    F_{\mathcal{GG}}(x;a,d,p) = \frac{\gamma\left(d/p,(x/a)^{p}\right)}{\Gamma(d/p)}, \label{eq:gg cdf}
\end{align}
where $\gamma(\cdot)$ is the \textit{lower incomplete gamma function}
\begin{align}
    \gamma(s,x) = \int_{0}^{x} t^{s-1}e^{-t}dt.
\end{align}
Although \eqref{eq:gg cdf} appears difficult to invert, due to the \textit{inverse of composite functions}, the quantile function for quantile $q$ follows as
\begin{align}
    F_{\mathcal{GG}}^{-1}(q;a,d,p) = a\left[G^{-1}(q)\right]^{1/p}, \label{eq:gg quantile function}
\end{align}
where
\begin{align}
    G(x)=F_{\textrm{Gamma}}\left(x;\alpha'=d/p,\beta'=1\right) &= \frac{\gamma(\alpha',\beta' x)}{\Gamma(\alpha')}\\
    &= \frac{\gamma(d/p,x)}{\Gamma(d/p)}, \label{eq:gamma cdf}
\end{align}
a scaled Gamma distribution CDF (Greek letters here are for the \textit{standard} Gamma distribution), and the form $\gamma(c_{1},c_{2}x)/\Gamma(c_{1}),\ c_{1},c_{2}\in\mathbb{R}_{+},$ is known as the \textit{regularised gamma function}.

Finally, substituting \eqref{eq:gamma cdf} into \eqref{eq:gg quantile function}, gives
\begin{align}
    F_{\mathcal{GG}}^{-1}(q;a,d,p) = a\left(\left[\frac{\gamma(d/p,q)}{\Gamma(d/p)}\right]^{-1}\right)^{1/p},
\end{align}
before substituting \eqref{eq:length dist for noise level} as well provides
\begin{align}
    F^{-1}_{Y}(q) = n\sigma_{i}^{2}\left(\left[\frac{\gamma(1/\beta,q)}{\Gamma(1/\beta)}\right]^{-1}\right)^{2/\beta}.
\end{align}

After these steps, for an example overlap of 5\% of probability mass, it is now possible to say
\begin{align}
    q_{i,0.95} = n\sigma_{i}^{2}\left(\left[\frac{\gamma(1/\beta,0.95)}{\Gamma(1/\beta)}\right]^{-1}\right)^{2/\beta},
\end{align}
is the upper quantile for $\sigma_{i}$. Crucially, this expression can be inverted to obtain $\sigma_{i+1}$ by setting the next lower bound equal to the current upper bound, $q_{i+1,0.05}=q_{i,0.95}$ and inverting
\begin{align}
    \sigma_{i+1} = \sqrt{\frac{q_{i,0.95}}{n}}\left(\left[\frac{\gamma(1/\beta,0.05)}{\Gamma(1/\beta)}\right]^{-1}\right)^{-1/\beta}.
\end{align}
These last two equations outline a general procedure for consecutive noise levels with equal distribution overlap, detailed fully in Algorithm~\ref{alg:noise scales}. Of practical significance is the Python function \texttt{scipy.special.gammaincinv} which numerically estimates the troublesome inverse regularised gamma function to arbitrary precision \cite{gil2012efficient}.

\begin{algorithm}
\caption{Scale parameter sequence generation}\label{alg:noise scales}
\begin{algorithmic}
\STATE{\textbf{Input}} Fixed hyperparameters of piecewise log-differentiable noise distribution, non-overlapping distribution proportion $\delta \in (0, 1)$, small initial noise level $\sigma_{1} > 0$, and large final noise level $\sigma_{\max}$.
\STATE{\textbf{Initialise}} $i=1$, $q_{i}^{l}=0$, and $q_{i}^{u}=0$.
    \WHILE{$q_{i}^{u}<\sigma_{\max}$}
        \STATE {Calculate upper quantile $q_{i}^{u}=Q_{norm}\left(\sigma_{i},\frac{1+\delta}{2}\right)$ of $nD$ norm distribution}
        \STATE {Calculate scaling needed to equate to lower quantile $q_{i+1}^{l}=q_{i}^{u}$}
        \STATE {$\sigma_{i+1} = Q_{norm}^{-1}\left(\sigma_{i},\frac{1-\delta}{2}\right)$}
        \STATE {$i = i+1$}
    \ENDWHILE
\end{algorithmic}
\end{algorithm}

\subsection{Continuous extension to stochastic differential equations}\label{sec:Continuous extension to stochastic differential equations}

Given the success of multiple noise scales in Gaussian ALD, recent SBM continuations have considered infinitely many noise levels, such that the perturbed data distributions evolve according to a stochastic differential equation (SDE). The goal is to construct a diffusion process $\{\V{x}(t)\}_{t=0}^{T}$, $t\in[0,T]$, such that $\V{x}(0)\sim p_{0}$ is the dataset of i.i.d. samples, and $\V{x}(T)\sim p_{T}$ is the prior distribution, with a tractable form to generate samples efficiently. This diffusion process can be modelled as the solution to an It\^{o} SDE\footnote{A full description of the methods for calculus on stochastic processes, the foremost being It\^{o} and Stratonovich calculus, can be found in \cite{sarkka2019applied}.}
\begin{align}
    d\V{x} = \V{f}(\V{x},t)dt + g(t)d\V{w}, \label{eq:diffusion process}
\end{align}
where $\V{w}$ is the standard Wiener process (Brownian motion), $\V{f}(\cdot,t):\mathbb{R}^{n}\to\mathbb{R}^{n}$ is a vector-valued function called the \textit{drift} coefficient of $\V{x}(t)$, and $g(\cdot):\mathbb{R}\to\mathbb{R}$ is a scalar function known as the \textit{diffusion} coefficient of $\V{x}(t)$. Here, the diffusion coefficient is assumed to be a scalar (instead of a $d\times d$ matrix) and does not depend on $\V{x}$. The SDE has a unique strong solution as long as the coefficients are globally lipschitz in both state and time \cite{oksendal2003stochastic}. Henceforth, the
probability density of $\V{x}(t)$ is denoted by $p_{t}(\V{x})$, and $p_{st}(\V{x}(t)|\V{x}(s))$ denotes the transition kernel from $\V{x}(s)$ to $\V{x}(t)$, where $0\leq s<t\leq T$. Typically, $p_{T}$ is an unstructured prior distribution that contains no information about $p_{0}$.

It is possible to start from samples of $\V{x}(T)\sim p_{T}$ and reverse the process to obtain samples from $\V{x}(0)\sim p_{0}$. The main result in \citet{anderson1982reverse} states that the reverse of a diffusion process is a also a diffusion process running backwards in time and given by the reverse-time SDE
\begin{align}
    d\V{x} = \left[\V{f}(\V{x},t) - g(t)^{2}\nabla_{\V{x}}\log p_{t}(\V{x})\right]dt + g(t)d\V{\Bar{w}}, \label{eq:reverse diffusion process}
\end{align}
where $\V{\Bar{w}}$ is a reverse-time Wiener process and $\nabla_{\V{x}}\log p_{t}(\V{x})$ is estimated by
\begin{align}
    \theta^{*} = \argmin_{\theta} \mathbb{E}_{t}\left\{\lambda(t)\mathbb{E}_{\V{x}(0)}\mathbb{E}_{\V{x}(t)|\V{x}(0)}\left[\lVert s_{\theta}(\V{x}(t),t) - \nabla_{\V{x}(t)}\log p_{0t}(\V{x}(t)|\V{x}(0))\rVert_{2}^{2}\right]\right\},
\end{align}
for $\lambda:[0,T]\to\mathbb{R}_{+}$ a positive weighting function, $t\sim\mathcal{U}(0,T)$, $\V{x}(0)\sim p_{0}(\V{x})$, and $\V{x}(t)\sim p_{0t}(\V{x}(t)|\V{x}(0))$. The overall process was given the general name score matching Langevin dynamics (SMLD) in \citet{song2020score}.

When using $N$ noise scales, each perturbation kernel $p_{\sigma_{i}}(\V{x}|\V{x}_{0})$ of SMLD can be derived from the Markov chain
\begin{align}\label{eq:discrete SMLD}
    \V{x}_{i} = \V{x}_{i-1} + \sqrt{\sigma_{i}^{2} - \sigma_{i-1}^{2}}\V{z}_{i-1},
\end{align}
where $i=1,\ldots,N$ and $\V{z}_{i-1}\sim\mathcal{N}(\V{0},\V{I})$, $\V{x}_{0}\sim p_{\textrm{data}}$, and $\sigma_{0}=0$ is used to simplify notation. Whereas \citet{song2020score} proceed with Gaussian noise, the continuation with sub-Gaussian noise is now assessed.

To begin, let the elements of $\V{l}_{i-1}$ follow a sub-Gaussian distribution. Then let $\V{x}(i/N)=\V{x}_{i}$, $\sigma(i/N)=\sigma_{i}$, and $\V{l}(i/N)=\V{l}_{i}\ \forall i$. With $\Delta t=1/N$, it is then possible to write
\begin{align}
    \V{x}(t+\Delta t) &= \V{x}(t) + \sqrt{\sigma^{2}(t+\Delta t)-\sigma^{2}(t)}\ \V{l}(t)\\
    &\approx \V{x}(t) + \sqrt{\frac{d\left[\sigma^{2}(t)\right]}{dt}\Delta t}\ \V{l}(t),
\end{align}
where the approximate equality holds when $\Delta t\ll1$. In the limit $\Delta t\to0$, this converges to
\begin{align}
    d\V{x} = \sqrt{\frac{d\left[\sigma^{2}(t)\right]}{dt}}\ d\bm{\ell}(t), \label{eq:vesde non-Gaussian}
\end{align}
where $\bm{\ell}(t)$ is a L\'{e}vy process, rather than the Wiener process $\V{w}(t)$ of \citet{song2020score} which can be solved in closed-form as an affine Brownian motion SDE.

The addition of $\bm{\ell}$ to the more formal version of the diffusion process in \eqref{eq:diffusion process} gives
\begin{align}
    \V{x}(t) = \int_{0}^{t}\V{f}(\V{x},s)ds + \int_{0}^{t}g(s)d\bm{\ell},
\end{align}
where the latter term can be interpreted as
\begin{align}
    \lim_{\Delta t\to 0}\left[\sum\limits_{i}g(t_{i})(\bm{\ell}(t_{i}+\Delta t)-\bm{l}(t_{i}))\right].
\end{align}
The sum formulation makes it clear that, for any infinitely divisible distribution\footnote{$F$ is infinitely divisible if $\forall n\in\mathbb{N},\ \exists\ n$ i.i.d. RVs s.t. $\sum\limits_{i=1}^{n}X_{i}=S$ and $S$ has the same distribution as $F$.} which sums to itself (e.g. Gaussian, Laplace, and the previously discarded Cauchy), the final distribution will be in the same family.

Therefore, it is expected that the solution to \eqref{eq:vesde non-Gaussian} describes a process which would diffuse to the underlying stable (infinitely divisible) distribution. In the context of SMLD, this means that the prior $p_{T}(\V{x})$ need not be Gaussian and can be heavy tailed.

Unfortunately, to reverse the diffusion, it is necessary to investigate the general, non-Brownian, form of the Kolmogorov backward equations, in an analysis beyond that of \citet{song2020score} and this paper. Instead, several practical remarks are made to finish the theory of this work.

Firstly, it is of note that the generalised normal distribution considered in this chapter is infinitely divisible for $\beta\in(0,1]\cup \{2\}$ \cite{dytso2018analytical}. This result is interesting because the analysis in Section~\ref{sec:Comparison of concentration moments} suggests that $\beta<1$ suffers from explosive and unwieldy norm distribution moment coefficients, yet this region may be of theoretical intrigue for continuous HTDSM. Secondly, there exist several connections between Brownian motion and heavier-tailed diffusion through subordination---letting time evolve according to a stochastic process within another stochastic process. The prime examples of this are variance gamma (VG) processes, which can be written as a Brownian motion $W(t)$ with drift $\theta t$, subject to a random time change that follows a gamma process $\Gamma(t;1,\nu)$
\begin{align}
    X^{VG}(t;\sigma,\nu,\theta) = \theta\Gamma(t;1,\nu) + \sigma W(\Gamma(t;1,\nu)), \label{eq:variance gamma}
\end{align}
where $\sigma$ is a scale parameter and $\nu$ controls the time dilation. In particular, when $\nu=1$, a VG process is equivalent to the continuous version of the $\beta=1$ GN noise considered in this chapter. Future work may be able to use the backward Kolmogorov equation on the time-dilated Wiener process to form an ordinary differential equation describing the reverse evolution from a heavy-tailed prior to the data distribution.

\subsection{Proof of Theorem~\ref{thm:piecewise differentiable}}\label{sec:Proofs}

The relevant assumptions from \cite{hyvarinen2005estimation} are:
\begin{itemize}
    \item[\textbf{\#1}] The PDF $p(\V{x})$ is differentiable.
    \item[\textbf{\#2}] $\mathbb{E}_{p(x)}\left[\norm[\bigg]{\frac{\partial\log p(\V{x})}{\partial\V{x}}}^{2}\right]$ is finite.
    \item[\textbf{\#3}] For any $\theta$:
    \begin{itemize}
        \item[\textbf{A}] $\mathbb{E}_{p(\V{x})}\left[\lVert s_{\theta}(\V{x})\rVert^{2}\right]$ is finite.
        \item[\textbf{B}] $\lim\limits_{\lVert\V{x}\rVert\to\infty}\left[p(\V{x})s_{\theta}(\V{x})\right]=0$.
    \end{itemize}
\end{itemize}

\begin{proof}
Expanding \eqref{eq:esm} gives
\begin{align}
    \mathcal{J}_{ESMp}(\theta) &= \int_{\V{x}\in\mathbb{R}^{n}} p(\V{x}) \left[\frac{1}{2}\underbrace{\lVert\nabla_{\V{x}}\log p(\V{x})\rVert_{2}^{2}}_{\textrm{\ding{172}}} + \frac{1}{2}\underbrace{\lVert s_{\theta}(\V{x})\rVert_{2}^{2}}_{\textrm{\ding{173}}} - \underbrace{\nabla_{\V{x}}\log p(\V{x})^{T}s_{\theta}(\V{x})}_{\textrm{\ding{174}}} \right]d\V{x},
\end{align}
where \ding{172} can be ignored as it is constant, with no dependency on $\theta$. For the second term, expand
\begin{align}
    \textrm{\ding{173}} = \int_{\V{x}\in\mathbb{R}^{n}} p(\V{x}) \sum\limits_{i=1}^{n}(s_{\theta}(\V{x})_{i})^{2} d\V{x},
\end{align}
where $s_{\theta}(\V{x})_{i}$ is the $i^{\textrm{th}}$ component of the partial derivatives composing $s_{\theta}(\V{x})$, and let 
\begin{align}
    s_{\theta}(\V{x})_{i} = \sum\limits_{j} s_{\theta}(\V{x})_{i,j},
\end{align}
where $j$ indexes a countable sequence of intervals partitioning the real line (except for points of zero measure). Also let each $s_{\theta}(\V{x})_{i,j}$ be differentiable inside its corresponding interval and zero outside, permitting the derivation of
\begin{align}
    \int_{\V{x}\in\mathbb{R}^{n}} p(\V{x}) \sum\limits_{i=1}^{n}(s_{\theta}(\V{x})_{i})^{2} d\V{x} &= \sum\limits_{i=1}^{n} \int_{\V{x}\in\mathbb{R}^{n}} p(\V{x}) \left(\sum\limits_{j}s_{\theta}(\V{x})_{i,j}\right)^{2} d\V{x} \\
    &= \sum\limits_{i=1}^{n} \int_{\V{x}\in\mathbb{R}^{n}} p(\V{x}) \sum\limits_{j}\left(s_{\theta}(\V{x})_{i,j}\right)^{2} d\V{x} \\
    &= \int_{\V{x}\in\mathbb{R}^{n}} p(\V{x}) \sum\limits_{i=1}^{n}\left(s_{\theta}(\V{x})_{i}\right)^{2} d\V{x}\\
    &= \int_{\V{x}\in\mathbb{R}^{n}} p(\V{x}) \lVert s_{\theta}(\V{x})\rVert_{2}^{2} d\V{x},
\end{align}
the first term of \eqref{eq:ism}, despite the differentiable almost everywhere formulation. This step of the proof simply shows that \textit{when integrating, the square of the sum of piecewise non-zero functions is equal to the sum of their squares}.\\

\ding{174} remains, and the proof will be complete if a differentiable almost everywhere equivalent of Lemma 4 in \cite{hyvarinen2005estimation} establishes a multivariate version of
\begin{align}
    \int p(x)(\log p)'f(x)dx = \int p(x)\frac{p'(x)}{p(x)}f(x)dx = \int p'(x)f(x)dx = \int p(x)f'(x)dx.
\end{align}

\begin{proposition}
    For $i=1$, without loss of generality (WLOG)
    \begin{align}
        \lim\limits_{a\to\infty,\ b\to-\infty}[f(a,x_{2},\ldots,x_{n})g(a,x_{2},\ldots,x_{n}) &- f(b,x_{2},\ldots,x_{n})g(b,x_{2},\ldots,x_{n})] \nonumber \\
        &= \int_{-\infty}^{\infty}f(x)\frac{\partial g(x)}{\partial x_{1}}d x_{1} + \int_{-\infty}^{\infty}g(x)\frac{\partial f(x)}{\partial x_{1}}d x_{1},
    \end{align}
    assuming that $f$ is differentiable and $g$ is differentiable \underline{almost everywhere}. 
\end{proposition}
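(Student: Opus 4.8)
The plan is to reduce the $n$-dimensional statement to the one-dimensional integration-by-parts identity in $x_1$ (with $x_2,\dots,x_n$ frozen) and to absorb the almost-everywhere differentiability of $g$ through the same piecewise decomposition already used above for term \ding{173}. I would read the hypothesis ``$g$ differentiable almost everywhere'' in the sense that actually occurs for the motivating example, the generalised normal score \eqref{eq:gn_score}: $x_1\mapsto g(x_1,x_2,\dots,x_n)$ is continuous, and its non-differentiability set is at most countable with at most finitely many points in each bounded interval, so $g$ is piecewise $C^1$. (Some such reading is unavoidable: a Cantor-type $g$ is differentiable a.e.\ yet not absolutely continuous, and then the fundamental theorem of calculus --- hence the claimed identity --- fails.) Fix $b<0<a$ and let $b=c_0<c_1<\cdots<c_M=a$ enumerate the endpoints together with the breakpoints of $g$ in $[b,a]$. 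On each open interval $(c_j,c_{j+1})$ both $f$ and $g$ are differentiable, so the product rule $\partial_{x_1}(fg)=f\,\partial_{x_1}g+g\,\partial_{x_1}f$ holds there, and $fg$ is $C^1$ on the interior and continuous on the closure, hence absolutely continuous on $[c_j,c_{j+1}]$.

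The next step is to apply the fundamental theorem of calculus on each piece and sum:
\begin{align}
\int_{b}^{a}\!\big(f\,\partial_{x_1}g+g\,\partial_{x_1}f\big)\,d x_1
&= \sum_{j=0}^{M-1}\int_{c_j}^{c_{j+1}}\!\partial_{x_1}(fg)\,d x_1 \\
&= \sum_{j=0}^{M-1}\big[(fg)(c_{j+1})-(fg)(c_j)\big].
\end{align}
Continuity of $fg$ across each interior breakpoint $c_1,\dots,c_{M-1}$ makes the right-hand sum telescope to $(fg)(a)-(fg)(b)$. Sending $a\to\infty$ and $b\to-\infty$ then produces the Proposition: the left side tends to $\int_{-\infty}^{\infty}f\,\partial_{x_1}g\,d x_1+\int_{-\infty}^{\infty}g\,\partial_{x_1}f\,d x_1$, and the right side is exactly $\lim_{a\to\infty,\,b\to-\infty}[(fg)(a)-(fg)(b)]$. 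Taking $f\leftarrow p$, $g\leftarrow$ the relevant component of $s_\theta$, invoking the hypothesised vanishing of the boundary term, and summing over coordinates, this is precisely the step that recovers the $\operatorname{tr}(\nabla_{\V{x}}s_\theta(\V{x}))$ term of $\mathcal{J}_{ISMp}$ in \eqref{eq:ism}; together with the treatment of \ding{173} above it completes the reduction of $\mathcal{J}_{ESMp}$ to $\mathcal{J}_{ISMp}$ and hence proves Theorem~\ref{thm:piecewise differentiable}.

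The main obstacle is the rigour behind the limit interchanges and the reassembly of the integral of a sum over the partition. I would dispatch integrability exactly as in \cite{hyvarinen2005estimation}: assumptions \textbf{\#2} and \textbf{\#3A} bound $\mathbb{E}_p[\norm{\nabla_{\V{x}}\log p}^2]$ and $\mathbb{E}_p[\norm{s_\theta}^2]$, and Cauchy--Schwarz then shows $\int|p'\,(s_\theta)_i|=\int p\,|(\log p)'|\,|(s_\theta)_i|$ is finite, so each integral in the display is absolutely convergent and the piecewise sum can be recombined by dominated convergence; the companion integrals involving $\partial_{x_1}s_\theta$ are the very quantities appearing in $\mathcal{J}_{ISMp}$ and are finite under the standing regularity of $s_\theta$. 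If one prefers to allow the breakpoints to accumulate, the finite partition is dropped and one instead uses that $g$ is absolutely continuous on $[b,a]$ (true for the generalised normal score), so a single application of the fundamental theorem of calculus to $fg$ on $[b,a]$ already gives the second line of the display. Existence of the $a\to\infty,\ b\to-\infty$ limit on the left of the Proposition is itself part of the hypothesis (the a.e.-differentiable analogue of \textbf{\#3B}), so nothing extra is needed there. The only genuinely new ingredient relative to Lemma~4 of \cite{hyvarinen2005estimation} is the observation that the interior boundary terms cancel by continuity of $fg$; on each smooth piece the argument is the classical one.
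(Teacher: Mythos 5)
Your proof is correct and follows essentially the same route as the paper's: decompose $g$ piecewise along $x_{1}$, apply the product rule on each differentiable piece, integrate, and let the interior boundary terms telescope to $(fg)(a)-(fg)(b)$. The only difference is that you make explicit what the paper's ``telescoping sum'' remark leaves implicit --- that the cancellation requires continuity of $g$ (hence of $fg$) at the breakpoints, and that bare almost-everywhere differentiability without this piecewise-$C^{1}$/absolute-continuity reading would not suffice (Cantor-function counterexample) --- which is a worthwhile sharpening rather than a different argument.
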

\begin{proof}
WLOG break $g(x)$ into piecewise differentiable and non-zero functions along the first dimension, $g(x)=\sum\limits_{j}g_{j}(x)$, defined in the interval $I_{j}$ and zero elsewhere. Then
\begin{align}
    \frac{\partial f(x)g(x)}{\partial x_{1}} &= f(x)\frac{\partial g(x)}{\partial x_{1}} + g(x)\frac{\partial f(x)}{\partial x_{1}} \\
    &= f(x) \frac{\partial}{\partial x_{1}}\left[\sum\limits_{j} g_{j}(x)\right] + \sum\limits_{j} g_{j}(x)\frac{\partial f(x)}{\partial x_{1}}, %\\
    % &= \sum\limits_{j} f(x) \frac{\partial g_{1,j}(x)}{\partial x_{1}} + \sum\limits_{j} g_{1,j}(x)\frac{\partial f(x)}{\partial x_{1}},
\end{align}
where all variables except $x_{1}$ can be fixed. Then, integrating over $x_{1}\in\mathbb{R}$,
\begin{align}
    [f(x)g(x)]_{-\infty}^{\infty} &= \int_{-\infty}^{\infty}f(x)\sum\limits_{j}\frac{\partial g_{j}(x)}{\partial x_{1}}dx_{1} + \sum\limits_{j} \int_{I_{j}} g_{j}(x)\frac{\partial f(x)}{\partial x_{1}}dx_{1} \\
    &= \int_{-\infty}^{\infty}f(x) \frac{\partial g(x)}{\partial x_{1}} dx_{1} + \int_{-\infty}^{\infty}g(x)\frac{\partial f(x)}{\partial x_{1}}dx_{1},
\end{align}
where the first term arises by construction and the second arises via a telescoping sum.
\end{proof}

This proposition allows for an equivalent to the final step in \cite{hyvarinen2005estimation}
\begin{multline}
    -\int \frac{\partial p_{\V{x}}(\V{x})}{\partial x_{1}}s_{\theta}(\V{x})d\V{x} = -\int \left[\int \frac{\partial p_{\V{x}}(\V{x})}{x_{1}}s_{\theta}(\V{x})dx_{1} \right]d(x_{2},\ldots,x_{n}) \\
    = -\int \bigg[\lim_{a\to\infty,b\to-\infty}[p_{\V{x}}(a,x_{2}\,\ldots,x_{n})s_{\theta}(a,x_{2}\,\ldots,x_{n}) \nonumber \\
    - p_{\V{x}}(b,x_{2}\,\ldots,x_{n})s_{\theta}(b,x_{2}\,\ldots,x_{n})] \nonumber\\
    - \int\frac{s_{\theta}(\V{x})}{\partial x_{1}} p_{\V{x}}(\V{x}) dx_{1} \bigg]d(x_{2},\ldots,x_{n}).
\end{multline}
The choice of $i=1$ is arbitrary and the limit is zero by assumption, therefore proving
\begin{align}
    -\int_{-\infty}^{\infty} p_{\V{X}}(\V{x})\frac{\partial\log p_{\V{X}}(\V{x})}{\partial x_{i}}s_{\theta}(\V{x})_{i}dx_{i} = \int \frac{\partial s_{\theta}(\V{x})_{i}}{\partial x_{i}} p_{\V{x}}(\V{x})dx_{i},
\end{align}
returns the $i^{\textrm{th}}$ component \ding{174}, which is summed to form the trace.
\end{proof}

\subsection{Derivation figures}\label{sec:Additional figures}

\begin{figure}[H]
    \centering
    \includegraphics[width=0.3\linewidth]{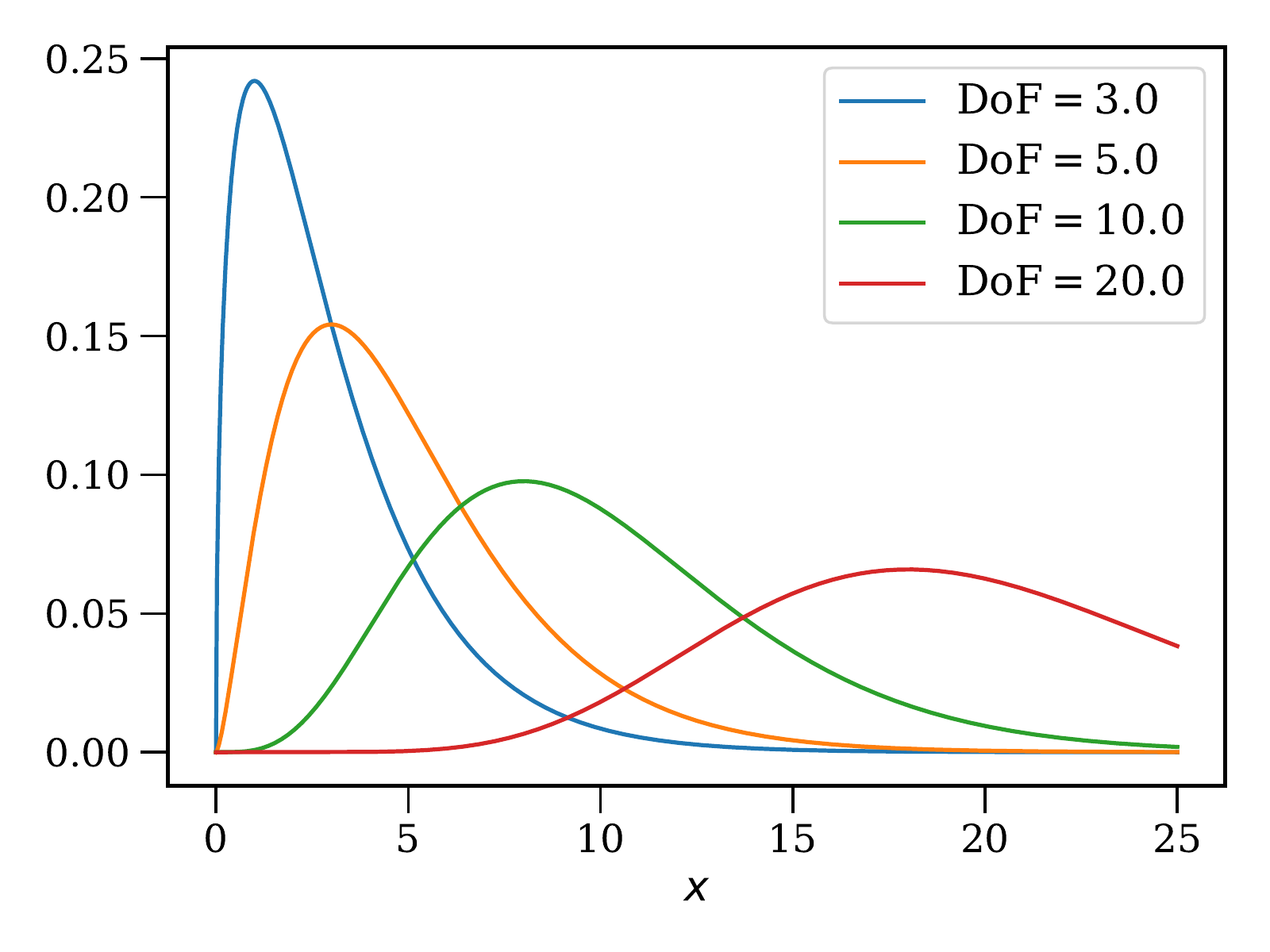}
    \caption{The chi-squared distribution for different degrees of freedom (DoF).}
    \vspace{-10pt}
    \label{fig:chi-squared}
\end{figure}

\begin{figure}[H]
    \centering
    \subfigure[Linear scale.]{\includegraphics[width=0.3\linewidth]{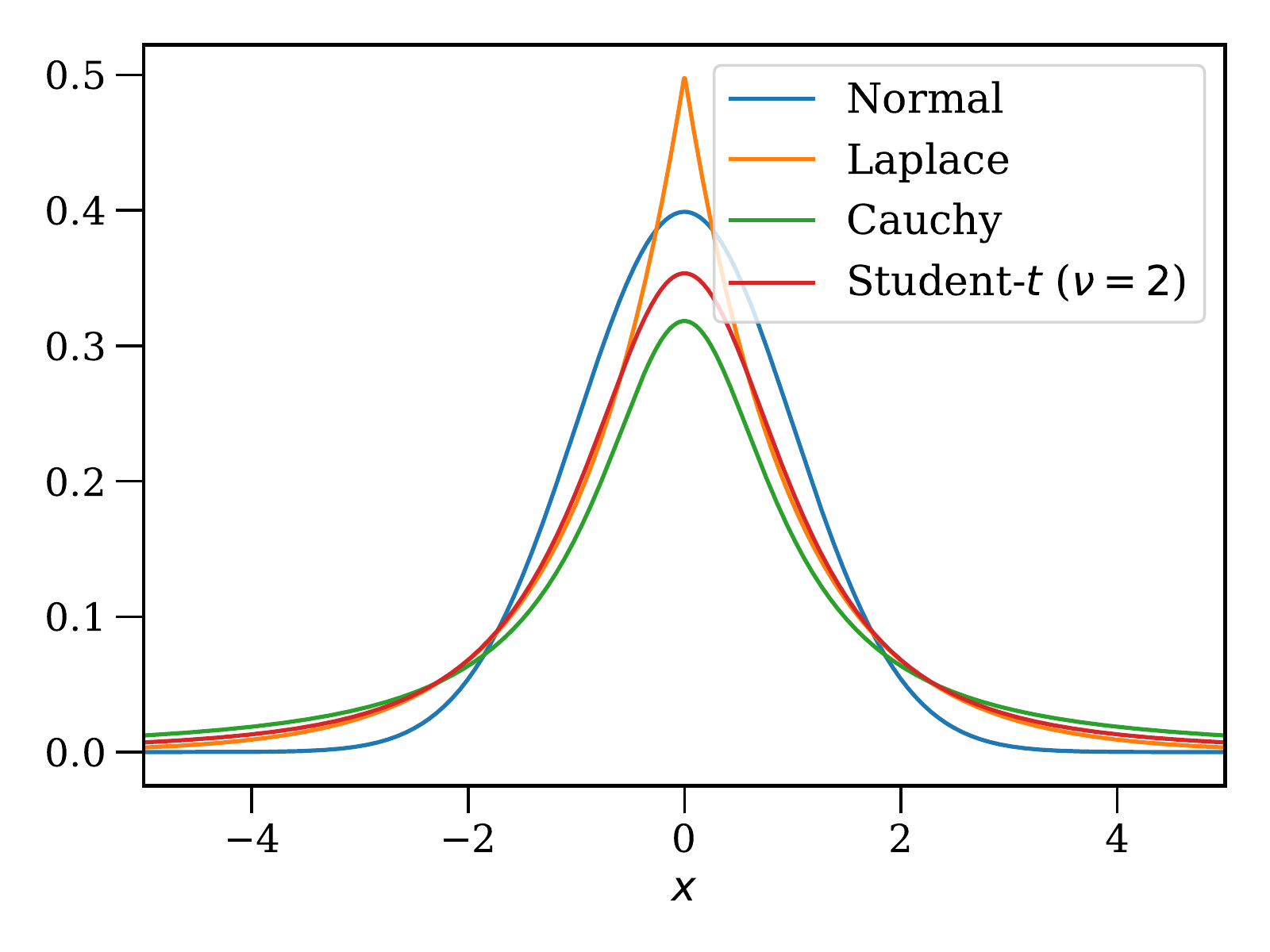}}\hspace{50pt}%
    \subfigure[Log scale.]{\includegraphics[width=0.3\linewidth]{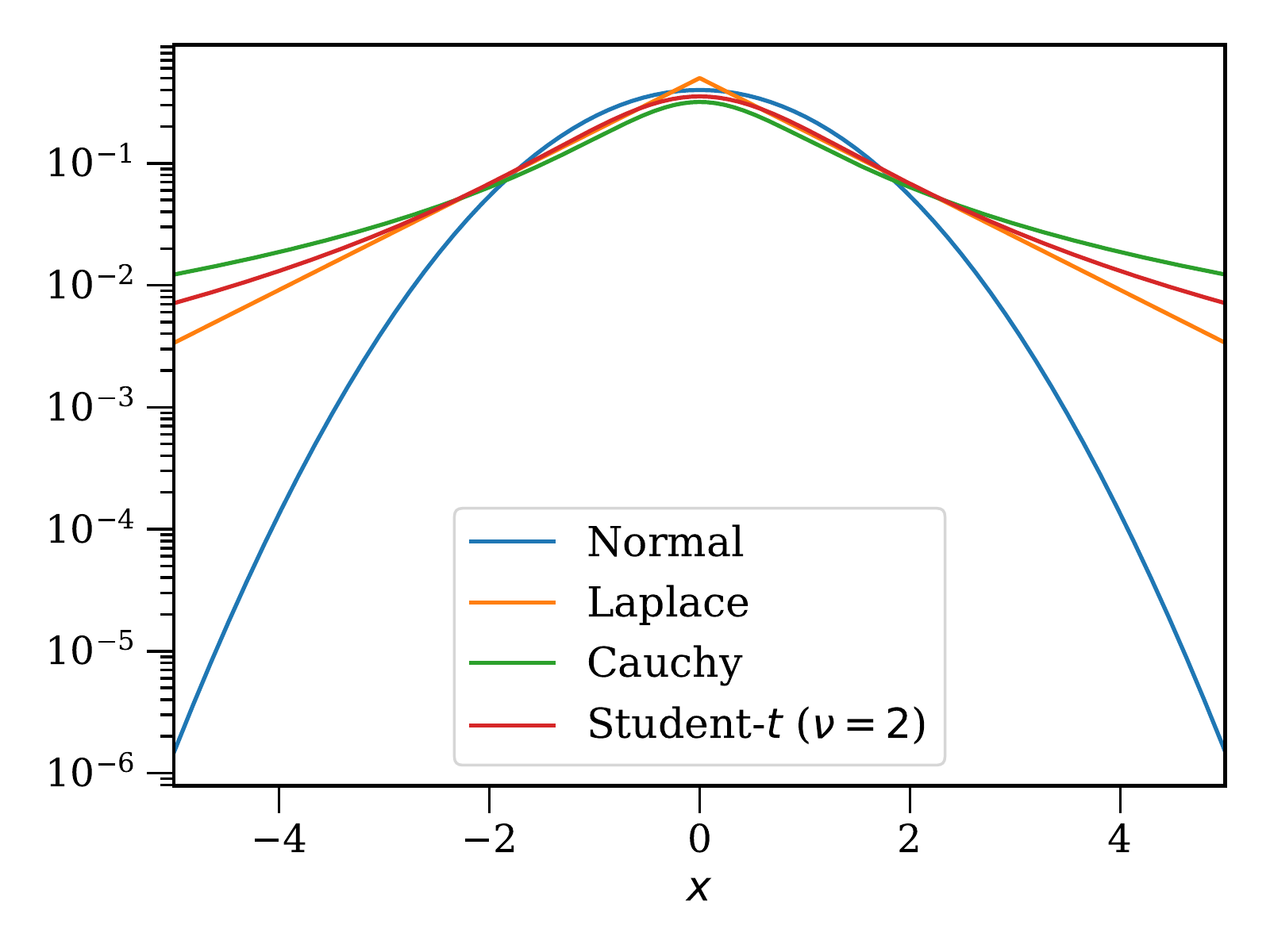}}
    \vspace{-10pt}
    \caption{Comparison of the Gaussian distribution and common heavy tailed distributions.}
    \label{fig:heavy_tailed_distributions}
\end{figure}

% \begin{figure}[H]
%     \centering
%     \includegraphics[width=0.3\linewidth]{}
%     \vspace{-10pt}
%     \caption{$\Gamma(x)$ in the relevant range for generalised normal noise, $x\in[0.1,5]$.}
%     \label{fig:gamma_function}
% \end{figure}

\begin{figure}[H]
    \centering
    % \subfigure[$\Gamma(x)$ for $x\in[0.1,5]$.]{\includegraphics[width=0.3\linewidth]{Chapter4/Figs/trend_gamma.pdf}\label{fig:gamma_function}}%
    \subfigure[Mean scaling factor from \eqref{eq:mean_scaling}.]{\includegraphics[width=0.3\linewidth]{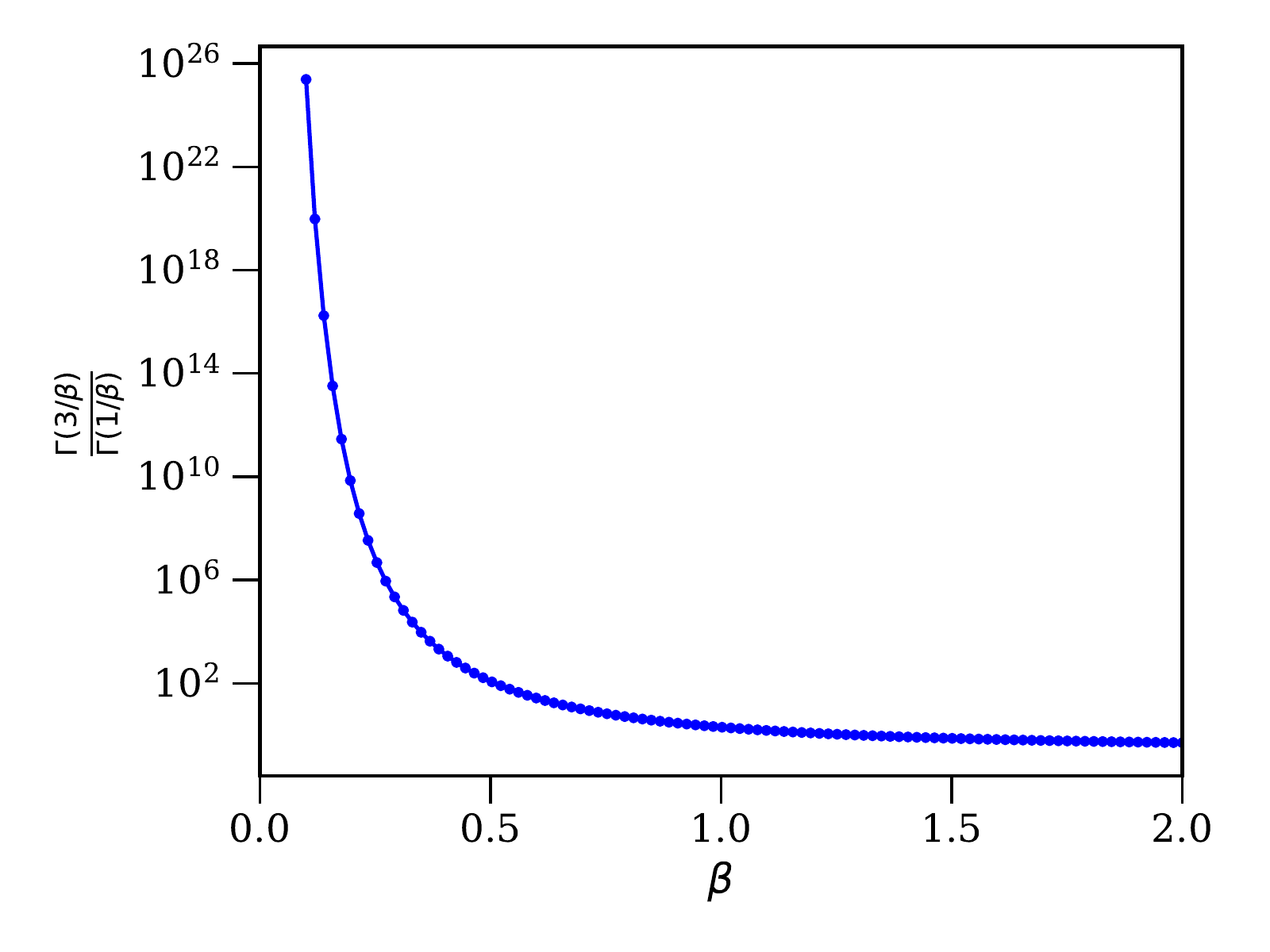}\label{fig:gamma_trends2}}%
    \subfigure[Variance scaling factor from \eqref{eq:var_scaling}.]{\includegraphics[width=0.3\linewidth]{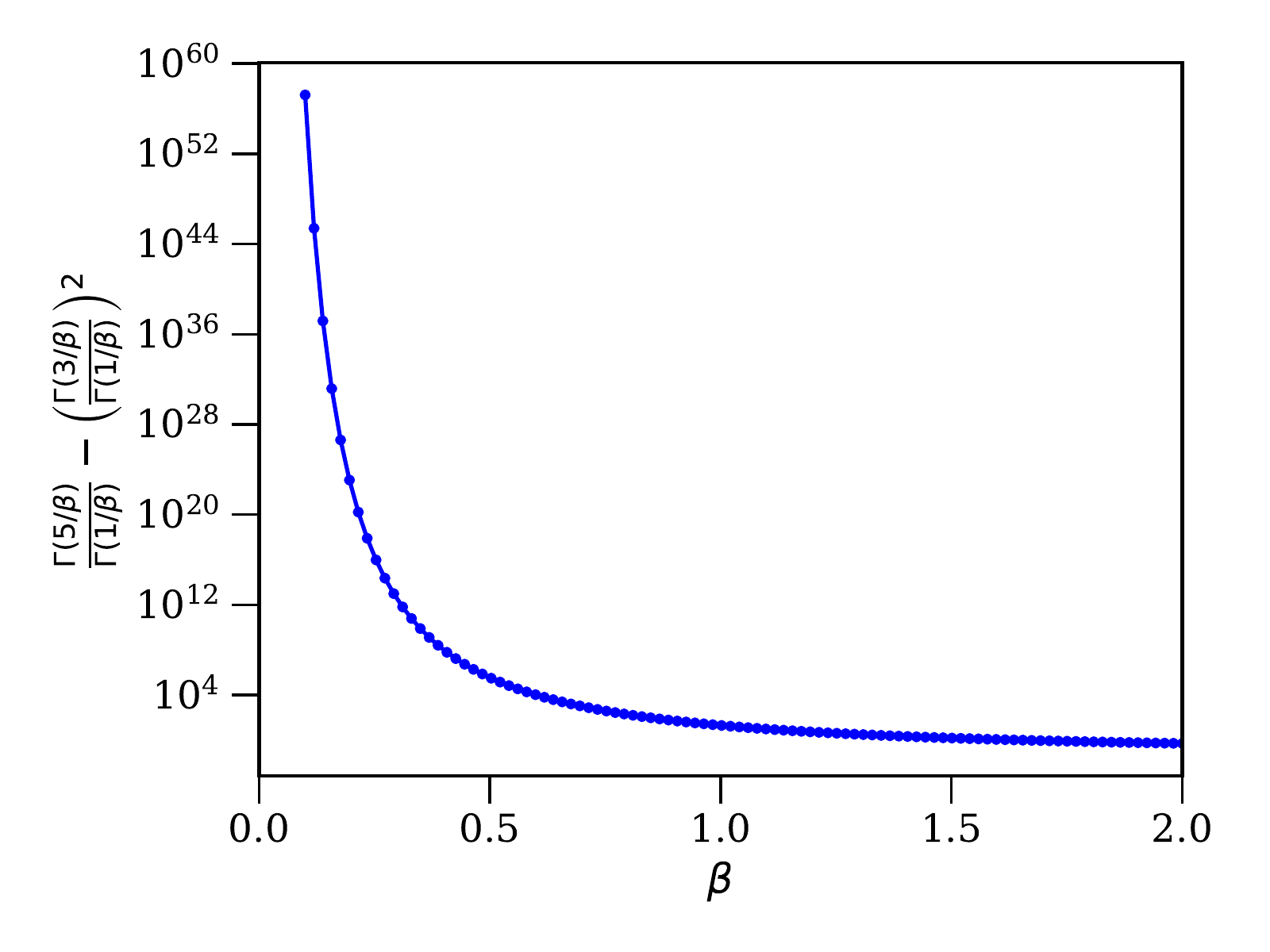}\label{fig:gamma_trends3}}
    \vspace{-10pt}
    \caption{Scaling factor trends against $\beta$.}
\end{figure}

\clearpage

\section{Extended results}\label{sec:2d_rez}
\subsection{Extended $2D$ example}
\begin{figure}[H]


    {\centering
    \subfigure[]{\includegraphics[width=0.45\linewidth]{Chapter4/Figs/1_data.png}} \hfill%
    \subfigure[]{\label{fig:dsm_2}\includegraphics[width=0.45\linewidth]{Chapter4/Figs/2_noised_data.png}}\vspace{-10pt}
    \subfigure[]{\label{fig:dsm_3}\includegraphics[width=0.45\linewidth]{Chapter4/Figs/3_loss_curve.png}} \hfill%
    \subfigure[]{\label{fig:dsm_4}\includegraphics[width=0.45\linewidth]{Chapter4/Figs/4_paths.png}}\vspace{-10pt}
    \subfigure[]{\label{fig:dsm_5}\includegraphics[width=0.45\linewidth]{Chapter4/Figs/5_gradients.png}} \hfill%
    \subfigure[]{\includegraphics[width=0.45\linewidth]{Chapter4/Figs/6_more_paths.png}}}\vspace{-10pt}
    \caption{DSM training and LD sampling. In \textbf{a}, $p(\V{x})$ is modelled as an additive mixture of ($k=2$) bivariate Gaussians with 10,000 samples per mode. A depth 3 MLP ($2\to16\to16\to2$, intermediate activations ReLU, batch size 256) is trained to estimate the score from samples noised by $q_{\sigma}(\Tilde{\V{x}}|\V{x})\sim\mathcal{N}(\V{x},\V{I})$. All training noise samples are shown as green in \textbf{b} and the training convergence is depicted in \textbf{c}. Then, in \textbf{d}, starting from $\hat{\V{x}}_{0}\sim\mathcal{U}(-6,6)\times\mathcal{U}(-6,6)$, 10 sampled particles are evolved to convergence using 1,000 steps of Langevin Dynamics with step size 0.1 and matching noise scale. The score estimates used during sampling are presented in \textbf{e}. Finally, the same sampling is repeated in \textbf{f} for 1,000 particles to demonstrate the decision boundary, its asymmetry (relevant for class imbalance), and the upper bound on approximation accuracy due to the underlying unit noise.}
    \label{fig:dsm example 2}
\end{figure}

\begin{figure}


    {\centering
    \subfigure[]{\includegraphics[width=0.45\linewidth]{Chapter4/Figs/2_noised_data_2level.png}} \hfill%
    \subfigure[]{\label{fig:dsm_ald_2}\includegraphics[width=0.45\linewidth]{Chapter4/Figs/4_paths_2level.png}}
    \subfigure[]{\label{fig:dsm_ald_3}\includegraphics[width=0.45\linewidth]{Chapter4/Figs/5_gradients_2level.png}} \hfill%
    \subfigure[]{\includegraphics[width=0.45\linewidth]{Chapter4/Figs/6_more_paths_2level.png}}}
    \caption{Multiple noise level DSM training and ALD sampling. The setup and figures are identical to Figure~\ref{fig:dsm example 2} except that two noise scales, $\sigma_{1}=1.0$ and $\sigma_{2}=0.25$, are used in training and sampling. The sequential use of decreasing noise levels in sampling can be seen in \textbf{c}. It is evident that ALD drastically improves the final distribution estimate due to the decrease in score estimate scale. It is also relevant to subsequent class imbalance problems that the sampling procedure is slightly asymmetric. For all models trained, class asymmetry is consistent across sampling runs, but not across DSM retraining, so is an artefact of the model.}
    \label{fig:dsm ald example 2}
\end{figure}

\begin{figure}


    {\centering
    \subfigure[]{\label{fig:dsm_ald_laplace_1}\includegraphics[width=0.45\linewidth]{Chapter4/Figs/2_noised_data_2level_laplace.png}}%
    \subfigure[]{\label{fig:dsm_ald_laplace_2}\includegraphics[width=0.45\linewidth]{Chapter4/Figs/3_loss_curve_2level_laplace.png}}
    \subfigure[]{\label{fig:dsm_ald_laplace_3}\includegraphics[width=0.45\linewidth]{Chapter4/Figs/4_paths_2level_laplace.png}}%
    \subfigure[]{\label{fig:dsm_ald_laplace_4}\includegraphics[width=0.45\linewidth]{Chapter4/Figs/5_gradients_2level_laplace.png}}
    \subfigure[]{\label{fig:dsm_ald_laplace_5}\includegraphics[width=0.45\linewidth]{Chapter4/Figs/6_more_paths_2level_laplace.png}}}
    \caption{Laplace DSM with ALD. The setup and figures are identical to Figure~\ref{fig:dsm ald example 2}, except that Laplace noise is used ($\beta=1$ in the general formulation). \textbf{a} depicts the diamond, rather than circular, noise structure of a diagonal bivariate Laplace distribution. \textbf{b} and \textbf{c} respectively demonstrate that ALD training and sampling converge even with Laplace (sub-Gaussian, piece-wise differentiable) diffusion, confirming Theorem~\ref{thm:piecewise differentiable}.}
    \label{fig:dsm ald laplace 2}
\end{figure}

\begin{figure}[ht]


    {\centering
    \subfigure[]{\label{fig:dsm_ald_10x_1}\includegraphics[width=0.45\linewidth]{Chapter4/Figs/2_noised_data_2level_10x.png}}%
    \subfigure[]{\label{fig:dsm_ald_10x_2}\includegraphics[width=0.45\linewidth]{Chapter4/Figs/4_paths_2level_10x.png}}
    \subfigure[]{\label{fig:dsm_ald_10x_3}\includegraphics[width=0.45\linewidth]{Chapter4/Figs/5_gradients_2level_10x.png}}%
    \subfigure[]{\label{fig:dsm_ald_10x_4}\includegraphics[width=0.45\linewidth]{Chapter4/Figs/6_more_paths_2level_10x.png}}}
    \caption{DSM with ALD mode collapse. The setup and figures are identical to Figure~\ref{fig:dsm ald example 2}, except that $p(\V{x})$ samples now have an imbalance of 10:1 between modes 1 (upper right) and 2 (lower left) respectively.}
    \label{fig:dsm ald 10x example 2}
\end{figure}

% \begin{figure}
%     \centering
%     \includegraphics[width=0.3\linewidth]{Chapter4/Figs/trend_var-constant.pdf}
%     \caption{The trend in the variance scaling factor \eqref{eq:var_scaling} against $\beta$.}
%     \label{fig:gamma_trends3}
% \end{figure}

% \section{Do \emph{not} have an appendix here}

% \textbf{\emph{Do not put content after the references.}}
% %
% Put anything that you might normally include after the references in a separate
% supplementary file.

% We recommend that you build supplementary material in a separate document.
% If you must create one PDF and cut it up, please be careful to use a tool that
% doesn't alter the margins, and that doesn't aggressively rewrite the PDF file.
% pdftk usually works fine. 

% \textbf{Please do not use Apple's preview to cut off supplementary material.} In
% previous years it has altered margins, and created headaches at the camera-ready
% stage. 
%%%%%%%%%%%%%%%%%%%%%%%%%%%%%%%%%%%%%%%%%%%%%%%%%%%%%%%%%%%%%%%%%%%%%%%%%%%%%%%
%%%%%%%%%%%%%%%%%%%%%%%%%%%%%%%%%%%%%%%%%%%%%%%%%%%%%%%%%%%%%%%%%%%%%%%%%%%%%%%

\clearpage

\subsection{High-dimensional class imbalance continued}\label{sec:imbalanced_experiments}

Figure~\ref{fig:sbm mnist 1/8 b=2} and \ref{fig:sbm mnist 1/8 b=1} present model generation results for a simplified version of the MNIST dataset. The data is limited to contain only the classes 1 and 8, which were chosen for their contrast in pixel space. The goal of Figure~\ref{fig:sbm mnist 1/8 b=2} was to demonstrate how Gaussian DSM SBMs perform poorly with ALD in the presence of asymmetric class representation, by inducing an imbalance between classes 1 and 8. However, mode collapse in Figure~\ref{fig:mnist18_b-2.0_image_grid_1000} occurred even without manipulating the class imbalance. Gaussian DSM suffering such issues in this minimal setting appears to contradict \citet{song2019generative}, where the motivation for combining DL and ALD was to overcome uneven mode weights. Moreover, it brings into question the cause of recent impressive generative results with SBMs, which may require the regularisation of many classes in the data to produce more general score estimates.
\begin{figure}[h]
    \centering
    \subfigure[Class ratio 1:1.]{\label{fig:mnist18_b-2.0_image_grid_1000}\includegraphics[width=0.25\linewidth]{Chapter4/Figs/mnist18_b-2.0_image_grid_1000.png}}
    \subfigure[Class ratio 2:1.]{\label{fig:mnist18_b-2.0_f-0.5_image_grid_1000}\includegraphics[width=0.25\linewidth]{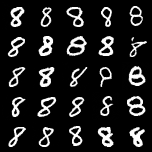}}
    \caption{Generated digits for a Gaussian DSM model trained for 20,000 steps on digits 1 and 8 from the MNIST dataset and sampled with 100 steps per level (s/l) of Gaussian ALD. No digits resembling a 1 are present for 25 samples, indicating a sampling process which induces (\textbf{a}) or exacerbates (\textbf{b}) the class imbalance.}
    \label{fig:sbm mnist 1/8 b=2}
\end{figure}

Figure~\ref{fig:sbm mnist 1/8 b=1} demonstrates intensive sampling results for HTDSM. Generated digits for a HTDSM model are trained in-line with Figure~\ref{fig:imbalanced} and sampled with varied diffusion type and steps per level (s/l). In Figure~\ref{fig:mnist18_b-1.0}, speckle is observed, whereas more sampling steps in Figure~\ref{fig:mnist18_b-1.0-10x again} leads to a more even class balance. The difference between Figures~\ref{fig:mnist18_b-1.0_sub-diff} and \ref{fig:mnist18_b-1.0_sub-diff-10x} confirms that sub-Gaussian diffusion can be used in high dimensions successfully, as long as the number of sampling steps is increased.
\begin{figure}[h]


    \centering
    \subfigure[ALD, 100 s/l.]{\label{fig:mnist18_b-1.0}\includegraphics[width=0.24\linewidth]{Chapter4/Figs/mnist18_b-1.0.png}}
    \subfigure[ALD, 1,000 s/l.]{\label{fig:mnist18_b-1.0-10x again}\includegraphics[width=0.24\linewidth]{Chapter4/Figs/mnist18_b-1.0-10x.png}}
    \subfigure[Laplace ALD, 100 s/l.]{\label{fig:mnist18_b-1.0_sub-diff}\includegraphics[width=0.24\linewidth]{Chapter4/Figs/mnist18_b-1.0_sub-diff.png}}
    \subfigure[Laplace ALD, 1,000 s/l.]{\label{fig:mnist18_b-1.0_sub-diff-10x}\includegraphics[width=0.24\linewidth]{Chapter4/Figs/mnist18_b-1.0_sub-diff-10x.png}}
    \caption{Samples for a HTDSM model sampled with varied diffusion type and steps per level (s/l).}
    \label{fig:sbm mnist 1/8 b=1}
\end{figure}

\clearpage

\subsection{Full tabular results}\label{sec:tab_results}

\begin{table*}[ht]
    \centering
    \caption{All DGM metrics for unconditional samples from a model trained with HTDSM, and sampled from using Gaussian ALD, for different values of $\beta$ on the MNIST dataset. Arrows indicate whether higher ($\uparrow$) or lower ($\downarrow$) metric values are better.}
    \begin{tabular}{lrrrr}
        \toprule
         & $\beta=1.0$ & $\beta=1.5$ & $\beta=2.0$ & $\beta=2.5$ \\
        \midrule
        Precision $\uparrow$ & \textbf{0.9417} & 0.9244 & 0.912 & 0.894 \\
        Recall $\uparrow$ & 0.8634 & 0.9023 & \textbf{0.936} & 0.905 \\
        Density $\uparrow$ & \textbf{0.9869} & 0.9210 & 0.867 & 0.849 \\
        Coverage $\uparrow$ & \textbf{0.9112} & 0.7816 & 0.780 & 0.733 \\
        \midrule
        IS $\uparrow$ & $2.020\pm0.018$ & $2.084\pm0.038$ & $2.037\pm0.037$ & $\V{2.109\pm0.032}$ \\
        KID $\downarrow$ & $0.075\pm0.002$ & $0.020\pm0.001$ & $0.016\pm0.002$ & $\V{0.008\pm0.001}$ \\
        %FID $\downarrow$ & 57.590 & 17.701 & 19.399 & \textbf{10.045} \\
        \bottomrule
    \end{tabular}
    \label{tab:sbm mnist all}
\end{table*}

\begin{table*}[ht]
    \centering
    \caption{All DGM metrics for unconditional samples from a model trained with HTDSM, and sampled from using Gaussian ALD, for different values of $\beta$ on the Fashion-MNIST dataset.}
    \begin{tabular}{lrrrr}
        \toprule
         & $\beta=1.0$ & $\beta=1.5$ & $\beta=2.0$ & $\beta=2.5$ \\
        \midrule
        Precision $\uparrow$ & 0.1244 & 0.922 & 0.884 & \textbf{0.9230} \\
        Recall $\uparrow$ & \textbf{0.9592} & 0.765 & 0.787 & 0.7754 \\
        Density $\uparrow$ & 0.0355 & 1.541 & 1.406 & \textbf{1.600} \\
        Coverage $\uparrow$ & 0.0351 & \textbf{0.651} & 0.587 & 0.5962 \\
        \midrule
        IS $\uparrow$ & $3.120\pm0.084$ & $\V{3.790\pm0.090}$ & $3.646\pm0.109$ & $3.591\pm0.120$ \\
        KID $\downarrow$ & $0.146\pm0.002$ & $\V{0.020\pm0.001}$ & $0.023\pm0.002$ & $0.027\pm0.002$ \\
       % FID $\downarrow$ & 146.803 & \textbf{33.776} & 37.919 & 43.866 \\
        \bottomrule
    \end{tabular}
    \label{tab:sbm fashion all}
\end{table*}

\clearpage

\subsection{Samples}\label{sec:samples}

\begin{figure}[ht]
    \centering
    \subfigure[$\beta=1.5$.]{\label{fig:mnist_b-1.5_image_grid_160000}\includegraphics[width=0.25\linewidth]{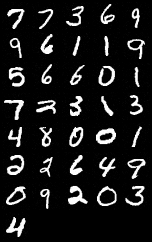}} \hfill%
    \subfigure[$\beta=2.0$.]{\label{fig:mnist_b-2.0_image_grid_200000}\includegraphics[width=0.25\linewidth]{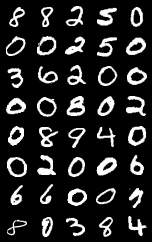}} \hfill%
    \subfigure[$\beta=2.5$]{\label{fig:mnist_b-2.5_image_grid_200000}\includegraphics[width=0.25\linewidth]{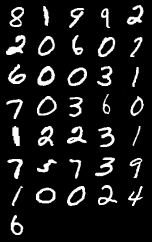}}
    \caption{Unconditional samples from a model trained with HTDSM, and sampled from using Gaussian ALD, for different values of $\beta$ on the MNIST dataset.}
    \label{fig:sbm mnist unconditional}
\end{figure}

\begin{figure}[h!]
    \centering
    \subfigure[$\beta=1.5$.]{\label{fig:fashion_b-1.5_image_grid_200000}\includegraphics[width=0.25\linewidth]{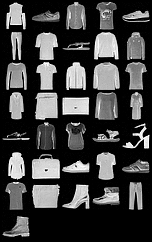}} \hfill%
    \subfigure[$\beta=2.0$.]{\label{fig:fashion_b-2.0_image_grid_200000}\includegraphics[width=0.25\linewidth]{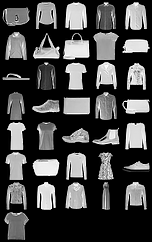}} \hfill%
    \subfigure[$\beta=2.5$]{\label{fig:fashion_b-2.5_image_grid_200000}\includegraphics[width=0.25\linewidth]{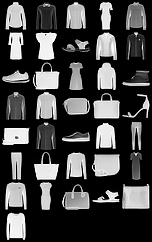}}
    \caption{Unconditional samples from a model trained with HTDSM, and sampled from using Gaussian ALD, for different values of $\beta$ on the Fashion-MNIST dataset.}
    \label{fig:sbm fashion unconditional}
\end{figure}

\begin{figure}[h!]
    \centering
    \subfigure[$\beta=1.5$.]{\label{fig:cifar_b-1.5_image_grid_200000}\includegraphics[width=0.3\linewidth]{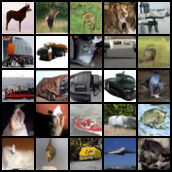}} \hfill%
    \subfigure[$\beta=2.0$.]{\label{fig:cifar_b-2.0_image_grid_200000}\includegraphics[width=0.3\linewidth]{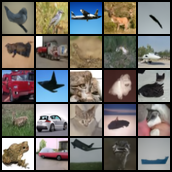}} \hfill%
    \subfigure[$\beta=2.5$]{\label{fig:cifar_b-2.5_image_grid_200000}\includegraphics[width=0.3\linewidth]{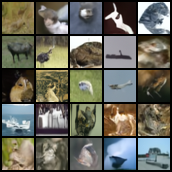}}
    \caption{Unconditional samples from a model trained with HTDSM, and sampled from using Gaussian ALD, for different values of $\beta$ on the Cifar-10 dataset.}
    \label{fig:sbm fashion unconditional}
\end{figure}

\begin{figure}[h!]
    \centering
    \subfigure[$\beta=1.5$.]{\label{fig:celeba_b-1.5_image_grid_200000}\includegraphics[width=0.3\linewidth]{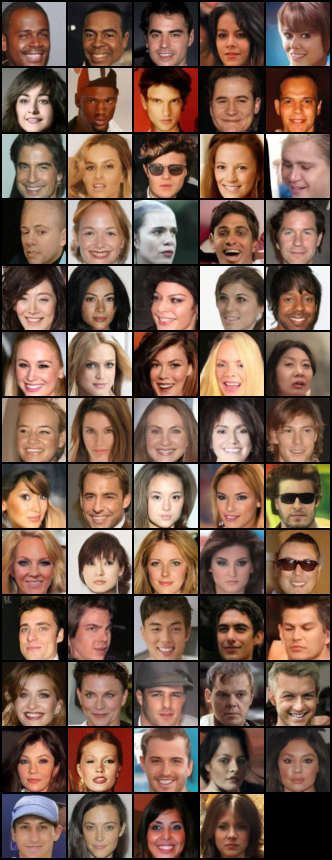}} \hfill%
    \subfigure[$\beta=2.0$.]{\label{fig:celeba_b-2.0_image_grid_200000}\includegraphics[width=0.3\linewidth]{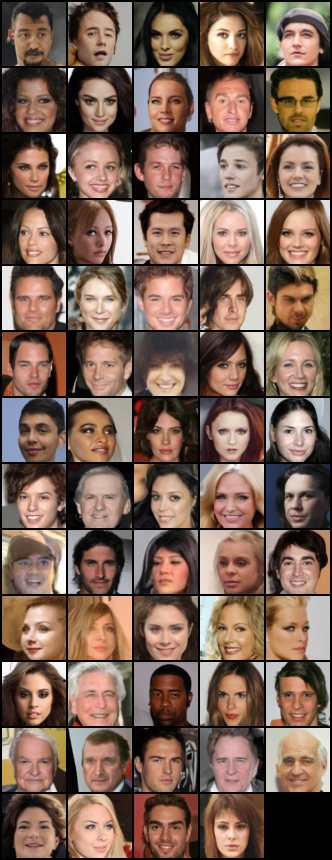}} \hfill%
    \subfigure[$\beta=2.5$]{\label{fig:celeba_b-2.5_image_grid_200000}\includegraphics[width=0.3\linewidth]{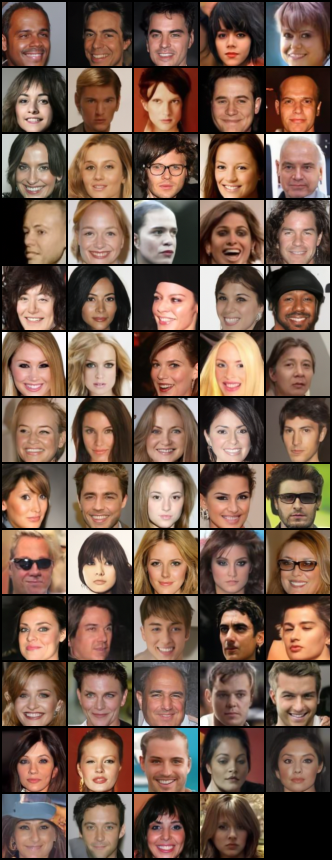}}
    \caption{Unconditional samples from a model trained with HTDSM, and sampled from using Gaussian ALD, for different values of $\beta$ on the CelebA dataset.}
    \label{fig:sbm fashion unconditional}
\end{figure}

\end{document}